\newif\iffinal\finaltrue
\setlist[enumerate]{itemsep=0ex, topsep=2ex, partopsep=2pt, parsep=0pt}
\definecolor{actioncolor}{HTML}{1b9e77}
\definecolor{incoming}{HTML}{7171D9}
\definecolor{outgoing}{HTML}{e7298a}
\definecolor{targetvarcolor}{HTML}{CC8F00}
\definecolor{postrecoursecol}{gray}{0.3}
\definecolor{classifiercolor}{gray}{0.0}
\definecolor{indICR}{HTML}{29B390}
\definecolor{subICR}{HTML}{5CBACC}
\definecolor{indCR}{HTML}{BD62CC}
\definecolor{subCR}{HTML}{B33970}
\definecolor{CE}{HTML}{B3AB14}
\tikzset{
  colornode/.style args={#1}{
    circle,
    very thick,
    draw=none,
    text=#1,
    fill=#1!30!white,
    minimum size=0.7cm,
    inner sep=0pt
  },
  colornode/.default = black,
  colorededge/.style = {->, thick, draw=#1},
  openedge/.style = {very thick},
  decisive/.style = {ultra thick},
  observed/.style = {draw=#1},
  colorededge/.default = gray,
  causal/.style={
      circle,
      draw=black, 
      very thick, 
      inner sep=0pt,
      minimum size=8mm
  },
  association/.style={
    dashed,
    thick
  }
}
\newcommand\independent{\protect\mathpalette{\protect\independenT}{\perp}}
\def\independenT#1#2{\mathrel{\rlap{$#1#2$}\mkern2mu{#1\:\:\!\!#2}}}
\newcommand\dsep{\protect\mathpalette{\protect\dsepT}{\perp_{\mathcal{G}}\;}}
\def\dsepT#1{\mathrel{\!\!\!\rlap{$#1$}\mkern2mu{#1}}}
\newcommand{\astarrowast}{*\mkern-9mu-\mkern-9mu-\mkern-9mu*\;}
\DeclarePairedDelimiterX{\infdivx}[2]{(}{)}{%
  #1\;\delimsize\|\;#2%
}
\newcommand{\Do}[1]{\mathrm{do}(#1)}
\newcommand{\PA}{\mathrm{pa}}
\newcommand{\Ch}{\mathrm{ch}}
\newcommand{\An}{\mathrm{an}}
\newcommand{\De}{\mathrm{de}}
\newcommand{\Nd}{\mathrm{nd}}
\newcommand{\eY}{U_Y}
\newcommand{\eE}{U_E}
\newcommand{\CEmap}[1]{a_{#1}^{\mathrm{CE}}}
\newcommand{\CRmap}[1]{a_{#1}^{\mathrm{CR}}}
\newcommand{\IRmap}[1]{a_{#1}^{\mathrm{ICR}}}
\newcommand{\Classifier}{\widehat{L}}
\newcommand{\MClassifier}{\widehat{L}^m}
\newcommand{\Lp}{\mathcolor{targetvarcolor}{L^p}}
\newcommand{\Yp}{\mathcolor{targetvarcolor}{Y^p}}
\newcommand{\Act}{\mathcolor{actioncolor}{A}}
\newcommand{\ExOneCause}{X_C} %
\newcommand{\ExOneEffect}{X_E} %
\newcommand{\ExTwoCause}{X_C} %
\newcommand{\ExTwoEffect}{X_E} %
    \renewcommand{\cnote}[1]{\textcolor{blue}{[\textbf{CM}:]}}
    \renewcommand{\cnote}[1]{\textcolor{blue}{[\textbf{CM}:#1]}}
\newcommand{\todog}[2][]{\todo[size=\scriptsize,color=blue!20!white,#1]{G: #2}}
\title{Performative Validity of Recourse Explanations}
\author{%
    Gunnar K\"onig$^{1,2}$, Hidde Fokkema$^{3}$, Timo Freiesleben$^{4,5}$,\\
    \textbf{Celestine Mendler-D\"unner$^{1,6}$, Ulrike von Luxburg$^{1,2}$}\\[0.5ex]
    $^1$T\"ubingen AI Center, 
    $^2$University of T\"ubingen\\
    $^3$Korteweg-de Vries Institute for Mathematics, University of Amsterdam\\
    $^4$LMU Munich,
    $^5$Munich Center for Machine Learning (MCML)\\
    $^6$ELLIS Institute T\"ubingen, Max Planck Institute for Intelligent Systems, Tübingen\\[0.5ex]
    \texttt{g.koenig.edu@pm.me}%
}
\begin{document}

\maketitle

\begin{abstract}

When applicants get rejected by a high-stakes algorithmic decision system, recourse explanations provide actionable suggestions for applicants on how to change their input features to get a positive evaluation. A crucial yet overlooked phenomenon is that recourse explanations are \emph{performative}:
When many applicants act according to their recommendations, their collective behavior may shift the data distribution and, once the model is refitted, also the decision boundary.
Consequently, the recourse algorithm may render its own recommendations \emph{invalid}, such that applicants who make the effort of implementing their recommendations may be rejected again when they reapply.
In this work, we formally characterize the conditions under which recourse explanations remain valid under their own performative effects. In particular, we prove that recourse actions may become invalid if they are influenced by or if they intervene on non-causal variables. 
Based on this analysis, we caution against the use of standard counterfactual explanation and causal recourse methods, and instead advocate for recourse methods that recommend actions exclusively on causal variables.
\end{abstract}

\section{Introduction}
Modern machine learning systems can significantly impact people's lives. Automated systems may determine whether someone receives a loan, is admitted to a graduate program, or gets invited to a job interview. In such high-stakes scenarios, applicants who receive an unfavorable decision -- such as being rejected for a loan or a job interview -- often seek guidance on how to improve their chances in the future. To address this need, recourse explanations are employed: they inform applicants of concrete changes they could implement to achieve the desired outcome from the system \citep{karimi2022survey}. For example, rejected loan applicants might be recommended to reduce their credit card utilization, and rejected job applicants might be advised to obtain a master's degree and reapply.

Ideally, recourse explanations are \textit{valid} -- meaning that if applicants follow the recommended actions (e.g., obtaining a master's degree), they will indeed receive the desired outcome when reevaluated by the machine learning model (e.g., be invited to the interview). In practice, however, deployed prediction models are rarely static. Model owners routinely monitor for distribution shifts and retrain their models to maintain performance. Recent work has shown that model updates can undermine the validity of recourse \citep{rawal2020algorithmic,upadhyay2021towards,nguyen2023distributionally}. These studies examine how different types of distribution shift -- such as data corrections, temporal drift, or geospatial variability -- can affect recourse validity, and propose methods for generating recommendations that remain robust under such shifts.

What these works overlook, however, is that by recommending certain actions, recourse itself can cause a distribution shift. We refer to this phenomenon as the \textit{performativity of recourse explanations}. 
As a running example, consider a company using a machine learning model to screen candidates for interviews for a software engineering job (see Figure \ref{fig:example-intro}). Based on historical data, the model learns that applicants invited to interviews often hold a master’s degree in software engineering or, alternatively, show regular activity on GitHub. Two recourse actions might therefore be recommended to rejected applicants: 1. Earn a master's degree in software engineering. 2. Increase GitHub activity by making regular commits.
Both actions have performative effects: when implemented, they induce a distributional shift.
However, they differ critically in their causal impact on qualification: 
While completing a master's degree imparts substantial knowledge in software engineering, 
\begin{wrapfigure}{r}{0.45\textwidth}
    \centering
    \vspace{-1em} %
    \begin{tikzpicture}[thick, scale=0.45, every node/.style={scale=0.7, line width=0.25mm, black, fill=white, minimum height=18pt}]
        \usetikzlibrary{shapes}
        \node[draw=lightgray, ellipse, rectangle, rounded corners] (x1) at (5.5, 1.35) {SE master's degree};
        \node[draw=darkgray, rectangle, rounded corners, thick] (y) at (3,0) {$Y$: qualified SE};
        \node[draw=classifiercolor, rectangle, rounded corners, thick] (uy) at (8,0) {$\Classifier$: gets interviewed};
        \node[draw=lightgray, ellipse, rectangle, rounded corners] (x2) at (5.5,-1.35) {GitHub activity};

        \draw[->, gray] (x1) to [out=350,in=90] (uy);
        \draw[->, gray] (x2) to [out=10,in=270] (uy);
        \draw[->, gray] (x1) to [out=190,in=90] (y);
        \draw[->, gray] (y) to [out=270,in=170] (x2);

    \end{tikzpicture}
    \caption{Simplified causal graph of features used to predict if applicants should be invited for a software engineering (SE) job interview.
}
\label{fig:example-intro}
\vspace{-1em} %
\end{wrapfigure}
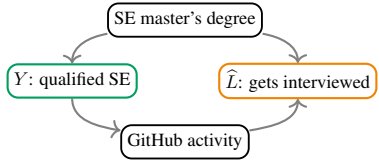
GitHub activity is merely a correlate of coding experience.
Many tools exist that automatically generate daily commits, inflating an applicant’s profile without improving
their actual skills.
If many applicants adopt such tools, the model will learn that GitHub activity no longer correlates with programming ability. 
As a result, future versions of the model may ignore the feature entirely---invalidating the very recourse that applicants were advised to pursue.

In this work, we investigate whether recourse explanations remain valid under their own performative effects. Unrewarded recourse can impose significant burdens; thus, this question is particularly pressing in light of the social impact of recourse practices. When recourse fails, applicants may waste time and resources pursuing actions that ultimately do not improve their outcomes \citep{venkatasubramanian2020philosophical}.

\paragraph{Contributions.} 
In Section~\ref{sec:core-concepts}, we introduce the two core concepts of our analysis. First, we formalize the \textit{performative effect of recourse explanations} as causal interventions in response to recourse recommendations on acted-upon features in the data-generating process. Second, we introduce the notion of \textit{performative validity}: a recourse explanation is \textit{performatively valid} if, after applicants act upon it, they receive the desired outcome under the optimal post-recourse prediction model. 
In Section~\ref{sec:characterization}, we prove that recourse becomes performatively invalid if the implemented action carries predictive information about the post-recourse outcome. Building on this insight, we identify two mechanisms that can induce such invalidity: actions that are influenced by effect variables (Section~\ref{sec:func-crit}), and actions that intervene on effect variables (Section~\ref{sec:noncausal-interventions}).
Then, in Section~\ref{sec:simulation}, we empirically study the performative validity of three recourse methods: counterfactual explanations \citep{wachter2017counterfactual}, causal recourse \citep{karimi2021algorithmic}, and improvement-focused causal recourse \citep{konig2023improvement}. Our findings show that counterfactual explanations and causal recourse, by targeting non-causal variables, often result in performatively invalid recourse! In contrast, improvement-focused causal recourse remains valid across a wide range of data-generating processes.

\section{Related Work}
\label{sec:related work}

\paragraph{Robust recourse:} We refer to \cite{karimi2022survey} for a comprehensive review on the literature of algorithmic recourse, and \cite{mishra2021survey} for an overview on robust recourse. Most relevant for our work are studies that consider the validity of recourse when the underlying predictive model changes over time. 
\cite{rawal2020algorithmic} demonstrated
empirically that recourse recommendations become invalid in the face
of model shifts resulting from natural dataset shifts, including temporal changes, or geospatial variation. This has motivated the design of recourse that is robust to distributional shifts \citep{upadhyay2021towards,nguyen2023distributionally, jiang2024provably} and temporal factors~\citep{de2024time}. Similarly, \cite{pawelczyk2020counterfactual, black2021consistent, konig2023improvement} find that natural variations in model retraining--even when performed on the same data--can render previously recommended recourse invalid.  In response, approaches have been designed that optimize for robustness to model multiplicity, hyperparameter choices, and stochasticity, among others \citep{dutta2022tree,hamman2023robust,hamman2024robust}. 
We also focus on the validation of recourse under distributional shift. However, our approach is novel in examining performative distribution shifts---that is, endogenous shifts that are induced by the very provision of recourse itself.

\paragraph{Performativity:} Predictive models may influence the very data they are later evaluated on---a phenomenon known as performativity~\citep{mackenzie03millo}. Implications for risk minimization have been studied under the umbrella of performative prediction \citep{perdomo2020performative}. The framework formalizes two foundational concepts: \textit{performative optimality} and \textit{performative stability}. A model is performatively optimal if it is the best response to the data distribution induced by its predecessor. It is performatively stable if it remains optimal even after the distribution shift it itself induces. 
Building on these ideas, subsequent work has proposed algorithms that yield performatively optimal or stable models;
We refer to \cite{hardt2025performative} for a comprehensive review of related work on performative prediction.
Our work extends this line of research by developing a conceptual framework for performative recourse, analogous to performative prediction. Recourse that induces no change in $P(Y\mid X)$ corresponds to a natural analogue of a performatively stable point. We generalize this notion to recourse explanations that are rewarded even if the optimal model changes in response to the recourse-induced distribution shift.

\paragraph{Strategic classification: }
Strategic classification is a specific form of performativity in which agents deliberately manipulate their inputs to receive a favorable prediction from a model \citep{hardt2016strategic}. 
A growing body of work investigates how to build models to guard against such strategic manipulations \citep{levanon2021strategic,chen2020learning,zrnic2021leads,chen2023learning,haghtalab2021maximizing}. %
Thereby, a causal perspective has proven especially fruitful: 
In particular, the literature distinguishes between two types of user actions: gaming and improvement \citep{kleinberg2020classifiers,miller2020strategic,ghalme2021strategic,tsirtsis2024optimal,efthymiou2025incentivizing}. 
Gaming refers to actions on non-causal variables that change the model prediction $\hat{Y}$ without altering the true target $Y$, while improvement refers to actions on causal variables that affect $Y$ itself. 
While improvement affects only the marginal distribution $P(X)$, gaming may also alter the conditional distribution $P(Y\mid X)$ \citep{pfister2021stabilizing,konig2021causal,horowitz2023causal}. 
We find that standard recourse methods---since they can inadvertently encourage users to game the system \citep{konig2023improvement}---commonly induce shifts similar to those described in \citep{pfister2021stabilizing,horowitz2023causal}. By modelling the applicant's action as dependent on their own characteristics, we identify an additional source of performative invalidity: Recourse actions may leak otherwise unknown information about the applicant into the post-recourse distribution---even if recourse avoids gaming. As a consequence, further assumptions about the data-generating process are required to guarantee performative validity.

\section{Preliminaries}
\label{sec:not_back}
We consider a supervised learning setting where the goal is to predict some target variable $Y$, such as an applicant's qualification as a software engineer. The prediction is based on a vector of observed features $X = (X_1, \dots, X_p)$ in some space $\mathcal{X}$---for instance, university degrees or GitHub activity. Companies may only want to invite applicants above a certain qualification threshold. To formalize this, we binarize the target variable $Y$ using a threshold, formally $L := \ind[Y \ge 0] \in \{0, 1\}$. 
We denote the conditional probability of a positive label given features as $h(x) = P(L = 1 \mid X = x)$, and the corresponding binary classifier as $\Classifier(x) = \ind[h(x) \ge t_c]$, with decision threshold $t_c \in [0, 1]$.

\paragraph{Causal model.}
Algorithmic recourse concerns causal interventions that applicants perform in the world \citep{karimi2021algorithmic,konig2023improvement}. To estimate these effects, a causal model is required.
Throughout this work, we assume that the data-generating process can be modeled using an acyclic
Structural Causal Model (SCM) $\mathbb{M} = \langle \mathbb{X}, \mathbb{U}, f \rangle$ \citep{pearl2009causality}: $\mathbb{X}$ includes all variables in the system, including the features $X_i$ and the target $Y$, $\mathbb{U}$ consists of independent noise variables $U_i$ capturing unobserved causal influences (for example, coding enthusiasm underlying GitHub activity), and $f$ is a set of structural equations that specify how each variable in $\mathbb{X}$ is generated from its causal parents and its corresponding noise.
SCMs enable us to analyze the effects of interventions and counterfactual scenarios \citep{pearl2009causality}. For example, completing a master's degree can be modeled as an intervention. Formally, an intervention $a$ on a set of features $I_a$ is represented using Pearl's \textit{do}-operator: $\Do{a} = \Do{\{X_i = \theta_i\}_{i \in I_a}} = \Do{X_I = \theta_{I_a}}$.

Each acyclic SCM induces a Directed Acyclic Graph (DAG) $\mathcal{G}$ that visualizes the causal relationships between the features. An example of such a graph can be found in Figure~\ref{fig:example-intro}. For a variable $Z \in \mathbb{X}$, we denote its \emph{direct causes} as $\PA(Z)$ (parents), its \emph{direct effects} as $\Ch(Z)$ (children), its full set of direct and indirect causes as $\An(Z)$ (ancestors), its full set of direct and indirect effects as $\De(Z)$ (descendants).
For example, in Figure \ref{fig:example-intro}, master's degree is a direct cause of qualification and thus a parent in the graph, and GitHub activity is a direct effect of qualification and hence a child in the graph. We abbreviate the direct effects of the target $Y$ as $E:=\Ch(Y)$, its causes as $C:=\An(Y)$, and the direct parents of the direct effects as $S\coloneqq\PA(X_E)$, also referred to as the spouses. 

\paragraph{Recourse methods.}
Over the course of this paper, we will refer to three conceptually different recourse methods: Counterfactual Explanations (CE) \citep{wachter2017counterfactual}, Causal Recourse (CR) \citep{karimi2020algorithmic, karimi2021algorithmic}, and Improvement-focused Causal Recourse (ICR) \citep{konig2021causal}. Let $x$ be an applicant's current feature vector and $t_r \in [0,1]$ the targeted success probability for the recourse recommendation. Further, let $c$ be a cost function that captures the effort required to implement the recommendation. The cost function takes two arguments: the applicant's observation $x$ and the suggested change, which, depending on the method, is either a causal intervention or a new datapoint $x'$. The methods are defined as follows:

\begin{itemize}[leftmargin=4em]
     \item[\textbf{(CE)}] A Counterfactual Explanations is defined as 
         \begin{align*}
             \CEmap{}(x) = \argmin_{x'}  c(x, x') \quad s.t. \quad \textcolor{classifiercolor}{\Classifier(x') = 1}.
         \end{align*}
         CEs identify the closest input $x'$ to the current instance $x$ that flips the predicted outcome. They are widely used and many extensions have been proposed \citep{ustun2019actionable, karimi2020model, dandl2020multi}.
    \item[\textbf{(CR)}] Causal Recourse is defined as 
    \begin{align*}
    \CRmap{}(x) = \argmin_{a} c(x, a) \quad s.t. \quad P(\textcolor{classifiercolor}{\Classifier(X^{p}) = 1} \mid  x, \Do{a})\ge t_r,
    \end{align*}
    where $X^p$ denotes the individual's observation after implementing the action $\Do{a}$.
    Intuitively, CR identifies the least costly action that leads to a positive prediction with high probability. Unlike CEs, CR takes the causal relationships between features into account.
    \item[\textbf{(ICR)}] Improvement-focused Causal Recourse is defined as 
    \begin{align*}
        \IRmap{}(x) = \argmin_{a} c(x, a) \quad s.t. \quad P(\Lp = 1 \mid x, \Do{a}) \ge t_r, 
    \end{align*}
    where $\Lp$ is the applicant's label after implementing the action.
    Intuitively, ICR  recommends the least costly action that leads to a positive outcome with high probability. 
    That is, while CE and CR directly focus on changing the prediction, ICR guides the applicant to become qualified. 
    When the decision model is accurate, ICR reverts the decision with high probability, as shown in Proposition~1 of \citet{konig2023improvement}.
\end{itemize}

The approaches differ fundamentally in the types of interventions they suggest: 
Where CE and CR may \emph{suggest interventions on effects} that change the prediction without altering the applicant's qualification, ICR exclusively \emph{intervenes on causes} and changes the qualification.

We note that for CR and ICR, two versions exist: The individualized version (\textbf{ind. CR} or \textbf{ind. ICR})
which is based on individualized causal effect estimates but requires knowledge of the SCM. And the subpopulation-based version (\textbf{sub. CR} or \textbf{ind. ICR}) that only requires the causal graph but resorts to less accurate average causal effect estimates (details in Appendix \ref{app:ind-vs-sub}). All theoretical results hold for both versions.

\section{Performativity of Recourse Explanations}
\label{sec:core-concepts}

When rejected individuals implement recourse, they modify their characteristics and reapply. As such, recourse itself can shift the applicant distribution and, once the model is updated, also the decision boundary. 
We refer to this phenomenon as the \emph{performativity of recourse}.

\paragraph{Distribution Shift and Model Retraining.}{
  To formalize the shift caused by recourse, we distinguish between the \emph{pre-recourse} and \emph{post-recourse} states of an applicant, denoting the post-recourse state with the superscript $p$.
Assuming that an $i.i.d.$ sample of rejected individuals implements recourse and reapplies, the new applicant distribution can be written as the mixture 
\begin{equation}
\label{eq:updated-model}
P(L^m=l, X^m=x) := \alpha P(L=l, X=x) + (1 - \alpha) P(\Lp=l, X^{p}=x\mid \Classifier=0),
\end{equation}
where superscript $m$ denotes mixture variables, $P(L,X)$ is the pre-recourse applicant distribution, $P(\Lp, X^p \mid \Classifier=0)$ the distribution of previously rejected applicants after implementing recourse, and $\alpha \in (0, 1)$ the mixing weight that determines the proportion of pre- versus post-recourse applicants.
The optimal predictor for this updated applicant distribution becomes:
\[\MClassifier(x) = \ind[P(L^m=1|X^m=x) \geq t_c].\]
}

\subsection{Performative validity}

For individuals implementing recourse, it is fundamental that the validity of recourse is not affected by its performative effects.
We refer to this requirement as \emph{performative validity}.
\begin{defn}[\textbf{Performative Validity}]
\label{def:performative-validity}
    Given a predictive model $\Classifier$, a recourse method is called \emph{performatively valid} with respect to a set of individuals $\mathcal{X}'\subseteq \mathcal{X}$, if for all 
    $x \in \mathcal{X}'$ and all $\alpha\in [0,1]$ the optimal updated model $\MClassifier$ satisfies $\MClassifier(x) \geq \Classifier(x)$.
\end{defn}
Intuitively, the definition requires that whenever recourse is effective for an individual $x\in \cX'$ with respect to the original model, 
the updated model must also accept the applicant.
Thereby, it should not matter how much data is collected, and thus the guarantee must hold irrespective of the mixing weight $\alpha$.
Note that performative validity is related to the notion of \emph{performative stability} \citep{PerdomoZMH20}, 
which requires a model to remain optimal for the distribution it entails.
As such, if the model is performatively stable w.r.t. the distribution map implied by recourse, it is also performatively valid.
However, in general the converse does not hold.

\subsection{A model of the post-recourse distribution}
\label{subsec:causal-model-shift}
{
To obtain the post-recourse state of applicants, we assume that rejected individuals follow the actions recommended by the recourse algorithm while all other factors remain constant. To determine the causal effects of these actions on downstream variables, we assume access to the underlying structural causal model (SCM).
In the SCM, each variable is a function of its observed and unobserved causes, formally $x_j := f_j(x_{\PA(j)}, u_j)$. 
Causal interventions $\Do{a} = \Do{X_{I_a} = \theta_{I_a}}$ are implemented by replacing this function with a fixed assignment $x_j := \theta_j$.

Since recourse actions differ across applicants, we introduce a dedicated \textcolor{actioncolor}{action variable $\Act$} and obtain the post-recourse state by replacing each intervened-upon structural equation $f_j$ with a function $f_j^p$ that switches between the normal assignment and the intervention based on \textcolor{actioncolor}{$\Act$}, that is
\[
  \textcolor{actioncolor}{A} := 
  \begin{cases}
    a(x) & \text{if} \; \Classifier(x) = 0\\
    \Do{\varnothing} & \text{otherwise}
  \end{cases},
  \quad \text{and} \quad\; 
  x_j^p := f_j^p(x_{\PA(j)}^p, u_j^p, \textcolor{actioncolor}{a}) = 
  \begin{cases}
    \theta_{\textcolor{actioncolor}{a},j} & \text{if $j \in I_{\textcolor{actioncolor}{a}}$}\\
    f_j(x_{\PA(j)}^p, u_j^p) & \text{otherwise}
  \end{cases}.
\]
The action \textcolor{actioncolor}{$\Act$} causally depends on the pre-recourse observation $x$: first, via the decision model $\Classifier(x)$, which decides whether a recommendation is made, and second, via the recommendation $a(x)$. In short, we say that \textcolor{incoming}{the action is influenced by $x$}.
To capture this influence, we include both the pre-recourse variables $X$ and post-recourse variables $X^p$ in the model.
The causal relationships among pre-recourse variables are governed by the original structural equations $f$, while those among post-recourse variables follow the modified structural equations $f^p$.

Our model induces the causal graph in Figure~\ref{fig:causal-model}(a). Notably, the pre- and post-recourse states are connected not only via \textcolor{actioncolor}{$\Act$}, but also via the unobserved causes $U$ and $U^p$. We typically assume that unobserved causes remain unchanged between the pre- and post-recourse states, though in some settings it is more realistic to allow them to change (we discuss this in Section~\ref{sec:func-crit}).

}
\begin{figure}[t]
\centering
\includegraphics[width=0.95\textwidth]{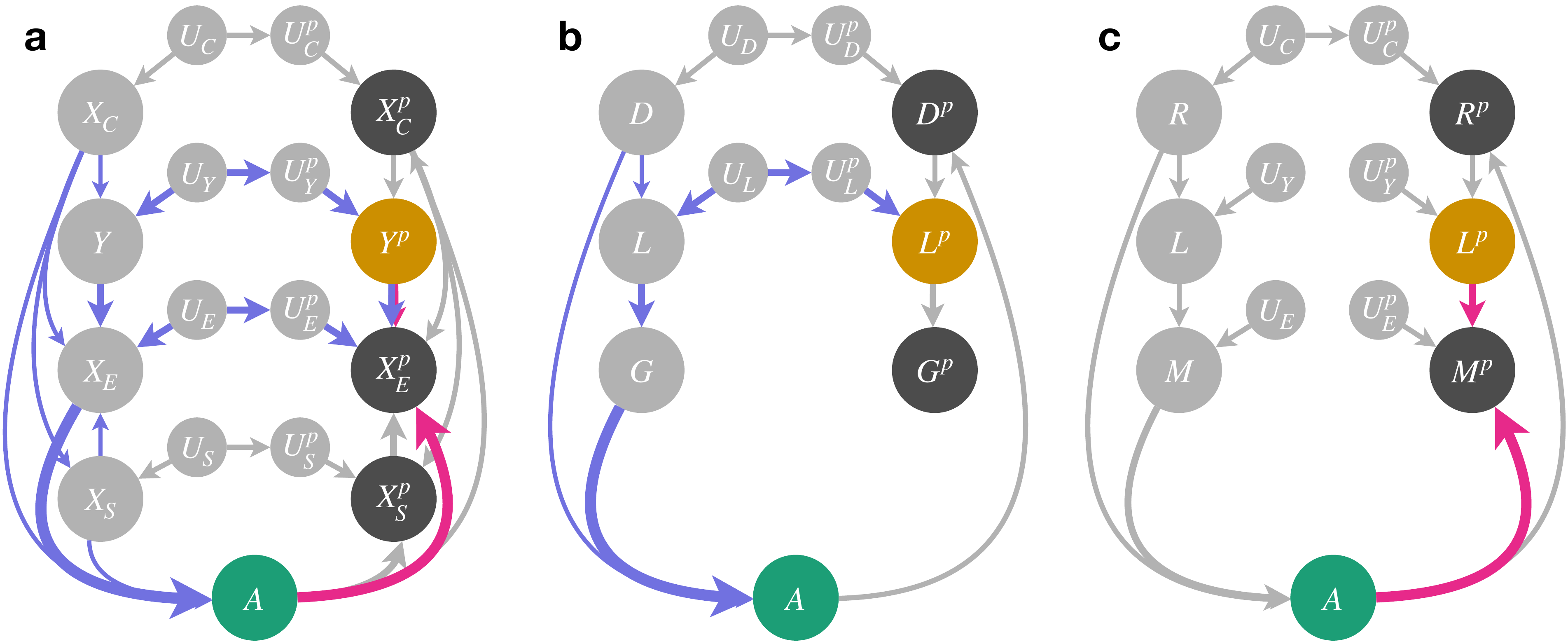}
\caption{
\textbf{Causal graphs.}
In each graph, we color open paths between \textcolor{actioncolor}{action $\Act$} and \textcolor{targetvarcolor}{post-recourse target $\Yp$} given \textcolor{postrecoursecol}{post-recourse observations $X^p$} 
via the \textcolor{incoming}{incoming} and \textcolor{outgoing}{outgoing} edges, 
and highlight the decisive edges \textcolor{incoming}{$X_E \rightarrow \Act$} and \textcolor{outgoing}{$\Act \rightarrow X_E^p$} using thicker arrows.
\textbf{(a)}: 
We illustrate the relationships between the target $Y$ and its causes $X_C$, its direct effects $X_E$, and its spouses $X_S$, as well as their relationships with their post-recourse counterparts $\Yp, X^p$. 
In general, the pre- and post-recourse variables are connected both via the action \textcolor{actioncolor}{$\Act$} and the unobserved causal influences $U, U^p$ (Section \ref{subsec:causal-model-shift}), and there are open paths via \textcolor{incoming}{$X_E \rightarrow \Act$} and \textcolor{outgoing}{$\Act \rightarrow X_E^p$} (Theorem \ref{thm:two-sources}).
\textbf{(b)}: The causal graph for Example~\ref{ex:interventions-depend-on-effects-problematic}, where 
$X_C=D$ indicates having a master's degree and $X_E=G$ indicates the GitHub activity. 
Since only interventions on causes are suggested, only the path via \textcolor{incoming}{$G \rightarrow \Act$} is open.
\textbf{(c)}: The causal graph for Example~\ref{ex:interventions-effects-problematic}, 
where $X_C=R$ is the general risk score and $X_E=M$ indicates the mood.
Since the unobserved causal influences are resampled, only the path via \textcolor{outgoing}{$\Act \rightarrow M^p$} is open.
} 
\label{fig:causal-model}
\end{figure}

\section{Recourse May Invalidate Itself if It Relies on Effect Variables}
\label{sec:characterization}

After introducing the setup, we now formally characterize conditions under which performative validity holds. The following theorem shows that performative validity is guaranteed if the proposed action \textcolor{actioncolor}{$\Act$} is conditionally independent of \textcolor{targetvarcolor}{$\Lp$} given $X^p$.

\begin{restatable}[\textbf{Uninformative actions imply performative validity}]{proposition}{characterization}
  \label{prop:characterization} Consider any recourse method and all points $x$ that would have been accepted by the original pre-recourse model, that is, $\Classifier(x)$=1. Then the optimal original and post-recourse models $h$ and $h^p$ are the same on all such points $x$ if and only if observing whether an intervention was performed $\ind[\textcolor{actioncolor}{\Act} \neq \Do{\varnothing}]$
    does not help to predict post-recourse: 
    \begin{align*}
      \ind[\textcolor{actioncolor}{A} \neq \Do{\varnothing}] \independent \textcolor{targetvarcolor}{\Lp}  \mid  X^p 
        \qquad \quad \Leftrightarrow \qquad \quad
        h(x) = h^{p}(x) \quad 
        \forall x: \Classifier(x)=1
    .\end{align*}
    As a result, we can guarantee performative validity
    if 
    observing the intervention does not help to predict the post-recourse outcome, that is
    \begin{align*}
      \textcolor{actioncolor}{A} \independent \textcolor{targetvarcolor}{\Lp}  \mid  X^p \quad \qquad \Rightarrow \quad \qquad \forall x : \MClassifier(x) \geq \Classifier(x).
    \end{align*}
\end{restatable}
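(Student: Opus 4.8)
The plan is to reduce the statement to a direct computation with the mixture distribution in~\eqref{eq:updated-model} and then read off the conditional independence. Fix a point $x$ with $\Classifier(x)=1$, so that $P(L=1,X=x)=h(x)\,P(X=x)$. Expanding $h^{m}(x)=P(L^{m}=1,X^{m}=x)/P(X^{m}=x)$ via~\eqref{eq:updated-model} and cancelling the common pre-recourse contribution, one finds that $h^{m}(x)=h(x)$ is equivalent to
\[
  P(L^{p}=1,\,X^{p}=x\mid \Classifier=0)=h(x)\,P(X^{p}=x\mid \Classifier=0),
\]
i.e.\ either $x$ is not reachable as a post-recourse state of a rejected applicant, in which case the identity is automatic, or $P(L^{p}=1\mid X^{p}=x,\Classifier=0)=h(x)$. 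This condition is free of $\alpha$, so the quantifier ``for all $\alpha$'' is harmless: it holds for all $\alpha\in(0,1)$ as soon as it holds for one.

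Next I would describe the post-recourse population as two subpopulations indexed by $B:=\ind[A\neq\Do{\varnothing}]$. Up to the generically empty set of rejected applicants whose recommended action is trivial, $\{B=1\}$ coincides with the event $\{\Classifier(X)=0\}$ that recourse is triggered; individuals with $B=0$ take no action, so under the convention in force in this section that the noise of non-acted-upon variables is unchanged, their post-recourse state equals their pre-recourse state. Hence $P(L^{p}=1\mid X^{p}=x,B=0)=h(x)$, and such individuals can occupy only points with $\Classifier(x)=1$. Since $B$ is binary, $B\independent L^{p}\mid X^{p}$ is equivalent to the equality $P(L^{p}=1\mid X^{p}=x,B=0)=P(L^{p}=1\mid X^{p}=x,B=1)$ at every $x$ carrying positive mass under both conditionings; by the preceding remarks the left side equals $h(x)$, the right side equals $P(L^{p}=1\mid X^{p}=x,\Classifier=0)$, and the relevant points $x$ are exactly those with $\Classifier(x)=1$ and $P(X^{p}=x\mid\Classifier=0)>0$. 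These are precisely the conditions obtained in the first step, so the two characterisations coincide and the stated equivalence follows in both directions.

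For the last implication, $\ind[A\neq\Do{\varnothing}]$ is a function of $A$, so $A\independent L^{p}\mid X^{p}$ implies $B\independent L^{p}\mid X^{p}$, and by the equivalence just shown $h^{m}(x)=h(x)$ for every $x$ with $\Classifier(x)=1$. For such $x$ this gives $\MClassifier(x)=\ind[h^{m}(x)\ge t_{c}]=\ind[h(x)\ge t_{c}]=\Classifier(x)$; for $x$ with $\Classifier(x)=0$ we trivially have $\MClassifier(x)\ge 0=\Classifier(x)$. Hence $\MClassifier(x)\ge\Classifier(x)$ for all $x$, which is performative validity.

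The step I expect to be the main obstacle is the measure-theoretic bookkeeping in the second paragraph: matching the pointwise statement ``$h^{m}=h$ on $\{\Classifier=1\}$'' with the almost-sure conditional-independence statement forces one to track carefully which $x$ actually carry mass under $X^{p}\mid\Classifier=0$ and under $X^{p}\mid B=0$, because outside these supports the conditions are vacuous on one or both sides and these vacuous cases have to be checked to agree. The remaining subtlety --- more a matter of care than of difficulty --- is justifying the identification of $\{A\neq\Do{\varnothing}\}$ with $\{\Classifier(X)=0\}$ and the claim that non-acting individuals reproduce the pre-recourse conditional label rate $h$; both rest on the ``noise unchanged'' convention adopted in this section.
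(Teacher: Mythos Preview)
Your proposal is correct and follows essentially the same route as the paper: both arguments identify the ``no-action'' subpopulation $\{A=\Do{\varnothing}\}$ with the accepted individuals, observe that this subpopulation reproduces the pre-recourse conditional $h$, and then read the conditional-independence statement as the requirement that the complementary ``acted'' subpopulation share the same conditional. The main difference is organisational: you unpack the mixture definition of $h^{m}$ directly and derive the condition $P(L^{p}=1\mid X^{p}=x,\Classifier=0)=h(x)$ first, whereas the paper works with $P(L^{p}\mid X^{p})$ and leaves the bridge to $h^{m}$ implicit; your version is slightly more explicit about supports and about why the equivalence is $\alpha$-free, which are points the paper glosses over. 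One small remark: the step $P(L^{p}=1\mid X^{p}=x,B=0)=h(x)$ does not actually need the ``noise unchanged'' convention---the paper argues it purely from the structural equations being unchanged on $\{A=\Do{\varnothing}\}$, which covers the resampled-noise case as well---so you can drop that caveat.
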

To evaluate whether \textcolor{actioncolor}{$\Act$} is conditionally independent of \textcolor{targetvarcolor}{$\Lp$} given $X^p$, we consider the causal graph in Figure~\ref{fig:causal-model}(a). According to standard results in causal inference (see Appendix \ref{sec:sep-criteria}), two variables are conditionally independent in the data if they are d-separated in the graph. From the graph, we can identify two potential reasons why \textcolor{actioncolor}{$\Act$} may not be d-separated from \textcolor{targetvarcolor}{$\Lp$} given $X^p$:
\begin{restatable}[\textbf{Two Sources of Invalidity}]{theorem}{twosources}
    \label{thm:two-sources}
    Consider the retrained post-recourse model $\MClassifier$ of Eq.~\eqref{eq:updated-model} and assume there are no unobserved confounders. Then there exist only two potential sources of performative invalidity:
\begin{enumerate}
      \item \textcolor{incoming}{$X_{\De(Y)} \rightarrow \Act$}, that is, recourse actions \textcolor{actioncolor}{$\Act$} \textcolor{incoming}{are influenced by effect variables}, meaning that the decision model $\Classifier(X)$ or recourse method $a(X)$ causally depends on effects $X_{\De(Y)}$.
      \item \textcolor{outgoing}{$\Act \rightarrow X_E^p$}, that is, recourse actions \textcolor{actioncolor}{$\Act$} \textcolor{outgoing}{intervene on effect variables}, meaning that the recourse method suggests interventions on direct effects of the prediction target.
    \end{enumerate}
  \end{restatable}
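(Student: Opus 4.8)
The plan is to reduce the theorem to a single conditional independence and then verify it by a $d$-separation argument in the augmented post-recourse graph $\mathcal{G}^p$ of Figure~\ref{fig:causal-model}(a). Recall that the theorem is the contrapositive of: if neither source is present, recourse is performatively valid. By Proposition~\ref{prop:characterization}, performative validity holds as soon as $A \independent L^p \mid X^p$, and since $L^p = \ind[Y^p \ge 0]$ is a deterministic function of $Y^p$ it suffices to establish the stronger statement $A \independent Y^p \mid X^p$. The hypothesis that there are no unobserved confounders ensures that $\mathcal{G}^p$ (together with the explicit noise nodes) is a DAG whose $d$-separations imply independences of the post-recourse distribution (Appendix~\ref{sec:sep-criteria}), so the whole task becomes: show $A$ and $Y^p$ are $d$-separated given $X^p$ in $\mathcal{G}^p$ under the two assumptions (i) no edge $X_{\De(Y)} \to A$ (negation of source~1) and (ii) no edge $A \to X_j^p$ with $j \in E$ (negation of source~2).

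I would first record the structure of $\mathcal{G}^p$ used throughout. Under (i) every parent of $A$ is a non-descendant of $Y$, hence $\An(A) \subseteq \{A\}\cup\{X_i,U_i : i\in\Nd(Y)\}$; in particular pre-recourse $Y$ and the pre-recourse feature-descendants of $Y$ are \emph{not} in $\An(\{A,Y^p\}\cup X^p)$, since the only pre-to-post links are $A$ itself and the noise bridges $U_i\to U_i^p$, and those nodes are neither parents of $A$ nor noise nodes. Each noise node $U_i$ has only $X_i$ and $U_i^p$ as neighbours (symmetrically for $U_i^p$, and likewise $U_Y,U_Y^p$). Finally, since recourse never intervenes on $Y$, the parents of $Y^p$ are exactly the post-recourse direct causes $(\PA(Y))^p$ and $U_Y^p$ (so $A$ is not a parent of $Y^p$), and the children of $Y^p$ are $X_E^p$ and $L^p$.

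Now I would apply the moral-graph criterion: restrict to the ancestral subgraph on $\An(\{A,Y^p\}\cup X^p)$, moralize, delete all nodes of $X^p$, and check that $A$ and $Y^p$ land in different connected components. Tracing the component of $Y^p$: its neighbours are $U_Y^p$ and, via co-parenthood at each child $X_E^p$, the nodes $U_E^p$; moralizing at $X_E^p$ would additionally connect $Y^p$ to $A$ exactly when $A$ is a parent of some $X_E^p$ — which (ii) forbids. One then checks that the component is precisely $\{Y^p,U_Y,U_Y^p\}\cup\bigcup_{i\in E}\{U_i,U_i^p\}$ and that it is \emph{closed}: each $U_i^p$ with $i\in E$ has, beyond $Y^p$ and $U_i$, only neighbours in $X^p$ (deleted) and — since $i\in E$ is not intervened — no moral edge to $A$; and each pre-recourse node in the set is a pendant attached through a noise bridge. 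Meanwhile every neighbour of $A$ lies in $\An(A)\cap\mathbb{X}\subseteq\{X_i:i\in\Nd(Y)\}$ together with the nodes $U_j^p$ for the intervened indices $j$, and by (ii) none of those $j$ lies in $E$; hence $A$ is outside the component of $Y^p$. (If noise is resampled rather than shared, the bridges $U_i\to U_i^p$ vanish, shrinking the subgraph and only making the separation easier.) This gives $A\independent Y^p\mid X^p$, and with the first paragraph the theorem follows.

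The delicate step is verifying that the component of $Y^p$ is genuinely closed. Two points need care. First, the moral edges created at the children $X_E^p$ of $Y^p$ are exactly where source~2 would re-enter, so one must confirm that deleting $X^p$ leaves no residual $A$–$U_E^p$ link unless $A\to X_E^p$ is on the graph. Second, one must check that intervening on an \emph{indirect} effect of $Y$ does not create a connection: such an intervention ties $A$ only to $U_j^p$ (a pendant once $X^p$ is removed) and to post-recourse parents of $X_j^p$ (all in $X^p$, hence deleted), never to $Y^p$ — which is why only the direct effects $E$ appear in source~2, while source~1 is stated for all of $X_{\De(Y)}$. As an alternative to moralization, the same conclusion can be reached by a direct path-blocking argument: any $A$–$Y^p$ path either leaves $A$ forward into an observed $X_j^p$ — forcing a collider there whose only continuation reaching $Y^p$ requires the path to pass $Y^p\to X_E^p$ with $A\to X_E^p$ also present — or leaves $A$ backward into $\An(A)\subseteq X_{\Nd(Y)}$ and, by acyclicity and the fact that every post-recourse feature it meets is observed, gets stuck at an unobserved collider unless it re-enters $A$ through an edge $X_{\De(Y)}\to A$.
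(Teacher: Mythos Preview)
Your proposal is correct and follows essentially the same route as the paper: reduce to Proposition~\ref{prop:characterization}, then verify $A \dsep Y^p \mid X^p$ in the augmented graph once the edges $X_{\De(Y)} \to A$ and $A \to X_E^p$ are removed. The only difference is cosmetic --- you check $d$-separation via the ancestral moral-graph criterion, whereas the paper enumerates and blocks path types directly; both are standard, equivalent formulations, and your treatment is in fact somewhat more explicit about why interventions on \emph{indirect} effects of $Y$ do not reopen a path (explaining the asymmetry between $\De(Y)$ in source~1 and $E$ in source~2).
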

\begin{proof}(Sketch)
  If \textcolor{actioncolor}{$\Act$} is $d$-separated from \textcolor{targetvarcolor}{$\Yp$} given $X^p$ in the graph, then $\textcolor{actioncolor}{\Act} \independent \mathcolor{targetvarcolor}{\Yp} | X^p$ in the distribution induced by the corresponding SCM, and performative validity can be guaranteed (Proposition~\ref{prop:characterization}).
We observe that in the general graph \textcolor{actioncolor}{$\Act$} is $d$-connected with \textcolor{targetvarcolor}{$\Yp$} given $X^p$, but becomes $d$-separated if we remove two edges:
(1) the \textcolor{incoming}{incoming edge $X_{de(Y)} \rightarrow \Act$}, which represents that the recommended action may be influenced by the pre-recourse effects, and
(2) the \textcolor{outgoing}{outgoing edge $\rightarrow X_E^p$}, which represents that the actions may intervene on the post-recourse direct effects. 
\end{proof}

Theorem \ref{thm:two-sources} shows that performative validity can be guaranteed if we avoid using effect variables.
Applied to our running example we can guarantee validity if we avoid using the effect variable GitHub activity, that is,
\textcolor{incoming}{(1)} neither the decision model nor the recourse algorithm are influenced by information about GitHub activity to arrive at their outputs,
and \textcolor{outgoing}{(2)}, recourse does not suggest to change the GitHub activity.
Conversely, if we rely on the effect variable GitHub activity, it is unclear whether recourse is performatively valid.
To better understand whether and under which assumptions relying on effects leads to performative invalidity,
we now study each of the two sources in detail.

\subsection{Performative invalidity due to actions that \textcolor{incoming}{are influenced by effect variables}}
\label{sec:func-crit}

In this section, we find that actions that are influenced by effect variables may indeed cause performative invalidity.
Intuitively, the reason is that when actions are influenced by effect variables,
they may contain relevant information about the unobserved causal influences $U$ that otherwise could not be obtained from the post-recourse observation.
When the actions are implemented, this information leaks into the post-recourse variables and their relationship with the target may change.
\begin{restatable}[\textbf{Interventions that are influenced by effects may cause performative invalidity}]{example}{exampleproblematicdependoneffects}
  \label{ex:interventions-depend-on-effects-problematic}
Let $\ExOneCause \sim \mathrm{Bin}(0.5)$ be whether someone has a degree, $L$ the applicant's qualification, $\ExOneEffect$ whether someone has significant GitHub activity, and let $t_c=0.5$ be the decision threshold. Let furthermore $U_L \sim \mathrm{Unif}(0, 1)$ capture the applicant's autonomy, an unobserved cause of qualification.  
Furthermore, suppose that GitHub activity is only predictive of autonomy in the subpopulation of applicants without a degree.
\begin{align*}
  L:= \begin{cases}\ind[U_L > 0.55] & \ExOneCause=0\\ \ind[U_L > 0.45] &\ExOneCause = 1 \end{cases}\quad \quad 
  \ExOneEffect:= \begin{cases}L & \ExOneCause = 0\\ 1 & \ExOneCause=1  \end{cases}
\end{align*}
The causal graph is visualized in Figure \ref{fig:causal-model} (b). In this setting, the action \textcolor{actioncolor}{$A$} is influenced by the effect variable (GitHub activity $X_E$) and captures information about the unobserved cause (autonomy $U_Y$): Applicants are rejected if and only if they have no GitHub activity, which only happens if they have low autonomy $u_L \in [0, 0.55]$. 
When those low-autonomy applicants reapply after getting a degree, resulting in $X^p=(1,1)$,
the relationship between having a degree and qualification changes:
Pre-recourse, it is unclear whether individuals with a degree $X=(1,1)$ have high or low autonomy;
Post-recourse it is more likely that applicants with a degree have low autonomy
and are thus less qualified.
This can indeed lead to invalidity: 
If the model is updated on a mixture with only $20$ percent post-recourse applicants, 
the updated model would already reject everyone with observation $(1,1)$.
All details are reported in Appendix \ref{appendix:details:counterexample-all-fail}.
\end{restatable}

\textbf{The result is concerning, since it proves that there are settings where performative validity cannot be guaranteed, even if only interventions on causes are suggested.}
To avoid performative invalidity in such settings, the only remedy is to change the decision model such that it does not rely on effect variables (this follows from Theorem \ref{thm:two-sources}). However, the model may not be in the explanation provider's control, and abstaining from using effect variables may result in a significant drop in predictive accuracy.

As follows, we present two assumptions that allow us to partially recover from this sobering result.
We first observe that in Example \ref{ex:interventions-depend-on-effects-problematic} the action could only leak information because 
the unobserved causal influence, the applicant's autonomy, remains the same pre- and post-recourse. 
This may not always be the case:
Think of a scenario where a patient in a closed psychiatric facility applies to spend a day outside,
and the decision is made based on the person's risk to harm themselves $Y$,
which is predicted based on their average risk score $X_C$ 
and their score in a quick examination $X_E$ that captures the daily mood.
Here, the unobserved causes could be the weather $U_Y$ and the selection of questions for the examination $U_E$ -- factors that are resampled every day.
More formally, the pre- and post-recourse unobserved causal influences $(U_Y, U_E)$ and $(U_Y^p, U_E^p)$ are independent (Assumption \ref{asmp:noise}).
When this assumption is met, no relevant information about the post-recourse target \textcolor{targetvarcolor}{$\Yp$} can leak via the suggested action, and thus recourse methods that only intervene on causes can be guaranteed to be performatively valid (Proposition \ref{prop:ruleout}).
\begin{restatable}[\textbf{Unobserved Noise Changes Post-Recourse}]{asump}{noiseasump}
\label{asmp:noise}
The unobserved causal influences of $Y$ and $X_E$ are resampled post-recourse,
that is $U_Y$ and $U_Y^p$, as well as $U_E$ and $U_E^p$ are i.i.d..
\end{restatable}
\begin{restatable}[\textbf{Assumption \ref{asmp:noise} restores performative validity}]{proposition}{ruleout}
\label{prop:ruleout}
If Assumption \ref{asmp:noise} holds, recourse methods that avoid interventions on effects are performatively valid.
\end{restatable}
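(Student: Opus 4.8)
The plan is to reduce performative validity to a single conditional independence and then verify it by a $d$-separation argument in the causal graph of Figure~\ref{fig:causal-model}(a). For any $x$ with $\Classifier(x)=0$ we trivially have $\MClassifier(x)\ge 0=\Classifier(x)$, so it suffices to control the points with $\Classifier(x)=1$ --- which is exactly what Proposition~\ref{prop:characterization} does: it is enough to establish $A \independent L^p \mid X^p$. Since $L^p=\ind[Y^p\ge 0]$ is a deterministic function of $Y^p$, this follows from $A \independent Y^p \mid X^p$; and since the model of Section~\ref{subsec:causal-model-shift} is an SCM with mutually independent exogenous variables, it is in turn enough to show that $A$ is $d$-separated from $Y^p$ given $X^p=\{X_C^p, X_E^p, X_S^p\}$ in the induced graph (soundness of $d$-separation, Appendix~\ref{sec:sep-criteria}) --- now with two structural simplifications: Assumption~\ref{asmp:noise} deletes the edges $U_Y\to U_Y^p$ and $U_E\to U_E^p$, and ``avoiding interventions on effects'' deletes the edge $A\to X_E^p$ (more generally, $A$ has no edge into the post-recourse copy of any descendant of $Y$).

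Next I record the local structure around $Y^p$ in the resulting graph: its parents are $X_C^p$ (among features) and the noise $U_Y^p$, and its only child is $X_E^p$; the action $A$ has no edge to $Y^p$ (recourse acts on features, not on the target) nor to $X_E^p$, so $A$'s children lie among the non-effect post-recourse features. Crucially, by Assumption~\ref{asmp:noise}, $U_Y^p$ and $U_E^p$ are now root nodes whose unique neighbour is $Y^p$, resp.\ $X_E^p$; and $X_E^p$ has no children, so every edge incident to it points into it. The case analysis on how an open path could reach $Y^p$, whose only neighbours are $X_C^p$, $U_Y^p$, $X_E^p$, then goes: (i) if the last edge is $X_C^p\to Y^p$, then $X_C^p$ is a non-collider on this path (one of its path-edges points away from it) and lies in $X^p$, so the path is blocked; (ii) $U_Y^p$ has $Y^p$ as its only neighbour, so no non-trivial path reaches $Y^p$ through it; (iii) a path reaching $Y^p$ via $Y^p\to X_E^p$ traversed backwards must first reach $X_E^p$, where it is a conditioned --- hence open --- collider, but it must have entered $X_E^p$ from another parent, $X_C^p$, $X_S^p$ or $U_E^p$, and each is blocked: $X_C^p$ and $X_S^p$ lie in $X^p$ and appear as non-colliders (their path-edge into $X_E^p$ points away from them), while $U_E^p$ is a root whose only neighbour is $X_E^p$. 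Hence no open path exists, $A$ and $Y^p$ are $d$-separated given $X^p$, and Proposition~\ref{prop:characterization} gives performative validity. (Nothing in the argument depends on which components of $x$ the decision model or the recourse map read, so it covers both the individualized and subpopulation-based variants.)

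The main obstacle is that $X_E^p$ is a conditioned collider and therefore an \emph{open} node, so one cannot simply argue that conditioning on $X^p$ blocks every path through the post-recourse features; one must show that any path flowing through the open $X_E^p$ toward $Y^p$ is forced either through a conditioned non-collider ($X_C^p$ or $X_S^p$) or into the freshly-resampled root $U_E^p$. Equivalently, one must rule out that the information injected into $A$ by effect variables --- the surviving edge $X_{\De(Y)}\to A$ of source~1 in Theorem~\ref{thm:two-sources} --- leaks back to $Y^p$: the only bridges from the pre-recourse subgraph to the post-recourse world are $A$ itself (revisiting which closes a cycle) and the noise links $U_C\to U_C^p$ and $U_S\to U_S^p$, which feed only into the conditioned non-colliders $X_C^p$, $X_S^p$ and are therefore blocked. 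This is precisely where Assumption~\ref{asmp:noise} is essential: retaining the edge $U_Y\to U_Y^p$ (or $U_E\to U_E^p$) would reopen exactly such a back-door path, recovering the failure exhibited in Example~\ref{ex:interventions-depend-on-effects-problematic}.
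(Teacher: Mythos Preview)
Your proof is correct and follows the same approach as the paper: invoke Proposition~\ref{prop:characterization}, then verify $A\dsep Y^p\mid X^p$ in the graph with the edges $U_Y\to U_Y^p$, $U_E\to U_E^p$, and $A\to X_E^p$ removed. The paper's own proof simply asserts the $d$-separation (implicitly leaning on the path analysis from the proof of Theorem~\ref{thm:two-sources}), whereas you spell out the case analysis around $Y^p$ and correctly handle the remaining bridges $U_C\to U_C^p$, $U_S\to U_S^p$; your treatment of the conditioned collider $X_E^p$ is in fact more careful than the paper's.
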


Second, we observe that the action in Example \ref{ex:interventions-depend-on-effects-problematic} can only change the post-recourse conditional distribution because it has access to information that otherwise cannot be obtained from the post-recourse observation.
Specifically, we recall that in the example GitHub activity is only predictive of autonomy when the person has no degree, which is only the case pre-recourse.
However, depending on the functional form of the causal relationships, the pre- and post-recourse observation may provide similar information about the unobserved causal influences.
Specifically, we introduce Assumption \ref{asmp:func-form} and show that it implies performative validity when the recourse actions do not intervene on effects (Theorem \ref{thm:icr-func-stable}).
Intuitively, the assumption ensures that any two observations that were generated with the same structural equations and unobserved influences provide the same information about $U$.
Furthermore, it covers many naturally occurring types of structural equations, including linear additive and multiplicative noise models.

\begin{restatable}[{\bf  Marginalization with Invertible Aggregated Noise}]{asump}{funcasump}
\label{asmp:func-form}
For an SCM with structural equations $f_E$ and $f_Y$, we assume that 
\begin{enumerate}[label=(\roman*)]
    \item there exists an aggregation function $b_{\text{agg}}$ and a structural equation $f_{\text{agg}}$
such that we can rewrite the composition $f_E(f_Y(x_C, u_Y), x_S, u_E)$ as $f_{\text{agg}}(x_C, x_S, b_{\text{agg}}(u_Y, u_E))$, and 
    \item $f_{\text{agg}}$ is invertible, that is, there exists $f_{\text{agg}}^{-1}$ such that
$f_{\text{agg}}^{-1}(x_C, x_E, x_S) = b_{\text{agg}}(u_E, u_Y)$.
\end{enumerate}
\end{restatable}
\begin{restatable}[\textbf{Assumption \ref{asmp:func-form} restores performative validity.}]{theorem}{ICRStable}\label{thm:icr-func-stable}
  Let Assumption~\ref{asmp:func-form} be satisfied. Further assume that the unobserved causal influences stay the same post-recourse, that is $(U_Y, U_E) = (U_Y^p, U_E^p)$. 
    Then, recourse methods that abstain from interventions on effects are performatively valid.
\end{restatable}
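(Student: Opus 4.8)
The plan is to apply Proposition~\ref{prop:characterization}, which reduces performative validity to the conditional independence $A \independent L^p \mid X^p$: once this holds, $\MClassifier(x) \ge \Classifier(x)$ for every $x$ and every mixing weight $\alpha$. Since $L^p = \ind[Y^p \ge 0]$, it is enough to prove the stronger statement $A \independent Y^p \mid X^p$. The argument is a change of coordinates: by part~(i) of Assumption~\ref{asmp:func-form} the two noises $(U_Y, U_E)$ driving the $Y$--$X_E$ mechanism can be bundled into a single aggregated noise $W := b_{\mathrm{agg}}(U_Y, U_E)$, and part~(ii) will let us recover $W$ from the post-recourse observation itself; once this is done, $A$ and $Y^p$ turn out to be functions of conditionally independent pieces of exogenous randomness.

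The central step is the recovery of $W$ from $X^p$. Because the recourse method abstains from interventions on the direct effects $X_E$, the post-recourse value $X_E^p$ is produced by the natural mechanism, $X_E^p = f_E\big(f_Y(X_C^p, U_Y^p), X_S^p, U_E^p\big)$. Substituting the hypothesis $(U_Y^p, U_E^p) = (U_Y, U_E)$ and using part~(i) of Assumption~\ref{asmp:func-form} as an identity evaluated at the post-recourse causes and spouses gives $X_E^p = f_{\mathrm{agg}}(X_C^p, X_S^p, W)$, and part~(ii) then yields $W = f_{\mathrm{agg}}^{-1}(X_C^p, X_E^p, X_S^p)$. Hence $W$ is a deterministic function of $X^p$; conversely $X_E^p$ is determined by $(X_C^p, X_S^p, W)$, so conditioning on $X^p$ is the same as conditioning on $(X_C^p, X_S^p, W)$. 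This is exactly where invertibility is indispensable, and also where abstaining from interventions on effects is used: if $X_E^p$ were overwritten by an intervention, $W$ could no longer be decoded from the observation.

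To conclude, I apply part~(i) of Assumption~\ref{asmp:func-form} to the pre-recourse SCM, obtaining $X_E = f_{\mathrm{agg}}(X_C, X_S, W)$, so the action $A$ --- a (possibly randomized) function of the pre-recourse observed features --- is a function of $(X_C, X_S, W)$ together with exogenous noise independent of $(U_Y, U_E)$. Conditionally on $X^p$, which fixes $W$ and contains $X_C^p$ and $X_S^p$, the action $A$ is therefore a measurable function of quantities that involve $U_Y$ only through the already-fixed $W$, while $Y^p = f_Y(X_C^p, U_Y)$ is a measurable function of $U_Y$ alone. It then suffices to observe that all exogenous noise underlying the pre- and post-recourse causes and spouses (and the action $A$) is disjoint from $\{U_Y, U_E\}$ --- as in Figure~\ref{fig:causal-model}(a), neither the causes nor the spouses are descendants of $Y$ or of $X_E$ --- hence independent of $(U_Y, U_E)$, and therefore independent of $U_Y$ given $W$; conditioning additionally on $X^p$ (a function of that noise together with $W$) preserves this. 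Combining the pieces gives $A \independent Y^p \mid X^p$, and Proposition~\ref{prop:characterization} delivers performative validity.

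I expect the main obstacle to be the central step: seeing that the invertibility in Assumption~\ref{asmp:func-form}(ii) is precisely what makes the aggregated noise recoverable from $X^p$, so that $A$ carries no information about $Y^p$ that $X^p$ does not already contain. The remaining technicality is the bookkeeping in the final step --- one has to check that $X^p$ depends on $U_Y$ only through $W$, even when causes or spouses are intervened upon (the intervention values are functions of $A$, hence of $(X_C, X_S, W)$, and add no new dependence on $U_Y$), and that the noise variables entering $(X_C, X_S)$ and their post-recourse counterparts are genuinely disjoint from $\{U_Y, U_E\}$.
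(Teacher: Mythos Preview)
Your argument is correct and takes a genuinely different route from the paper. The paper proceeds by direct density computation: it writes out $p(u_Y,u_E\mid X=x^p)$ and $p(u_Y,u_E\mid X^{p}=x^p)$ explicitly via Bayes' rule (marginalizing over the pre-recourse $x_C$ in the second case) and then uses Assumption~\ref{asmp:func-form} to see that both collapse to the same normalized expression --- the constraint on $(u_Y,u_E)$ is, in both cases, $b_{\mathrm{agg}}(u_Y,u_E)=f_{\mathrm{agg}}^{-1}(x^p)$; equality of these posteriors feeds into Proposition~\ref{prop:characterization}. You instead promote that constraint to a variable $W:=b_{\mathrm{agg}}(U_Y,U_E)$, show it is a deterministic function of $X^p$, and argue $A\independent Y^p\mid X^p$ structurally: $A$ is a function of $(U_C,U_S,W)$ and $Y^p$ a function of $(X_C^p,U_Y)$, and conditioning on $X^p$ factors as a product constraint on $(U_C,U_S,\dots)$ and on $\{W=w\}$, preserving independence. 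Your approach is more conceptual and makes transparent \emph{why} invertibility is exactly what is needed; the paper's is more mechanical but entirely explicit. The one place your write-up is a bit terse is the last step --- that further conditioning on $X^p$ (a function of the cause/spouse noise together with $W$) preserves $(U_C,U_S)\independent U_Y\mid W$; this holds because, once $W=w$ is recovered, $\{X^p=x^p\}$ is the intersection of an event in $(U_C,U_S,\dots)$ with $\{W=w\}$, hence a product event, but it would be worth stating this explicitly.
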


\subsection{Performative invalidity due to actions that \textcolor{outgoing}{intervene on effect variables}}
\label{sec:noncausal-interventions}
To show that \textcolor{outgoing}{interventions on effects} may indeed cause performative invalidity,
we show that invalidity may occur in a setting where we can rule out that actions that \textcolor{incoming}{are influenced by effects} are responsible.
This is tricky, since we cannot enforce that recourse actions \textcolor{incoming}{are not influenced by effects} but \textcolor{outgoing}{intervene on effects} at the same time:
To ensure that interventions \textcolor{incoming}{are not influenced by effects}
the decision model must be invariant to changes in the effect variables---and as a result, no actions that \textcolor{outgoing}{intervene on effects} would be suggested anyway.
Instead, to show that \textcolor{outgoing}{interventions on effects} are responsible for performative invalidity, we focus on a setting where actions that \textcolor{incoming}{are influenced by effects} are not problematic.
In Section \ref{sec:func-crit} we introduced two such settings, captured in Assumptions~\ref{asmp:noise} and \ref{asmp:func-form}.
As follows, consider a formal version of the psychiatric facility setting where the unobserved causal influences are resampled post-recourse and thus Assumption \ref{asmp:noise} is met.
\begin{restatable}[\textbf{Interventions on effects cause performative invalidity}]{example}{exampleproblematiceffects}
  \label{ex:interventions-effects-problematic}
  Let $Y$ capture a patient's daily risk of harming themselves, $\ExTwoCause$ the patient's general risk score, and $\ExTwoEffect$ a quick examination that aims to capture the daily mood. More formally, let $\ExOneCause \sim \mathcal{N}(0, 1)$, $\eY \sim \mathcal{N}(0, 1)$, $\eE \sim \mathcal{N}(0, 1)$ and their causal relationships be governed by the following structural equations:%
\begin{align}
\label{eq:cr-counter}
    Y := \ExTwoCause + \eY,   & &
    \ExTwoEffect := Y + \eE,   & & 
    L=\ind[Y \ge 0].
\end{align}
\todog{update the description, make clear that interventions on causes are not problematic?}
The causal graph is visualized in Figure~\ref{fig:causal-model} (c). 
We suppose that recourse suggests intervening on $\ExTwoEffect$, that is, to give more favorable answers in the quick examination.
Since giving more favorable answers per se does not improve the patient's daily mood, the action only makes the patient's mood \emph{appear} favorable.
When many applicants start to game the examination, the variable becomes less predictive of patient risk, and updated models will reduce their reliance on the feature.
Formally, recourse changes $x_E$ to move rejected individuals onto the decision boundary $x_C = - x_E$.
The updated decision boundary moves towards $x_C = 0$, such that recourse-implementing applicants with $x_C < 0$ get rejected when they reapply (confer Appendix \ref{appendix:details:counterexample} for details).
Since Assumption \ref{asmp:func-form} holds too, the phenomenon occurs irrespective of whether the noise is resampled (Assumption \ref{asmp:noise}).
\end{restatable}

The example demonstrates that interventions on effects may indeed cause performative invalidity.
More generally, we can prove that recourse via interventions on effects always leads to a shift in the conditional distribution (Appendix \ref{sec:proofs_noncausal}).
Whether the shift is severe enough to lead to performative invalidity depends on the concrete setting. 
These results are concerning, since the two most widely used recourse methods, namely CE and CR, may suggest intervening on effects.
\begin{restatable}
{cor}{SimpleGraph}\label{thm:simple-graph}
In a setting where the first source of invalidity (\textcolor{incoming}{influence on} \textcolor{actioncolor}{$\Act$}) can be excluded,
that is if either Assumption~\ref{asmp:noise} or \ref{asmp:func-form} holds,
recourse algorithms that abstain from \textcolor{outgoing}{interventions on direct effects} are performatively valid
but other recourse algorithms may not.
Without additional assumptions, \textbf{performative validity can only be guaranteed for ICR,
but not for CR and CE}.
\end{restatable}
\section{Experiments Confirm That Interventions on Effects Must Be Avoided}
\label{sec:simulation}

Our theory suggests that the performative validity of recourse critically depends on the recourse method and the functional form of the underlying causal relationships. To investigate this empirically, we study the performative effects of CE, and the different versions of CR and ICR in synthetic and real-world settings.  
Specifically, we study the following classes of structural equations:
\begin{enumerate}[label=(\roman*), leftmargin=4em]
  \item Additive noise (\emph{LAdd}): $f_j(x, u) = l(x) + u$ where $l$ is linear 
  \item Multiplicative noise (\emph{LMult}): $f_j(x, u) = m(x) u$ where $m$ is linear in each dimension
  \item Nonlinear relations and additive noise (\emph{NLAdd}): $f_j(x, u) = g(x) + u$ where $g$ nonlinear
  \item Nonlinear relations and multiplicative noise (\emph{NLMult}): $f_j(x, u) = g(x) u$ where $g$ nonlinear
  \item Polynomial noise (\emph{LCubic}): $f_j(x, u) = (l(x) + u)^3$ 
\end{enumerate}
We note that the first two settings (\emph{LAdd} and \emph{LMult}) satisfy Assumption \ref{asmp:func-form}, but the remaining settings do not.
To enable pointwise comparisons of the conditional distributions,
we rely on discrete noise distributions with finite support.\\
In addition, we include two real-world settings: College admission (\emph{GPA}) and credit scoring (\emph{Credit}). For \emph{GPA} we assume the causal graph in \citep{harris2022strategic} and fit a linear SCM with additive Gaussian noise on the dataset \citep{openintro_satgpa}; For \emph{Credit} we rely on the graph by \citet{chen2023learning} and fit a random-forest based, nonlinear SCM with additive Gaussian noise \citep{default_of_credit_card_clients_350}.\\
The \emph{Cred} setting has eleven features; all others include one cause and one effect variable. To highlight the differences between methods, we choose the costs such that interventions on effects are more lucrative. Consequently, CE and CR intervene on the effect, while ICR only intervenes on the cause. To allow both sources of invalidity to come into effect, we always model the noise to stay the same post-recourse. We provide a detailed description of setup and results in Appendix \ref{app:experiments}.

\begin{figure*}[t]                 %
  \centering
  \begin{subfigure}[b]{0.55\textwidth}
    \centering
    \includegraphics[width=\linewidth]{./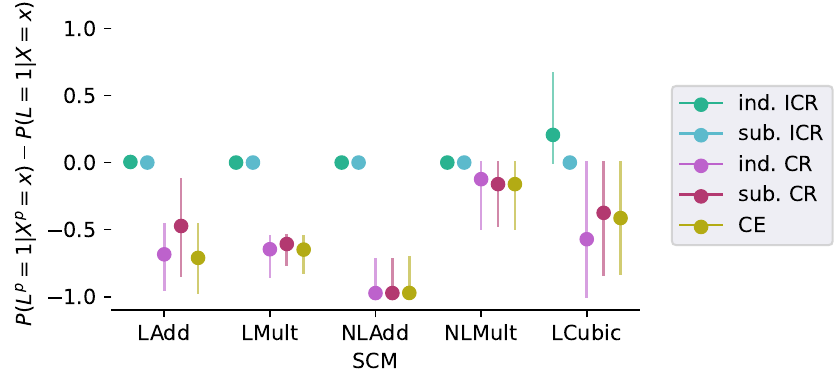}
  \end{subfigure}
  \hfill                                %
  \begin{subfigure}[b]{0.43\textwidth}
    \centering
    \includegraphics[width=\linewidth]{./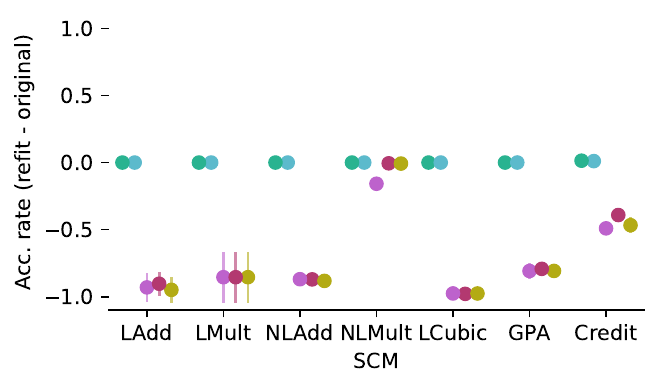}
  \end{subfigure}
  \caption{\textbf{Experimental results.}
  \emph{(Q1, left):} The pointwise differences between pre- and post-recourse conditional distribution aggregated using the mean,  the lines indicate the range. All values are averages over $10$ runs.
  \emph{(Q2, right):} The difference in acceptance rate (refit minus original), average ($\bullet$) and standard deviation (lines) over $10$ runs. 
    While CE and CR lead to unfavorable shifts and performative invalidity, ICR is performatively valid in all settings.
    }
  \label{fig:simulations}
\end{figure*}

\paragraph{Conditional distributions (Q1):}{
  \textit{Does the conditional distribution that is modeled by the predictor change? More formally, are $P(\Lp=1|X^p=x)$ and $P(L=1|X=x)$ the same? }

For the synthetic settings with discrete support
we compare the pre- and post-recourse conditional distributions
by computing the pointwise differences $P(\Yp=1|X^p=x')-P(Y=1|X=x')$
and aggregating them using the minimum, the maximum, and the mean.
When computing the mean, we weight the different points $x'$ according to
their post-recourse probability in the population of rejected applicants, formally $P(X^p=x'\mid\Classifier=0)$.
The results are reported in Figure \ref{fig:simulations} (left).
We find that methods that intervene on effects (CE and CR)
significantly \emph{decrease} the conditional probability of a favorable outcome
across all settings.
For example, in the nonlinear additive setting, the conditional probability of a favorable outcome decreases by between $70\%$ and $100\%$.
In contrast, for ICR, the conditional probability \emph{remains the same in nearly every setting},
including but not limited to the two settings covered by Assumption \ref{asmp:func-form}.
The exception is the setting \emph{LinCubic}, where for ind. ICR the conditional probability of a favorable outcome \emph{increases} by between $0 \%$ and $60\%$. 
This is consistent with our theory.
}
\paragraph{Acceptance rates for the updated model (Q2):}
\textit{Does performative validity hold? Which percentage of the recourse-implementing applicants gets accepted by the original vs the updated models? }

To compare the pre- and post-refit acceptance rates, we compute the percentage of the resource-implementing applicants who get accepted by the original model vs an updated model that is trained on a mixture with $\alpha=1/3$ post-recourse data.
The results are reported in Figure \ref{fig:simulations} (right).
We observe that the interventions on effects recommended by CE and CR not only change the conditional distribution, but also lead to a dramatic drop in the post-recourse acceptance rates.
For example, in the GPA setting, the acceptance rate drops by nearly $80\%$ for CE and CR. 
In contrast, the acceptance rates for ICR remain unchanged across all settings, extending beyond our theoretical results.

In conclusion, the empirical results \emph{confirm our theory}: Even recourse methods that only intervene on causes (ICR) may lead to a shift when they \textcolor{incoming}{are influenced by effects} but only methods that suggest to \textcolor{outgoing}{intervene on effects} (CE and CR) can cause a shift in settings covered by Assumption \ref{asmp:func-form}.
Extending \emph{beyond our theoretical analysis}, we observe that ICR seldom leads to a shift, and when it does, the shift is not problematic.
In contrast, CE and CR shift the distribution across all settings and cause severe performative invalidity. Our findings consistently confirm that \textbf{interventions on effects must be avoided}.

\section{Discussion and Future Work}

In this paper, we show that improvement-focused causal recourse (ICR), unlike counterfactual explanations (CEs) and causal recourse (CR), can maintain performative validity across a broad range of data-generating processes, including those involving resampling or additive noise. Empirically, ICR appears to remain valid even under more general conditions. A key direction for future research is to formally characterize the full class of data-generating processes under which ICR guarantees performative validity. Additionally, because full causal knowledge is rarely available in practice, it would be interesting to extend our framework to settings with incomplete causal knowledge, including cases that violate causal sufficiency.

This paper has examined the performative effects of recourse explanations from the perspective of the applicant. A promising direction for future work is to complement this view with the perspective of the model authority \citep{fokkema2024risks}.  Model authorities may use recourse explanations to strategically steer applicants toward regions of the input space where model performance improves or institutional goals are better met. In this setting, our notion of recourse validity could be extended to actions that are valid post-recourse but not pre-recourse, offering a conceptual analogue to performative optimality \citep{perdomo2020performative}. However, such steering raises important ethical concerns---particularly when it overlooks or conflicts with the applicant’s own goals and values \citep{kim2022making, hardt2022performative,zezulka2024fair}.

We focused on single-step recourse -- where applicants act on a one-time recommendation and then face the updated model. However, many real-world scenarios involve repeated decision-making, such as applicants reapplying for loans multiple times \citep{verma2022amortized,fonseca2023setting}. Building on insights from the performative prediction literature future research could investigate whether different recourse strategies converge to stable equilibria, and if so, analyze their properties and desirability.

\textbf{Practical Insight.} Counterfactual explanations are widely used in practice to guide applicants toward their desired outcomes. However, our findings show that, due to the performative effects of their actions, applicants may still fail to achieve these outcomes -- even after following the recommended recourse.
To prevent applicants from taking costly yet ineffective actions, model authorities should only give recourse recommendations that improve the qualification of the applicant. 
In particular, we advise against relying on standard counterfactual or causal recourse explanations.
Thereby, we strengthen the position taken in previous work \citep{konig2023improvement}, which demonstrates the benefits of improvement-focused recourse from the perspective of the model authority.

\newpage

\begin{ack}
The authors are grateful to Anna Vollweiter for running preliminary, yet different, experiments in the early stages of the project.
This work has been supported by the German Research Foundation through
the Cluster of Excellence ``Machine Learning - New Perspectives for
Science" (EXC 2064/1 number 390727645) and the Carl Zeiss Foundation
through the CZS Center for AI and Law. Freiesleben was additionally supported by
the Carl Zeiss Foundation through the project ``Certification and Foundations of Safe Machine Learning Systems in Healthcare".
\end{ack}

\bibliography{perfrec2}

\begin{thebibliography}{50}
\providecommand{\natexlab}[1]{#1}
\providecommand{\url}[1]{\texttt{#1}}
\expandafter\ifx\csname urlstyle\endcsname\relax
  \providecommand{\doi}[1]{doi: #1}\else
  \providecommand{\doi}{doi: \begingroup \urlstyle{rm}\Url}\fi

\bibitem[Black et~al.(2021)Black, Wang, Fredrikson, and Datta]{black2021consistent}
Emily Black, Zifan Wang, Matt Fredrikson, and Anupam Datta.
\newblock Consistent counterfactuals for deep models.
\newblock In \emph{International Conference on Learning Representations (ICLR)}, 2021.

\bibitem[Chen et~al.(2023)Chen, Wang, and Liu]{chen2023learning}
Yatong Chen, Jialu Wang, and Yang Liu.
\newblock Learning to incentivize improvements from strategic agents.
\newblock \emph{Transactions on Machine Learning Research {(TMLR)}}, 2023.

\bibitem[Chen et~al.(2020)Chen, Liu, and Podimata]{chen2020learning}
Yiling Chen, Yang Liu, and Chara Podimata.
\newblock Learning strategy-aware linear classifiers.
\newblock In \emph{Advances in Neural Information Processing Systems (NeurIPS)}, 2020.

\bibitem[Dandl et~al.(2020)Dandl, Molnar, Binder, and Bischl]{dandl2020multi}
Susanne Dandl, Christoph Molnar, Martin Binder, and Bernd Bischl.
\newblock Multi-objective counterfactual explanations.
\newblock In \emph{International Conference on Parallel Problem Solving from Nature (PPSN)}, 2020.

\bibitem[De~Toni et~al.(2024)De~Toni, Teso, Lepri, and Passerini]{de2024time}
Giovanni De~Toni, Stefano Teso, Bruno Lepri, and Andrea Passerini.
\newblock Time can invalidate algorithmic recourse.
\newblock \emph{arXiv preprint arXiv:2410.08007}, 2024.

\bibitem[Dutta et~al.(2022)Dutta, Long, Mishra, Tilli, and Magazzeni]{dutta2022tree}
Sanghamitra Dutta, Jason Long, Saumitra Mishra, Cecilia Tilli, and Daniele Magazzeni.
\newblock Robust counterfactual explanations for tree-based ensembles.
\newblock In \emph{International Conference on Machine Learning (ICML)}, 2022.

\bibitem[Efthymiou et~al.(2025)Efthymiou, Podimata, Sen, and Ziani]{efthymiou2025incentivizing}
Valia Efthymiou, Chara Podimata, Diptangshu Sen, and Juba Ziani.
\newblock Incentivizing desirable effort profiles in strategic classification: The role of causality and uncertainty.
\newblock \emph{arXiv preprint arXiv:2502.06749}, 2025.

\bibitem[Fokkema et~al.(2024)Fokkema, Garreau, and van Erven]{fokkema2024risks}
Hidde Fokkema, Damien Garreau, and Tim van Erven.
\newblock The risks of recourse in binary classification.
\newblock In \emph{International Conference on Artificial Intelligence and Statistics (AISTATS)}, 2024.

\bibitem[Fonseca et~al.(2023)Fonseca, Bell, Abrate, Bonchi, and Stoyanovich]{fonseca2023setting}
Jo{\~a}o Fonseca, Andrew Bell, Carlo Abrate, Francesco Bonchi, and Julia Stoyanovich.
\newblock Setting the right expectations: Algorithmic recourse over time.
\newblock In \emph{Proceedings of the 3rd ACM Conference on Equity and Access in Algorithms, Mechanisms, and Optimization}, 2023.

\bibitem[Fortin et~al.(2012)Fortin, {De Rainville}, Gardner, Parizeau, and Gagn\'e]{DEAP_JMLR2012}
F\'elix-Antoine Fortin, Fran\c{c}ois-Michel {De Rainville}, Marc-Andr\'e Gardner, Marc Parizeau, and Christian Gagn\'e.
\newblock {DEAP}: Evolutionary algorithms made easy.
\newblock \emph{Journal of Machine Learning Research}, 13:\penalty0 2171--2175, 2012.

\bibitem[Ghalme et~al.(2021)Ghalme, Nair, Eilat, Talgam-Cohen, and Rosenfeld]{ghalme2021strategic}
Ganesh Ghalme, Vineet Nair, Itay Eilat, Inbal Talgam-Cohen, and Nir Rosenfeld.
\newblock Strategic classification in the dark.
\newblock In \emph{International Conference on Machine Learning (ICML)}, 2021.

\bibitem[Haghtalab et~al.(2021)Haghtalab, Immorlica, Lucier, and Wang]{haghtalab2021maximizing}
Nika Haghtalab, Nicole Immorlica, Brendan Lucier, and Jack~Z Wang.
\newblock Maximizing welfare with incentive-aware evaluation mechanisms.
\newblock In \emph{International Joint Conferences on Artificial Intelligence (IJCAI)}, 2021.

\bibitem[Hamman et~al.(2023)Hamman, Noorani, Mishra, Magazzeni, and Dutta]{hamman2023robust}
Faisal Hamman, Erfaun Noorani, Saumitra Mishra, Daniele Magazzeni, and Sanghamitra Dutta.
\newblock Robust counterfactual explanations for neural networks with probabilistic guarantees.
\newblock In \emph{International Conference on Machine Learning (ICML)}, 2023.

\bibitem[Hamman et~al.(2024)Hamman, Noorani, Mishra, Magazzeni, and Dutta]{hamman2024robust}
Faisal Hamman, Erfaun Noorani, Saumitra Mishra, Daniele Magazzeni, and Sanghamitra Dutta.
\newblock Robust algorithmic recourse under model multiplicity with probabilistic guarantees.
\newblock \emph{IEEE Journal on Selected Areas in Information Theory}, 2024.

\bibitem[Hardt and Mendler-D{\"u}nner(2025)]{hardt2025performative}
Moritz Hardt and Celestine Mendler-D{\"u}nner.
\newblock Performative prediction: Past and future.
\newblock \emph{Statistical Science}, 40\penalty0 (3):\penalty0 417--436, 2025.

\bibitem[Hardt et~al.(2016)Hardt, Megiddo, Papadimitriou, and Wootters]{hardt2016strategic}
Moritz Hardt, Nimrod Megiddo, Christos Papadimitriou, and Mary Wootters.
\newblock Strategic classification.
\newblock In \emph{Conference on Innovations in Theoretical Computer Science (ITCS)}, 2016.

\bibitem[Hardt et~al.(2022)Hardt, Jagadeesan, and Mendler-D{\"u}nner]{hardt2022performative}
Moritz Hardt, Meena Jagadeesan, and Celestine Mendler-D{\"u}nner.
\newblock Performative power.
\newblock \emph{Advances in Neural Information Processing Systems (NeurIPS)}, 2022.

\bibitem[Harris et~al.(2022)Harris, Ngo, Stapleton, Heidari, and Wu]{harris2022strategic}
Keegan Harris, Dung Daniel~T Ngo, Logan Stapleton, Hoda Heidari, and Steven Wu.
\newblock Strategic instrumental variable regression: Recovering causal relationships from strategic responses.
\newblock In \emph{International Conference on Machine Learning (ICML)}, 2022.

\bibitem[Horowitz and Rosenfeld(2023)]{horowitz2023causal}
Guy Horowitz and Nir Rosenfeld.
\newblock Causal strategic classification: A tale of two shifts.
\newblock In \emph{International Conference on Machine Learning (ICML)}, 2023.

\bibitem[Jiang et~al.(2024)Jiang, Lan, Leofante, Rago, and Toni]{jiang2024provably}
Junqi Jiang, Jianglin Lan, Francesco Leofante, Antonio Rago, and Francesca Toni.
\newblock Provably robust and plausible counterfactual explanations for neural networks via robust optimisation.
\newblock In \emph{Asian Conference on Machine Learning}, 2024.

\bibitem[Karimi et~al.(2020{\natexlab{a}})Karimi, Barthe, Balle, and Valera]{karimi2020model}
Amir-Hossein Karimi, Gilles Barthe, Borja Balle, and Isabel Valera.
\newblock Model-agnostic counterfactual explanations for consequential decisions.
\newblock In \emph{International Conference on Artificial Intelligence and Statistics (AISTATS)}, 2020{\natexlab{a}}.

\bibitem[Karimi et~al.(2020{\natexlab{b}})Karimi, Von~K{\"u}gelgen, Sch{\"o}lkopf, and Valera]{karimi2020algorithmic}
Amir-Hossein Karimi, Julius Von~K{\"u}gelgen, Bernhard Sch{\"o}lkopf, and Isabel Valera.
\newblock Algorithmic recourse under imperfect causal knowledge: a probabilistic approach.
\newblock \emph{Advances in Neural Information Processing Systems (NeurIPS)}, 2020{\natexlab{b}}.

\bibitem[Karimi et~al.(2021)Karimi, Sch{\"o}lkopf, and Valera]{karimi2021algorithmic}
Amir-Hossein Karimi, Bernhard Sch{\"o}lkopf, and Isabel Valera.
\newblock Algorithmic recourse: from counterfactual explanations to interventions.
\newblock In \emph{Conference on Fairness, Accountability, and Transparency (FAccT)}, 2021.

\bibitem[Karimi et~al.(2022)Karimi, Barthe, Sch{\"o}lkopf, and Valera]{karimi2022survey}
Amir-Hossein Karimi, Gilles Barthe, Bernhard Sch{\"o}lkopf, and Isabel Valera.
\newblock A survey of algorithmic recourse: contrastive explanations and consequential recommendations.
\newblock \emph{ACM Computing Surveys}, 2022.

\bibitem[Kim and Perdomo(2022)]{kim2022making}
Michael~P Kim and Juan~C Perdomo.
\newblock Making decisions under outcome performativity.
\newblock \emph{arXiv preprint arXiv:2210.01745}, 2022.

\bibitem[Kleinberg and Raghavan(2020)]{kleinberg2020classifiers}
Jon Kleinberg and Manish Raghavan.
\newblock How do classifiers induce agents to invest effort strategically?
\newblock \emph{ACM Transactions on Economics and Computation (TEAC)}, 8\penalty0 (4):\penalty0 1--23, 2020.

\bibitem[K{\"o}nig et~al.(2021)K{\"o}nig, Freiesleben, and Grosse-Wentrup]{konig2021causal}
Gunnar K{\"o}nig, Timo Freiesleben, and Moritz Grosse-Wentrup.
\newblock A causal perspective on meaningful and robust algorithmic recourse.
\newblock \emph{ICML Workshop on Algorithmic Recourse}, 2021.

\bibitem[K\"onig et~al.(2023)K\"onig, Freiesleben, and Grosse-Wentrup]{konig2023improvement}
Gunnar K\"onig, Timo Freiesleben, and Moritz Grosse-Wentrup.
\newblock Improvement-focused causal recourse ({ICR}).
\newblock In \emph{AAAI Conference on Artificial Intelligence (AAAI)}, 2023.

\bibitem[Levanon and Rosenfeld(2021)]{levanon2021strategic}
Sagi Levanon and Nir Rosenfeld.
\newblock Strategic classification made practical.
\newblock In \emph{International Conference on Machine Learning (ICML)}, 2021.

\bibitem[MacKenzie and Millo(2003)]{mackenzie03millo}
Donald MacKenzie and Yuval Millo.
\newblock Constructing a market, performing theory: The historical sociology of a financial derivatives exchange.
\newblock \emph{American Journal of Sociology}, 109\penalty0 (1):\penalty0 107--145, 2003.

\bibitem[Miller et~al.(2020)Miller, Milli, and Hardt]{miller2020strategic}
John Miller, Smitha Milli, and Moritz Hardt.
\newblock Strategic classification is causal modeling in disguise.
\newblock In \emph{International Conference on Machine Learning (ICML)}, 2020.

\bibitem[Mishra et~al.(2021)Mishra, Dutta, Long, and Magazzeni]{mishra2021survey}
Saumitra Mishra, Sanghamitra Dutta, Jason Long, and Daniele Magazzeni.
\newblock A survey on the robustness of feature importance and counterfactual explanations.
\newblock \emph{arXiv preprint arXiv:2111.00358}, 2021.

\bibitem[Nguyen et~al.(2023)Nguyen, Bui, and Nguyen]{nguyen2023distributionally}
Duy Nguyen, Ngoc Bui, and Viet~Anh Nguyen.
\newblock Distributionally robust recourse action.
\newblock In \emph{International Conference on Learning Representations (ICLR)}, 2023.

\bibitem[OpenIntro(2020)]{openintro_satgpa}
OpenIntro.
\newblock Sat and gpa data set, 2020.
\newblock URL \url{https://www.openintro.org/data/index.php?data=satgpa}.

\bibitem[Pawelczyk et~al.(2020)Pawelczyk, Broelemann, and Kasneci]{pawelczyk2020counterfactual}
Martin Pawelczyk, Klaus Broelemann, and Gjergji Kasneci.
\newblock On counterfactual explanations under predictive multiplicity.
\newblock In \emph{Uncertainty in Artificial Intelligence (UAI)}, 2020.

\bibitem[Pearl(2009)]{pearl2009causality}
Judea Pearl.
\newblock \emph{Causality}.
\newblock Cambridge University Press, 2009.

\bibitem[Perdomo et~al.(2020{\natexlab{a}})Perdomo, Zrnic, Mendler-D{\"u}nner, and Hardt]{perdomo2020performative}
Juan Perdomo, Tijana Zrnic, Celestine Mendler-D{\"u}nner, and Moritz Hardt.
\newblock Performative prediction.
\newblock In \emph{International Conference on Machine Learning (ICML)}, 2020{\natexlab{a}}.

\bibitem[Perdomo et~al.(2020{\natexlab{b}})Perdomo, Zrnic, Mendler{-}D{\"u}nner, and Hardt]{PerdomoZMH20}
Juan~C. Perdomo, Tijana Zrnic, Celestine Mendler{-}D{\"u}nner, and Moritz Hardt.
\newblock Performative prediction.
\newblock In \emph{International Conference on Machine Learning (ICML)}, 2020{\natexlab{b}}.

\bibitem[Peters et~al.(2017)Peters, Janzing, and Sch{\"o}lkopf]{peters2017elements}
Jonas Peters, Dominik Janzing, and Bernhard Sch{\"o}lkopf.
\newblock \emph{Elements of causal inference: foundations and learning algorithms}.
\newblock The MIT Press, 2017.

\bibitem[Pfister et~al.(2021)Pfister, Williams, Peters, Aebersold, and B{\"u}hlmann]{pfister2021stabilizing}
Niklas Pfister, Evan~G Williams, Jonas Peters, Ruedi Aebersold, and Peter B{\"u}hlmann.
\newblock Stabilizing variable selection and regression.
\newblock \emph{The Annals of Applied Statistics}, 15\penalty0 (3):\penalty0 1220--1246, 2021.

\bibitem[Rawal et~al.(2020)Rawal, Kamar, and Lakkaraju]{rawal2020algorithmic}
Kaivalya Rawal, Ece Kamar, and Himabindu Lakkaraju.
\newblock Algorithmic recourse in the wild: Understanding the impact of data and model shifts.
\newblock \emph{arXiv preprint arXiv:2012.11788}, 2020.

\bibitem[Tsirtsis et~al.(2024)Tsirtsis, Tabibian, Khajehnejad, Singla, Sch{\"o}lkopf, and Gomez-Rodriguez]{tsirtsis2024optimal}
Stratis Tsirtsis, Behzad Tabibian, Moein Khajehnejad, Adish Singla, Bernhard Sch{\"o}lkopf, and Manuel Gomez-Rodriguez.
\newblock Optimal decision making under strategic behavior.
\newblock \emph{Management Science}, 70\penalty0 (12):\penalty0 8506--8519, 2024.

\bibitem[Upadhyay et~al.(2021)Upadhyay, Joshi, and Lakkaraju]{upadhyay2021towards}
Sohini Upadhyay, Shalmali Joshi, and Himabindu Lakkaraju.
\newblock Towards robust and reliable algorithmic recourse.
\newblock In \emph{Advances in Neural Information Processing Systems (NeurIPS)}, 2021.

\bibitem[Ustun et~al.(2019)Ustun, Spangher, and Liu]{ustun2019actionable}
Berk Ustun, Alexander Spangher, and Yang Liu.
\newblock Actionable recourse in linear classification.
\newblock In \emph{Conference on Fairness, Accountability, and Transparency (FAccT)}, 2019.

\bibitem[Venkatasubramanian and Alfano(2020)]{venkatasubramanian2020philosophical}
Suresh Venkatasubramanian and Mark Alfano.
\newblock The philosophical basis of algorithmic recourse.
\newblock In \emph{Conference on Fairness, Accountability, and Transparency (FAccT)}, 2020.

\bibitem[Verma et~al.(2022)Verma, Hines, and Dickerson]{verma2022amortized}
Sahil Verma, Keegan Hines, and John~P Dickerson.
\newblock Amortized generation of sequential algorithmic recourses for black-box models.
\newblock In \emph{AAAI Conference on Artificial Intelligence (AAAI)}, 2022.

\bibitem[Wachter et~al.(2017)Wachter, Mittelstadt, and Russell]{wachter2017counterfactual}
Sandra Wachter, Brent Mittelstadt, and Chris Russell.
\newblock {Counterfactual explanations without opening the black box: Automated decisions and the GDPR}.
\newblock \emph{Harv. JL \& Tech.}, 31:\penalty0 841, 2017.

\bibitem[Yeh(2009)]{default_of_credit_card_clients_350}
I-Cheng Yeh.
\newblock {Default of Credit Card Clients}.
\newblock UCI Machine Learning Repository, 2009.

\bibitem[Zezulka and Genin(2024)]{zezulka2024fair}
Sebastian Zezulka and Konstantin Genin.
\newblock From the fair distribution of predictions to the fair distribution of social goods: Evaluating the impact of fair machine learning on long-term unemployment.
\newblock In \emph{Conference on Fairness, Accountability, and Transparency (FAccT)}, 2024.

\bibitem[Zrnic et~al.(2021)Zrnic, Mazumdar, Sastry, and Jordan]{zrnic2021leads}
Tijana Zrnic, Eric Mazumdar, Shankar Sastry, and Michael Jordan.
\newblock Who leads and who follows in strategic classification?
\newblock In \emph{Advances in Neural Information Processing Systems (NeurIPS)}, 2021.

\end{thebibliography}

\newpage
\section*{NeurIPS Paper Checklist}

\begin{enumerate}

\item {\bf Claims}
    \item[] Question: Do the main claims made in the abstract and introduction accurately reflect the paper's contributions and scope?
    \item[] Answer: \answerYes{} %
    \item[] Justification: Our main claim, that standard counterfactual explanations and recourse methods may invalidate themselves via their performative effects, is substantiated with both theoretical and empirical results. 
    \item[] Guidelines:
    \begin{itemize}
        \item The answer NA means that the abstract and introduction do not include the claims made in the paper.
        \item The abstract and/or introduction should clearly state the claims made, including the contributions made in the paper and important assumptions and limitations. A No or NA answer to this question will not be perceived well by the reviewers. 
        \item The claims made should match theoretical and experimental results, and reflect how much the results can be expected to generalize to other settings. 
        \item It is fine to include aspirational goals as motivation as long as it is clear that these goals are not attained by the paper. 
    \end{itemize}
\item {\bf Limitations}
    \item[] Question: Does the paper discuss the limitations of the work performed by the authors?
    \item[] Answer: \answerYes{} %
    \item[] Justification: All our results come with clear assumptions. Over the course of the paper, and particularily in Section 7, we discuss how realistic they are, and how they could be relaxed in future work.  
    \item[] Guidelines:
    \begin{itemize}
        \item The answer NA means that the paper has no limitation while the answer No means that the paper has limitations, but those are not discussed in the paper. 
        \item The authors are encouraged to create a separate "Limitations" section in their paper.
        \item The paper should point out any strong assumptions and how robust the results are to violations of these assumptions (that is, independence assumptions, noiseless settings, model well-specification, asymptotic approximations only holding locally). The authors should reflect on how these assumptions might be violated in practice and what the implications would be.
        \item The authors should reflect on the scope of the claims made, that is, if the approach was only tested on a few datasets or with a few runs. In general, empirical results often depend on implicit assumptions, which should be articulated.
        \item The authors should reflect on the factors that influence the performance of the approach. For example, a facial recognition algorithm may perform poorly when image resolution is low or images are taken in low lighting. Or a speech-to-text system might not be used reliably to provide closed captions for online lectures because it fails to handle technical jargon.
        \item The authors should discuss the computational efficiency of the proposed algorithms and how they scale with dataset size.
        \item If applicable, the authors should discuss possible limitations of their approach to address problems of privacy and fairness.
        \item While the authors might fear that complete honesty about limitations might be used by reviewers as grounds for rejection, a worse outcome might be that reviewers discover limitations that aren't acknowledged in the paper. The authors should use their best judgment and recognize that individual actions in favor of transparency play an important role in developing norms that preserve the integrity of the community. Reviewers will be specifically instructed to not penalize honesty concerning limitations.
    \end{itemize}
\item {\bf Theory Assumptions and Proofs}
    \item[] Question: For each theoretical result, does the paper provide the full set of assumptions and a complete (and correct) proof?
    \item[] Answer: \answerYes{} %
    \item[] Justification: Each theoretical result is stated with its full set of assumptions. The full proofs are provided in the Appendix.
    \item[] Guidelines:
    \begin{itemize}
        \item The answer NA means that the paper does not include theoretical results. 
        \item All the theorems, formulas, and proofs in the paper should be numbered and cross-referenced.
        \item All assumptions should be clearly stated or referenced in the statement of any theorems.
        \item The proofs can either appear in the main paper or the supplemental material, but if they appear in the supplemental material, the authors are encouraged to provide a short proof sketch to provide intuition. 
        \item Inversely, any informal proof provided in the core of the paper should be complemented by formal proofs provided in appendix or supplemental material.
        \item Theorems and Lemmas that the proof relies upon should be properly referenced. 
    \end{itemize}
  \item {\bf Experimental Result Reproducibility}
    \item[] Question: Does the paper fully disclose all the information needed to reproduce the main experimental results of the paper to the extent that it affects the main claims and/or conclusions of the paper (regardless of whether the code and data are provided or not)?
    \item[] Answer: \answerYes{} %
    \item[] Justification: The appendix contains a detailed description of how the experiments are performed, which settings are used and which dataset are used. Furthermore, all code required to run the experiments is publicly available on GitHub. A link can be found in the Appendix. 
    \item[] Guidelines:
    \begin{itemize}
        \item The answer NA means that the paper does not include experiments.
        \item If the paper includes experiments, a No answer to this question will not be perceived well by the reviewers: Making the paper reproducible is important, regardless of whether the code and data are provided or not.
        \item If the contribution is a dataset and/or model, the authors should describe the steps taken to make their results reproducible or verifiable. 
        \item Depending on the contribution, reproducibility can be accomplished in various ways. For example, if the contribution is a novel architecture, describing the architecture fully might suffice, or if the contribution is a specific model and empirical evaluation, it may be necessary to either make it possible for others to replicate the model with the same dataset, or provide access to the model. In general. releasing code and data is often one good way to accomplish this, but reproducibility can also be provided via detailed instructions for how to replicate the results, access to a hosted model (e.g., in the case of a large language model), releasing of a model checkpoint, or other means that are appropriate to the research performed.
        \item While NeurIPS does not require releasing code, the conference does require all submissions to provide some reasonable avenue for reproducibility, which may depend on the nature of the contribution. For example
        \begin{enumerate}
            \item If the contribution is primarily a new algorithm, the paper should make it clear how to reproduce that algorithm.
            \item If the contribution is primarily a new model architecture, the paper should describe the architecture clearly and fully.
            \item If the contribution is a new model (e.g., a large language model), then there should either be a way to access this model for reproducing the results or a way to reproduce the model (e.g., with an open-source dataset or instructions for how to construct the dataset).
            \item We recognize that reproducibility may be tricky in some cases, in which case authors are welcome to describe the particular way they provide for reproducibility. In the case of closed-source models, it may be that access to the model is limited in some way (e.g., to registered users), but it should be possible for other researchers to have some path to reproducing or verifying the results.
        \end{enumerate}
    \end{itemize}
\item {\bf Open access to data and code}
    \item[] Question: Does the paper provide open access to the data and code, with sufficient instructions to faithfully reproduce the main experimental results, as described in supplemental material?
    \item[] Answer: \answerYes{} %
    \item[] Justification: All data and code is publicly available. 
    \item[] Guidelines:
    \begin{itemize}
        \item The answer NA means that paper does not include experiments requiring code.
        \item Please see the NeurIPS code and data submission guidelines (\url{https://nips.cc/public/guides/CodeSubmissionPolicy}) for more details.
        \item While we encourage the release of code and data, we understand that this might not be possible, so “No” is an acceptable answer. Papers cannot be rejected simply for not including code, unless this is central to the contribution (e.g., for a new open-source benchmark).
        \item The instructions should contain the exact command and environment needed to run to reproduce the results. See the NeurIPS code and data submission guidelines (\url{https://nips.cc/public/guides/CodeSubmissionPolicy}) for more details.
        \item The authors should provide instructions on data access and preparation, including how to access the raw data, preprocessed data, intermediate data, and generated data, etc.
        \item The authors should provide scripts to reproduce all experimental results for the new proposed method and baselines. If only a subset of experiments are reproducible, they should state which ones are omitted from the script and why.
        \item At submission time, to preserve anonymity, the authors should release anonymized versions (if applicable).
        \item Providing as much information as possible in supplemental material (appended to the paper) is recommended, but including URLs to data and code is permitted.
    \end{itemize}
\item {\bf Experimental Setting/Details}
    \item[] Question: Does the paper specify all the training and test details (e.g., data splits, hyperparameters, how they were chosen, type of optimizer, etc.) necessary to understand the results?
    \item[] Answer: \answerYes{} %
    \item[] Justification: Yes, the experiment section comes with a discussion as to how the results are obtained and what conclusions can be drawn, and all relevant details on hyperparameters etc. are reported in the supplementary material.
    \item[] Guidelines:
    \begin{itemize}
        \item The answer NA means that the paper does not include experiments.
        \item The experimental setting should be presented in the core of the paper to a level of detail that is necessary to appreciate the results and make sense of them.
        \item The full details can be provided either with the code, in appendix, or as supplemental material.
    \end{itemize}
\item {\bf Experiment Statistical Significance}
    \item[] Question: Does the paper report error bars suitably and correctly defined or other appropriate information about the statistical significance of the experiments?
    \item[] Answer: \answerYes{} %
    \item[] Justification: The reported results are averages over 10 runs with different seeds. The differences between methods are well outside the standard deviations, the standard deviations are reported as a table in the appendix. To maintain readability, the standard deviations are not visualized in the main figure.
    \item[] Guidelines:
    \begin{itemize}
        \item The answer NA means that the paper does not include experiments.
        \item The authors should answer "Yes" if the results are accompanied by error bars, confidence intervals, or statistical significance tests, at least for the experiments that support the main claims of the paper.
        \item The factors of variability that the error bars are capturing should be clearly stated (for example, train/test split, initialization, random drawing of some parameter, or overall run with given experimental conditions).
        \item The method for calculating the error bars should be explained (closed form formula, call to a library function, bootstrap, etc.)
        \item The assumptions made should be given (e.g., Normally distributed errors).
        \item It should be clear whether the error bar is the standard deviation or the standard error of the mean.
        \item It is OK to report 1-sigma error bars, but one should state it. The authors should preferably report a 2-sigma error bar than state that they have a 96\% CI, if the hypothesis of Normality of errors is not verified.
        \item For asymmetric distributions, the authors should be careful not to show in tables or figures symmetric error bars that would yield results that are out of range (e.g. negative error rates).
        \item If error bars are reported in tables or plots, The authors should explain in the text how they were calculated and reference the corresponding figures or tables in the text.
    \end{itemize}
\item {\bf Experiments Compute Resources}
    \item[] Question: For each experiment, does the paper provide sufficient information on the computer resources (type of compute workers, memory, time of execution) needed to reproduce the experiments?
    \item[] Answer: \answerYes{} %
    \item[] Justification: The required computational resources are reported in the supplementary material.
    \item[] Guidelines:
    \begin{itemize}
        \item The answer NA means that the paper does not include experiments.
        \item The paper should indicate the type of compute workers CPU or GPU, internal cluster, or cloud provider, including relevant memory and storage.
        \item The paper should provide the amount of compute required for each of the individual experimental runs as well as estimate the total compute. 
        \item The paper should disclose whether the full research project required more compute than the experiments reported in the paper (e.g., preliminary or failed experiments that didn't make it into the paper). 
    \end{itemize}
\item {\bf Code Of Ethics}
    \item[] Question: Does the research conducted in the paper conform, in every respect, with the NeurIPS Code of Ethics \url{https://neurips.cc/public/EthicsGuidelines}?
    \item[] Answer: \answerYes{} %
    \item[] Justification: No experiments involving human participants were conducted.
    The research highlights an important issue in explainabilty and is expected to have a positive impact on society.
    \item[] Guidelines:
    \begin{itemize}
        \item The answer NA means that the authors have not reviewed the NeurIPS Code of Ethics.
        \item If the authors answer No, they should explain the special circumstances that require a deviation from the Code of Ethics.
        \item The authors should make sure to preserve anonymity (e.g., if there is a special consideration due to laws or regulations in their jurisdiction).
    \end{itemize}
\item {\bf Broader Impacts}
    \item[] Question: Does the paper discuss both potential positive societal impacts and negative societal impacts of the work performed?
    \item[] Answer: \answerYes{} %
    \item[] Justification: This paper establishes a theoretical characterization 
    for when recourse explanations can cause a distribution shift and invalidate themselves, posing a significant burden on affected individuals.
    Thereby it reveals an important issue in algorithmic recourse, and suggests what can be done to fix it.
    These methods are used in practice, and pointing out such a flaw might change the way practitioners use these explanations. This is discussed over the course of the paper, particularly in the introduction and the discussion sections.
    \item[] Guidelines:
    \begin{itemize}
        \item The answer NA means that there is no societal impact of the work performed.
        \item If the authors answer NA or No, they should explain why their work has no societal impact or why the paper does not address societal impact.
        \item Examples of negative societal impacts include potential malicious or unintended uses (e.g., disinformation, generating fake profiles, surveillance), fairness considerations (e.g., deployment of technologies that could make decisions that unfairly impact specific groups), privacy considerations, and security considerations.
        \item The conference expects that many papers will be foundational research and not tied to particular applications, let alone deployments. However, if there is a direct path to any negative applications, the authors should point it out. For example, it is legitimate to point out that an improvement in the quality of generative models could be used to generate deepfakes for disinformation. On the other hand, it is not needed to point out that a generic algorithm for optimizing neural networks could enable people to train models that generate Deepfakes faster.
        \item The authors should consider possible harms that could arise when the technology is being used as intended and functioning correctly, harms that could arise when the technology is being used as intended but gives incorrect results, and harms following from (intentional or unintentional) misuse of the technology.
        \item If there are negative societal impacts, the authors could also discuss possible mitigation strategies (e.g., gated release of models, providing defenses in addition to attacks, mechanisms for monitoring misuse, mechanisms to monitor how a system learns from feedback over time, improving the efficiency and accessibility of ML).
    \end{itemize}
\item {\bf Safeguards}
    \item[] Question: Does the paper describe safeguards that have been put in place for responsible release of data or models that have a high risk for misuse (e.g., pretrained language models, image generators, or scraped datasets)?
    \item[] Answer: \answerNA{} %
    \item[] Justification: No such data is used in this paper.
    \item[] Guidelines:
    \begin{itemize}
        \item The answer NA means that the paper poses no such risks.
        \item Released models that have a high risk for misuse or dual-use should be released with necessary safeguards to allow for controlled use of the model, for example by requiring that users adhere to usage guidelines or restrictions to access the model or implementing safety filters. 
        \item Datasets that have been scraped from the Internet could pose safety risks. The authors should describe how they avoided releasing unsafe images.
        \item We recognize that providing effective safeguards is challenging, and many papers do not require this, but we encourage authors to take this into account and make a best faith effort.
    \end{itemize}
\item {\bf Licenses for existing assets}
    \item[] Question: Are the creators or original owners of assets (e.g., code, data, models), used in the paper, properly credited and are the license and terms of use explicitly mentioned and properly respected?
    \item[] Answer: \answerYes{} %
    \item[] Justification: Most of the work is original work. Wherever datasets from other sources are used, they are properly cited and have the proper licenses to be used in our work. 
    \item[] Guidelines:
    \begin{itemize}
        \item The answer NA means that the paper does not use existing assets.
        \item The authors should cite the original paper that produced the code package or dataset.
        \item The authors should state which version of the asset is used and, if possible, include a URL.
        \item The name of the license (e.g., CC-BY 4.0) should be included for each asset.
        \item For scraped data from a particular source (e.g., website), the copyright and terms of service of that source should be provided.
        \item If assets are released, the license, copyright information, and terms of use in the package should be provided. For popular datasets, \url{paperswithcode.com/datasets} has curated licenses for some datasets. Their licensing guide can help determine the license of a dataset.
        \item For existing datasets that are re-packaged, both the original license and the license of the derived asset (if it has changed) should be provided.
        \item If this information is not available online, the authors are encouraged to reach out to the asset's creators.
    \end{itemize}
\item {\bf New Assets}
    \item[] Question: Are new assets introduced in the paper well documented and is the documentation provided alongside the assets?
    \item[] Answer: \answerNA{} %
    \item[] Justification: No such assets are introduced. The main contributions are formal theoretical results. 
    \item[] Guidelines:
    \begin{itemize}
        \item The answer NA means that the paper does not release new assets.
        \item Researchers should communicate the details of the dataset/code/model as part of their submissions via structured templates. This includes details about training, license, limitations, etc. 
        \item The paper should discuss whether and how consent was obtained from people whose asset is used.
        \item At submission time, remember to anonymize your assets (if applicable). You can either create an anonymized URL or include an anonymized zip file.
    \end{itemize}
\item {\bf Crowdsourcing and Research with Human Subjects}
    \item[] Question: For crowdsourcing experiments and research with human subjects, does the paper include the full text of instructions given to participants and screenshots, if applicable, as well as details about compensation (if any)? 
    \item[] Answer: \answerNA{} %
    \item[] Justification: This paper does not involve crowdsourcing nor research with human subjects. 
    \item[] Guidelines:
    \begin{itemize}
        \item The answer NA means that the paper does not involve crowdsourcing nor research with human subjects.
        \item Including this information in the supplemental material is fine, but if the main contribution of the paper involves human subjects, then as much detail as possible should be included in the main paper. 
        \item According to the NeurIPS Code of Ethics, workers involved in data collection, curation, or other labor should be paid at least the minimum wage in the country of the data collector. 
    \end{itemize}
\item {\bf Institutional Review Board (IRB) Approvals or Equivalent for Research with Human Subjects}
    \item[] Question: Does the paper describe potential risks incurred by study participants, whether such risks were disclosed to the subjects, and whether Institutional Review Board (IRB) approvals (or an equivalent approval/review based on the requirements of your country or institution) were obtained?
    \item[] Answer: \answerNA{} %
    \item[] Justification: This paper does not involve crowdsourcing nor research with human subjects. 
    \item[] Guidelines:
    \begin{itemize}
        \item The answer NA means that the paper does not involve crowdsourcing nor research with human subjects.
        \item Depending on the country in which research is conducted, IRB approval (or equivalent) may be required for any human subjects research. If you obtained IRB approval, you should clearly state this in the paper. 
        \item We recognize that the procedures for this may vary significantly between institutions and locations, and we expect authors to adhere to the NeurIPS Code of Ethics and the guidelines for their institution. 
        \item For initial submissions, do not include any information that would break anonymity (if applicable), such as the institution conducting the review.
    \end{itemize}
\item {\bf Declaration of LLM usage}
    \item[] Question: Does the paper describe the usage of LLMs if it is an important, original, or non-standard component of the core methods in this research? Note that if the LLM is used only for writing, editing, or formatting purposes and does not impact the core methodology, scientific rigorousness, or originality of the research, declaration is not required.
    \item[] Answer: \answerNA{} %
    \item[] Justification: No LLMs were involved in this paper.
    \item[] Guidelines:
    \begin{itemize}
        \item The answer NA means that the core method development in this research does not involve LLMs as any important, original, or non-standard components.
        \item Please refer to our LLM policy (\url{https://neurips.cc/Conferences/2025/LLM}) for what should or should not be described.
    \end{itemize}
\end{enumerate}

\newpage
\appendix

\section{Causality Preliminaries}
\label{sec:sep-criteria}
Causal graphs cannot only be used to visualize causal relationships but, under certain assumptions, also to reason about conditional independencies in the data.
To do so, we first introduce the so-called $d$-separation criterion.
\begin{defn}[$d$-separation]
    Two variable sets $X, Y$ are $d$\emph{-separated} given a variable set  $Z$, 
    denoted as $X \dsep Y  \mid Z$,
    if and only if, for every path $p$ from $X$ to $Y$ 
    one of the following holds
    \begin{enumerate}[label=(\roman*)]
        \item $p$ contains a chain $i \to m \to j$ or a fork $i \leftarrow m \to j$ 
            where $m \in Z$. 
        \item $p$ contains a collider $i \to m \leftarrow j$ such that $m$ and all of its
            descendants are not in $Z$. 
    \end{enumerate}
\end{defn}
Under certain assumptions, $d$-separation in the graph can be linked to conditional (in)dependencies in the data:
First, if the so-called Markov property holds, $d$-separation in the graph to implies independence in the data. Second, if faithfulness holds, independence in the data implies $d$-separation in the graph.
\begin{defn}[Markov Property]\label{def:mark_prop}
    Given a DAG $\mathcal{G}$ and a joint distribution $P$ over the nodes, this distribution is said
    to satisfy the \emph{Markov property} with respect to the DAG $\mathcal{G}$ if
    \begin{align*}
        X \dsep Y  \mid  Z \implies X \independent Y  \mid  Z
    .\end{align*}
    for all disjoint vertex sets $X, Y, Z$
\end{defn}
\begin{defn}[Faithfulness]
    Given a DAG $\mathcal{G}$ and a joint distribution $P$ over the nodes, this distribution is said
    to be \emph{faitful} to the DAG $\mathcal{G}$ if
    \begin{align*}
        X \independent Y  \mid Z \implies X \dsep Y  \mid  Z
    .\end{align*}
\end{defn}
Throughout the paper, we only use the graph to read off conditional independence relationships. For instance, in Theorem~\ref{thm:two-sources} we only say that there \emph{may} be a dependence via the open paths but make no claim that it must be present in the data. 
As such, we only need the Markov property for our results.
The Markov property is met if the data is induced by SCMs with independent noise terms, or more generally, if there are no unobserved confounders.
\begin{proposition}\label{prop:mark_prop}
    Assume that $P$ is induced by an SCM with graph $\mathcal{G}$. Then, $P$ is
    Markovian with respect to $\mathcal{G}$. 
\end{proposition}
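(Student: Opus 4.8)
The plan is to prove the statement in the two classical steps. First I would show that the distribution $P$ induced by the acyclic SCM $\mathbb{M}$ admits the recursive (``causal'') factorization along $\mathcal{G}$, i.e.\ $P(v) = \prod_i P(v_i \mid v_{\PA(i)})$. Then I would invoke the equally classical fact that any distribution with such a recursive factorization satisfies the global Markov property with respect to $\mathcal{G}$, namely $X \dsep Y \mid Z$ implies $X \independent Y \mid Z$ (see, e.g., \citet{pearl2009causality}). The first step is where the specific structure of the SCM is used; the second is standard and essentially graph-theoretic.

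\textbf{Step 1 (the SCM induces the causal factorization).} Since $\mathbb{M}$ is acyclic, fix a topological ordering $V_1, \dots, V_n$ of the variables in $\mathbb{X}$, with associated exogenous noises $U_1, \dots, U_n$, so that the index set $\PA(i)$ of parents of $V_i$ satisfies $\PA(i) \subseteq \{1, \dots, i-1\}$. By the chain rule, $P(v_1, \dots, v_n) = \prod_i P(v_i \mid v_1, \dots, v_{i-1})$, so it suffices to show $P(v_i \mid v_1, \dots, v_{i-1}) = P(v_i \mid v_{\PA(i)})$ for each $i$. Unrolling the structural equations, each $V_j$ with $j < i$ is a deterministic function of $\{U_1, \dots, U_{i-1}\}$ alone, because in a topological ordering all ancestors of $V_j$ --- and hence their noises --- precede $V_j$; by independence of the exogenous variables, $U_i$ is therefore independent of $(V_1, \dots, V_{i-1})$. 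Conditioning on $V_{\PA(i)} = v_{\PA(i)}$, we have $V_i = f_i(v_{\PA(i)}, U_i)$, a function of $U_i$ only, so the conditional law of $V_i$ given $(V_1, \dots, V_{i-1})$ depends on these variables only through $V_{\PA(i)}$. This yields $P(v) = \prod_i P(v_i \mid v_{\PA(i)})$.

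\textbf{Step 2 (factorization implies the global Markov property).} Let $X, Y, Z$ be disjoint vertex sets with $X \dsep Y \mid Z$ in $\mathcal{G}$; the goal is to show $X \independent Y \mid Z$ under $P$. The route I would take is the standard moralization argument: (i) restrict attention to the ancestral set $A := \An(X \cup Y \cup Z)$, noting that the marginal $P_A$ again factorizes recursively along the induced subgraph $\mathcal{G}_A$, since $A$ is ancestrally closed and the variables outside $A$ can be marginalized out in reverse topological order; (ii) pass to the moral graph $(\mathcal{G}_A)^m$, obtained by connecting any two vertices with a common child in $\mathcal{G}_A$ and then dropping orientations, under which $P_A$ satisfies the undirected clique factorization and hence --- by the easy direction of the Hammersley--Clifford equivalence --- the global Markov property of $(\mathcal{G}_A)^m$; (iii) use the combinatorial lemma that $X \dsep Y \mid Z$ in $\mathcal{G}$ holds if and only if $Z$ separates $X$ from $Y$ in $(\mathcal{G}_A)^m$. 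Combining (ii) and (iii) gives $X \independent Y \mid Z$.

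\textbf{Main obstacle.} Step 1 is routine bookkeeping once a topological order is fixed. The real content is Step 2, and within it the combinatorial equivalence in (iii) between $d$-separation in the DAG and separation in the moralized ancestral graph --- the probabilistic half, factorization $\Rightarrow$ undirected global Markov, is comparatively mechanical. Since this step is classical and not specific to the recourse setting, in the write-up I would prove Step 1 in full and either cite Step 2 wholesale or relegate the moralization/separation lemma to a reference. Finally, no extra hypotheses are needed: the SCM of the paper comes equipped with mutually independent exogenous variables and (by causal sufficiency) no hidden confounders, which is exactly what Step 1 requires.
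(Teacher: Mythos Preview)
Your proposal is correct: it is the standard two-step argument (SCM $\Rightarrow$ recursive factorization $\Rightarrow$ global Markov via moralization of the ancestral subgraph), and both steps are sound as sketched. The paper, however, does not prove this proposition at all---it simply cites Proposition~6.31 in \citet{peters2017elements}. So rather than taking a different route, you have actually supplied the classical proof that the paper defers to the literature; your write-up is strictly more detailed than what the paper contains, and your remark that Step~2 could reasonably be relegated to a reference is exactly what the authors do for the entire statement.
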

\begin{proof}
    See for example Proposition~$6.31$ in \citep{peters2017elements}. 
\end{proof}

\section{Individualized and Subpopulation-based Causal Recourse Methods}
\label{app:ind-vs-sub}

\citet{karimi2020algorithmic} propose two versions of Causal Recourse (CR): an \emph{individualized} and \emph{subpopulation-based} version.
The individualized version leverages counterfactuals to make causal effect estimates, which take the information about the individual obtained from the observed features $X$ into account. 
Therefore, so-called rung 3 causal knowledge is required, which is, for example, given by an SCM \citep{pearl2009causality}. 

However, the SCM is rarely readily available in practice, and thus the subpopulation-based version was introduced as a fallback that only requires access to the causal graph.
The causal graph is easier to obtain, since it only captures whether nodes are causally related, but does not detail the functional form of their relationship. 
However, causal graphs do not enable the estimation of individualized treatment effects (rung 3 on Pearl's ladder of causation \citep{pearl2009causality}). 
Instead, causal graphs can be used to compute conditional average treatment effects, that is, interventional distributions for a subpopulation or similar individuals (rung 2).
Specifically, causal graph-based treatment effect estimates can account for characteristics that are assumed to be unaffected by the intervention.
In the graph, these are the non-descendants of the intervened-upon variables $X_{\Nd(I_a)}$.

In the background sections, we introduced the individualized versions of the causal recourse methods. The definitions of the subpopulation-based versions of CR and ICR only differ in the conditioning set (highlighted \textcolor{actioncolor}{red}):
\begin{align*}
    \CRmap{}(x) = \argmin_{a} c(x, a) \quad s.t. \quad P({\Classifier(X^{p}) = 1} \mid  x_{\textcolor{actioncolor}{\Nd(I_a)}}, \Do{a})\ge t_r,
    \end{align*}
and
    \begin{align*}
        \IRmap{}(x) = \argmin_{a} c(x, a) \quad s.t. \quad P({\Lp = 1} \mid x_{\textcolor{actioncolor}{\Nd(I_a)}}, \Do{a}) \ge t_r. 
    \end{align*}

\section{Proofs of Section \ref{sec:characterization}}
\label{sec:graph-proofs}

Let us make two general remarks, before we start with the proofs.
First, when the context allows, we shorten the notation of conditional distributions. Specifically, we sometimes drop the random variable and
only write the corresponding observation. For example,
\begin{align*}
    P(\Lp=1  \mid  X^{p}=x^{p}, U_C = u_C) = P(\Lp=1 \mid x^{p}, u_C)
.\end{align*}
Second, some of our results  prove that the conditional distribution of the underlying potentially continuous target $Y$ remains the same. These results imply performative validity.
\begin{lemma}
    \label{lem:score_label}
    If $P(Y^{p} \le y \mid X^{p} = x) = P(Y\le y \mid X = x)$ for all 
    $(x, y) \in \mathcal{X}\times \mathcal{Y}$,  
    then  $P(\Lp = 1  \mid X^{p} = x) = P(L = 1 \mid X = x)$ for all $x \in  \mathcal{X}$. 
\end{lemma}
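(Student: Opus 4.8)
The plan is to unwind the definition $L := \ind[Y \ge 0]$ (and likewise $L^{p} := \ind[Y^{p} \ge 0]$) so as to translate the claim into a statement purely about the conditional laws of $Y$ and $Y^{p}$. Writing $P(L = 1 \mid X = x) = P(Y \ge 0 \mid X = x) = 1 - P(Y < 0 \mid X = x)$, and similarly for the post-recourse quantities, it suffices to show that $P(Y < 0 \mid X = x) = P(Y^{p} < 0 \mid X^{p} = x)$ for every $x \in \mathcal{X}$, and then take complements.

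The only subtlety is that the hypothesis gives equality of the conditional CDFs, i.e.\ of $P(Y \le y \mid \cdot)$ (a non-strict inequality), whereas the event $\{Y < 0\}$ is a strict one, and these can differ when the conditional law of $Y$ puts an atom at $0$. To bridge this I would express $\{Y < 0\}$ as the increasing union $\bigcup_{n \ge 1}\{Y \le -1/n\}$ and apply continuity from below of probability measures: $P(Y < 0 \mid X = x) = \lim_{n \to \infty} P(Y \le -1/n \mid X = x)$. By assumption each term equals $P(Y^{p} \le -1/n \mid X^{p} = x)$, so the two monotone limits coincide, yielding $P(Y < 0 \mid X = x) = P(Y^{p} < 0 \mid X^{p} = x)$ and hence $P(L = 1 \mid X = x) = P(L^{p} = 1 \mid X^{p} = x)$.

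There is essentially no hard step: the lemma is a bookkeeping device that lets later results about the continuous target $Y$ (equality of conditional distributions) be read off as statements about the binarized label $L$, and thence as performative validity via Definition~\ref{def:performative-validity}. The one point to handle carefully is the possible atom at $0$, which is why the argument uses the monotone-limit identity rather than invoking continuity of the CDF at $0$, which need not hold.
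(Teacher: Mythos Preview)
Your argument is correct and follows the same idea as the paper's one-line proof, which simply writes $P(L^{p}=1\mid X^{p}=x)=P(Y^{p}\le 0\mid X^{p}=x)=P(Y\le 0\mid X=x)=P(L=1\mid X=x)$ (with a sign slip, since $L=\ind[Y\ge 0]$). Your treatment is in fact more careful: by passing to the monotone limit $P(Y<0\mid\cdot)=\lim_n P(Y\le -1/n\mid\cdot)$ you correctly handle a possible atom at $0$, a point the paper glosses over.
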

\begin{proof}
The following string of equalities proves the result,
\begin{align*}
    P(\Lp = 1  \mid X^{p} = x)  
    =P(Y^{p}\le 0  \mid X^{p} = x)  
    =P(Y\le 0  \mid X = x)  
    =P(L=1  \mid X = x)  
.\end{align*}
\end{proof}
As follows, we repeat and prove the results from Section \ref{sec:characterization}.
\characterization*
\begin{proof}
    In the first two steps, we prove the equivalence between the conditional independence and the stability of the predictor. In the last step, we use the equivalence to show that conditional independence of the action implies performative validity.
    
    \textbf{Step 1 ($\Lp \independent \ind[\mathcolor{actioncolor}{A}=\Do{\varnothing}]  \mid  X^p \Leftrightarrow P(\Lp \mid X^p) = P( \Lp \mid X^p, \mathcolor{actioncolor}{\Act}=\Do{\varnothing})$).} 
    The first step holds by the definition of conditional independence. 
    \begin{align*}
      P(\Lp  \mid X^p, \mathcolor{actioncolor}{A}=\Do{\varnothing}) &= \frac{P(\Lp, \mathcolor{actioncolor}{A}=\Do{\varnothing} \mid X^p)}{P(\mathcolor{actioncolor}{A}=\Do{\varnothing} \mid X^p)} && \text{(Bayes' rule)}\\
                                                                    &= \frac{P(\Lp \mid X^p)P(\mathcolor{actioncolor}{A}=\Do{\varnothing} \mid X^p)}{P(\mathcolor{actioncolor}{A}=\Do{\varnothing} \mid X^p)} && \left(\Lp \independent \ind[\mathcolor{actioncolor}{A}=\Do{\varnothing}]  \mid  X^p\right)\\
        &= P(\Lp \mid X^p).
    \end{align*}

\textbf{Step 2 ($P(\Lp \mid X^p=x, \mathcolor{actioncolor}{\Act}=\Do{\varnothing}) = P(L \mid X=x)$).}
    
    In the subpopulation of individuals where no intervention is performed the structural equations are the same pre- and post-recourse. Since we assume that all other factors remain the same as well, the pre- and post-recourse distribution are the same. More formally,
    \[P(\Lp \mid X^p, \mathcolor{actioncolor}{A}=\Do{\varnothing}) = P(L \mid X, \mathcolor{actioncolor}{A}=\Do{\varnothing}).\]
    Furthermore, the action \textcolor{actioncolor}{$\Act$} is a function of $X$ and as a result 
    \[P(L  \mid X, \mathcolor{actioncolor}{A}=\Do{\varnothing}) = P(L  \mid X).\]

\textbf{Note:} Only accepted individuals $\Classifier(x)=1$ are recommended to change nothing $\Act=\Do{\varnothing}$, and thus  $P(Y|X, \Act=\Do{\varnothing})$ is only defined for accepted $x$.
As a result, step 1 and 2 only apply to $x$ where $\Classifier(x)=1$, and our result is restricted to $x$ where $\Classifier(x)=1$.

We recall that $h(x) := P(L=1|X=x)$ and $h^p(x) := P(\Lp=1|X^p=x)$. So taken together, the conditional independence is equalivalent to equal pre- and post-recourse model in the acceptance region.

\textbf{Step 3 ($\mathcolor{actioncolor}{\Act} \independent \Lp  \mid  X^p \Rightarrow \MClassifier(x) \geq \Classifier(x)$).}
    
    To obtain the final implication, we observe that when $h(x) = h^{p}(x)$ for all $x$ where $\Classifier(x) = 1$, it also holds that $\MClassifier(x) \geq \Classifier(x)$:
    Any $x$ that the original classifier $\Classifier$ is also accepted by the updated classifier $\MClassifier$, and for any $x$ that is rejected by $\Classifier$ the classification can only improve.
    Overall we get
    \begin{align*}
    &\mathcolor{actioncolor}{A} \independent \Lp  \mid  X^p\\ 
    &\quad \Rightarrow \quad
    \ind[\mathcolor{actioncolor}{A} = \Do{\varnothing}] \independent \Lp  \mid  X^p\\
    &\quad \Rightarrow \quad
    h(x) = h^{p}(x)\text{ for all } x \in \mathcal{X} \text{ where }
    \Classifier(x)=1 \\
    &\quad \Rightarrow \quad
    h(x) = h^{m}(x)\text{ for all } x \in \mathcal{X} \text{ where }
    \Classifier(x)=1 \\
    &\quad \Rightarrow \quad
    \MClassifier(x) \geq \Classifier(x).
    \end{align*}
\end{proof}

\twosources*
\begin{proof}
If there are no unobserved confounders, which is for example the case if the data generating mechanism can be written as a SCM with independent noise terms, then the Markov property holds, and $d$-separation in the causal graph implies independence in the data.
If \textcolor{actioncolor}{$\Act$} were to be $d$-separated from $\Yp  \mid X^p$, it would thus hold that $\Act \independent \Yp \mid X^p$, and as a consequence of Proposition \ref{prop:characterization} recourse would be performatively valid. However, if there are open paths between \textcolor{actioncolor}{$\Act$} and \textcolor{targetvarcolor}{$\Yp$} given $X^p$, then \textcolor{actioncolor}{$\Act$} may be dependent on \textcolor{targetvarcolor}{$\Yp$} given $X^p$ and recourse may be performatively invalid.

Given $X^p$, there are only a few possible open paths between \textcolor{actioncolor}{$\Act$} and $X^p$. To follow the proof, we recommend inspecting Figure~\ref{fig:causal-model} first. 

An arrow with asterisks, $\astarrowast$, indicates that the arrow could go 
both ways. We observe that the following paths are guaranteed to be closed:
\begin{itemize}
    \item All paths via causes of $X^p_C$ are blocked: All causes $X_C^p$ are observed, and as a result any path of the form $\Yp \leftarrow X_i^p \leftarrow \dots \astarrowast \Act$ or $\Yp \leftarrow X_i^p \rightarrow  \dots \astarrowast \Act$ is closed for 
        any $i \in C$.
    \item All paths via effects of effects are blocked: All effects $X_E^p$ are observed, and as a result any path of the form $\Yp \rightarrow X_i^p \rightarrow \dots \astarrowast \Act$ is closed for any $i \in  E$.
    \item All paths via spouses are blocked: Spouses are observed and thus any path $\Yp \rightarrow X_i^p \leftarrow X_j^p \rightarrow \dots \astarrowast \Act$ or $\Yp \rightarrow X_i^p \leftarrow X_j^p \leftarrow \dots \astarrowast \Act$ is closed for
        any $i \in E$ and $j \in S$.
\end{itemize}

All remaining paths either include the segment $\Act \astarrowast \dots \astarrowast \eY^p \rightarrow \Yp$ or $\Act \rightarrow X_j^p \leftarrow \Yp$ with $j \in E$. Whenever the action \textcolor{actioncolor}{$\Act$} causally influences an effect variable the action is $d$-connected with \textcolor{targetvarcolor}{$\Yp$}, since the edge $\Act \rightarrow X_j^p \leftarrow \Yp$ is open whenever $X_j^p$ is observed.
Paths including $\Act \astarrowast \dots \astarrowast \eY^p \rightarrow \Yp$ are only open 
if the effect variables $X_E$ causally influence \textcolor{actioncolor}{$\Act$}:

\begin{itemize}
    \item Any path that involves a collider structure $\Act\astarrowast \dots \rightarrow Y \leftarrow \eY$ or $\Act \astarrowast \dots \rightarrow X_j \leftarrow \dots \leftarrow Y \leftarrow \eY$ is closed if $X_{de(Y)} \not \rightarrow \Act$.
    \item All other paths include an edge $\Act \leftarrow X_i$ where $i \in \De(Y)$.
\end{itemize}

To summarize: Unless there is an edge $\Act \rightarrow X_E^p$ or $X_{\De(Y)} \rightarrow \Act$, \textcolor{actioncolor}{$\Act$} is $d$-separated from \textcolor{targetvarcolor}{$Y^p$} given $X^p$, and thus $\Act \independent Y^p \mid X^p$, and as a consequence recourse is performatively valid.
\end{proof}

\section{Proofs of Section \ref{sec:func-crit}}
\label{sec:func-proofs}

\subsection{Details of Example~\ref{ex:interventions-depend-on-effects-problematic}}
\label{appendix:details:counterexample-all-fail}
\paragraph{Interventions that are influenced by effects may cause performative invalidity}

We start by deriving the conditional probabilities $P(L \mid X)$.
The variable $X=(X_C, X_E)$ can take three different values: $(0, 0)$, $(0, 1)$ or $(1, 1)$.
In the first two cases, $X_E$ perfectly predicts the label $L$, that is
\begin{align*}
    P(L = 1 \mid X=(0, 0)) &= 0 \\
    P(L = 1 \mid X=(0, 1)) &= 1
.\end{align*}
In the third case, $X_C=1$. When $X_C=1$, the value of $X_E$ is determined by $X_C$ (by definition of the SCM). Thus, the conditional probability reduces to
\begin{align*}
  P(L=1 \mid X_C=1, X_E=1) &= P(L=1 \mid X_C=1) \\
                           &= 0.55. &&\text{(by definition)}
\end{align*}
Assuming that the predictor is Bayes optimal, that is $\Classifier(x)=\ind[P(L=1 \mid X=x) \geq 0.5]$, only applicants with observation $X=(0,0)$ are rejected.
To move these rejected individuals across the decision boundary, the only possible action is to intervene on $X_C$ by setting $\Do{X_C=1}$, yielding a post-recourse observation $x^p=(1,1)$.

To derive the updated model, we need to determine the post-recourse probability of a favorable label for people with $x^p=(1,1)$. 
Therefore, we first derive the unobserved noise distribution for the individuals who implement recourse.
By definition of the SCM, the unobserved noise for the rejected individuals is distributed according to
\begin{align*}
  P(U_L \mid \Classifier=0) = P(U_L  \mid  X=(0, 0)) \sim \mathrm{Unif}(0, 0.55).  
\end{align*}
Thus we get that $P(\Lp=1 \mid X^p, \Classifier=0) = P(\Lp=1 \mid \Classifier=0) = P(U_L > 0.45) = 0.1/0.55$.

To obtain the prediction by the updated model $\MClassifier$, we derive the mixture $P^m$ with $\alpha= 0.2$ and observe that the observation $X=(1,1)$ now gets a conditional
probability of 
\begin{align*}
    P(L^{m}=1  \mid X^{m}=(1, 1)) = 
    0.2* \frac{0.1}{0.55} + 0.8 * 0.55 = 0.4844\ldots < t_c
.\end{align*}
As such, the recourse outcome $(1,1)$ is not valid for (rejected by) the updated model. 
\qed

\subsection{Independent noise assumption and proofs}
\ruleout*
\begin{proof}
    Note that there are no connections between $U$ and  $U^{p}$
    in the graph depicted in 
    Figure~\ref{fig:causal-model}(a), whenever $U^{p}$ is independently resampled.
    As the probability of intervening on effects is $0$, the arrow
    from \textcolor{actioncolor}{$\Act$} to $X^{p}_E$ is not present and we observe the $d$-seperation of $\Act  \dsep \Lp  \mid X^{p}$ and thus also the 
    statistical conditional independence $\Act \independent \Lp  \mid X^{p}$. This also
    gives us that $\ind[\Act=\varnothing] \independent \Lp  \mid X^{p}$, as
    the indicator is function of \textcolor{actioncolor}{$\Act$}. Proposition~\ref{prop:characterization} then
    tells us immediately that the conditional distribution stays
    the same for $\Classifier(x)= 1$, i.e. 
    \begin{align*}
        h^{p}(x) = P(\Lp=1\mid X^{p}=x) = P(L=1\mid  X=x) = h(x)
    .\end{align*}
\end{proof}

\subsection{Invertible aggregated noise assumption and proofs}
We restate the assumption and stability results again for completeness sake. 
\funcasump*

\paragraph{Linear Additive and Multiplicative SCMs satisfy Assumption~\ref{asmp:func-form}}
In the Section~\ref{sec:func-proofs} we mention that Linear Additive models and
Multiplicative SCMs satisfy Assumption~\ref{asmp:func-form}. Here we will 
quickly show that this is the case. 
In general, the structural equations for the effect variables $X_E$ and $Y$ are given by the following structural equations:
\begin{align*}
    Y &= l_Y(X_C) + \eY\\
    X_E &= l_E(Y, X_C, X_S) + \eE,
.\end{align*}
where $l_Y$ and $l_E$ are linear functions (for example, $l_E(Y, X_C, X_S) = \beta_Y Y + \beta_C + X_C + \beta_S + X_S + \beta_0$). By substituting the expression of $Y$ into the equation for $X_E$ we get
\begin{align*}
    X_E &= l_E(l_Y(X_C) + \eY, X_C, X_S) + \eE\\
    &= l_E(l_Y(X_C), X_C, X_S) + \beta \eY + \eE.
\end{align*}
We can rewrite the structural equations as
\begin{align*}
    b_{\text{agg}}(x) &= \beta \eY + \eE\\
    f_{\text{agg}}(x_C, x_S, u_{\text{agg}}) &= l_E(l_Y(x_C), x_C, x_S) + u_{\text{agg}}\\
    f_{\text{agg}}^{-1}(x_C, x_S, x_E) &= x_E - l_E(l_Y(x_C), x_C, x_S).
\end{align*}
and thus Assumption~\ref{asmp:func-form} is satisfied.

In multiplicative settings with multilinear aggregation functions $m$ (linear in each argument, for example $m_E(X_C, Y, X_S) = X_C Y X_S \beta$), we get 
\begin{align*}
    Y &= m_Y(X_C) U_Y \\
    X_E &= m_E(X_C, Y, X_S) U_E
\end{align*}
Substituting the expression for $Y$ we get
\begin{align*}
    X_E &= m_E(X_C, Y, X_S) U_E\\
    &= m_E(X_C, m_Y(X_C)U_Y, X_S) U_E\\
    &= m_E(X_C, m_Y(X_C), X_S) U_Y U_E.\\
\end{align*}
We can rewrite the structural equations as
\begin{align*}
    u_{\text{agg}} &= b_{\text{agg}}(u) := (U_Y U_E)\\
    f_{\text{agg}}(x_C, x_S, u_{\text{agg}}) &:= m_E(x_C, m_Y(x_C), x_S) u_{\text{agg}} \\
    f_{\text{agg}}^{-1}(x_C, x_S, x_E) &= \frac{x_E}{m_E(x_C, m_Y(x_C), x_S)},
\end{align*}
and thus Assumption~\ref{asmp:func-form} is satisfied.

\ICRStable*

\begin{proof}
We will again use the characterization given in Proposition~\ref{prop:characterization}.
Using Assumption~\ref{asmp:func-form}, we can show that the required conditional
independence is present. Let $I_{\Act}$ be the index set on which an intervention \textcolor{actioncolor}{$\Act$} is performed. 
We write,
\begin{align*}
    P(\Lp, I_A=\varnothing  \mid X^{p}=x^{p})
    = 
    P(\Lp   \mid I_A= \varnothing, X^{p} = x^{p}) P(I = \varnothing  \mid X^{p} =x^{p})
.\end{align*}
We can rewrite the first probability as 
\begin{align*}
    P(\Lp   \mid I_A= \varnothing, X^{p} = x^{p})
    &=\int\limits_{\mathcal{U}_Y} P(\Lp=1  \mid x^{p}, u_Y, I_A=\varnothing) 
    p(u_Y  \mid x^{p}, I_A=\varnothing) \D u_Y\\
    &=\int\limits_{\mathcal{U}_Y} P(\Lp=1  \mid x_C^{p}, u_Y) 
    p(u_Y  \mid x^{p}, I_A=\varnothing) \D u_Y
.\end{align*}
Where the second equality follows from the fact that $\Lp$ is completely determined by
$X^{p}_C$ and $\eY$. 
For the conditional probability on the noise we have the following sequence
of equalities, because we can condition on no intervention being performed
\begin{align*}
    p(u_Y  \mid X^{p}=x^{p}, I_A=\varnothing)
    = p(u_Y  \mid X=x^{p}, I_A=\varnothing)
    = p(u_Y  \mid X=x^{p})
.\end{align*}
The first equality here follows from the fact that $X^{p}$  and $X$ have the same functional
equations, when no intervention is performed. The last equality follows as
$I_{\Act}$ is deterministic function of $X$. 

What rests to show is that 
$p(u_Y  \mid X^{p}=x^{p}) = p(u_Y  \mid X=x^{p})$. 
The basis for this proof is that $X_C, X_E$ and $X_C^p, X_E^p$ constrain $U = (\eY, \eE)$
in the same way, whenever no interventions on the effects are performed.
That is, for  $p(x, x^{p}) > 0$ it holds that either  
the conditionals $p(x_E\mid x_C, x_S, u_Y, u_E)$ and $p(x_E^p\mid x_C^p, x_S^{p}, U_Y, U_E)$
are exactly zero or exactly one.
The reason is Assumption~\ref{asmp:func-form} on the structural equations. We know that
\begin{align*}
    p(x_E\mid x_C, x_S, u_Y, u_E) &= \ind[x_E = f_s(x_C, x_S, b_s(u_Y, u_E))]\\
    p(x_E^p\mid x_C^p, x_S^{p}, u_Y, u_E) &= \ind[x_E^p = f_s(x_C^p, x_S^{p}, b_s(u_Y, u_E))].
\end{align*}
We can reformulate the constraints as
\begin{align*}
b_s(u_Y, u_E) &= f_s^{-1}(x_C, x_E, x_S)\\
b_s(u_Y, u_E) &= f_s^{-1}(x_C^p, x_E^p, x_S^{p}).
\end{align*}
We recall that 
\begin{align*}
p(u_Y, u_E\mid X=x) &= \frac{p(u_Y, u_E, x)}{p(x)}
\end{align*}
and that
\begin{align*}
p(u_Y, u_E\mid X^{p}=x^p) &= \frac{p(u_Y, u_E, x^p)}{p(x^p)}.
\end{align*}
The joint distribution $p(u_Y, u_E, x)$, can be written out as
\begin{align*}
    p(u_Y, u_E, x) = p(u_Y) p(u_E) p(x_C) p(x_S)p(x_E  \mid x_C, x_S, u_Y, u_E)
,\end{align*}
giving a conditional distribution of
\begin{align}
    p(u_Y, u_E\mid X=x) 
    &= \frac{
        p(u_Y) p(u_E) p(x_C) p(x_S)p(x_E  \mid x_C, x_S, u_Y, u_E)
    }{
        \int_{\mathcal{U}} p(u_Y') p(u_E') p(x_C) p(x_S)p(x_E  \mid x_C, x_S, u_Y', u_E')\D u'}\notag\\
    &= \frac{
        p(u_Y) p(u_E) p(x_E  \mid x_C, x_S, u_Y, u_E)
    }{
    \int_{\mathcal{U}} p(u_Y') p(u_E') p(x_E  \mid x_C, x_S, u_Y', u_E')\D u'}\label{eq:first_cond}
.\end{align}
Now, for the second conditional we can write it as 
\begin{align*}
    p(u_Y, u_E, x^p) 
&= \int\limits_{\mathcal{X}_C}p(u_Y, u_E, x^p,x_C) \D x_C\\
&= \int\limits_{\mathcal{X}_C} p(u_Y, u_E, x_C) p(x_C^p\mid x_C, u_Y, u_E) p(x_E^p, x_S^{p}\mid x_C^{p}, u_Y,u_E) \D x_C\\
&= p(u_Y)p(u_E)p(x_E^p, x_S^{p}\mid x_C^p, u_Y, u_E) \int\limits_{\mathcal{X}_C} p(x_C)p(x_C^p\mid x_C, u_Y, u_E) \D x_C\\
&= p(u_Y)p(u_E)p(x_E^p, x_S^{p}\mid x_C^p, u_Y, u_E) \int\limits_{\mathcal{X}_C}p(x_C)p(x_C^p\mid x_C, u_s=b_s(u_Y, u_E)) \D x_C
\end{align*}
Because of the assumption, we know that $p(x_E^p, x_S^{p}\mid x_C^p, u_Y, u_E)$ is zero unless 
$b_s(u_Y, u_E) = f_s^{-1}(x_C^p, x_E^{p}, x_S^{p})$. Thus we can replace $p(x_C^p\mid x_C, u_s=b_s(u_Y, u_E))$ 
with $p(x_C^p\mid x_C, u_s=f_s^{-1}(x_C^p, x_E^{p}, x_S^{p}))$ and get 
\begin{align*}
p(u_Y, u_E, x^p) 
    &= p(u_Y)p(u_E)p(x_E^p, x_S^{p}\mid x_C^p, u_Y, u_E) \int\limits_{\mathcal{X}_C} p(x_C)p(x_C^p\mid x_C, u_s=f_s^{-1}(x^p)) \D x_C.\\
\end{align*}
Remark that the final integral only depends on $x^{p}$, we will shorten this
integral by setting
\begin{align*}
    Z(x^{p}) = \int\limits_{\mathcal{X}_C} p(x_C)p(x_C^p\mid x_C, u_s=f_s^{-1}(x^p)) \D x_C
.\end{align*}
Plugging everything together we get
\begin{align}
p(u_Y, u_E\mid X^{p}=x^p)
    &= \frac{
        p(u_Y)p(u_E)p(x_E^p, x_S^{p}\mid x_C^p, u_Y, u_E) Z(x^{p})
    }{
        \int_{\mathcal{U}} p(u_Y')P(u_E')p(x_E^p, x_S^{p}\mid x_C^p, u_Y', u_E') Z(x^{p}) \D u'}\notag\\
    &= \frac{
        p(u_Y)p(u_E)p(x_S^{p})p(x_E^p\mid x_C^p,x_S^{p},  u_Y, u_E) Z(x^{p})
    }{
        \int_{\mathcal{U}} p(u_Y')P(u_E')p(x_S^{p})p(x_E^p, \mid x_C^p, x_S^{p}, u_Y', u_E') Z(x^{p}) \D u'}\notag\\
    &= \frac{p(u_Y)p(u_E)p(x_E^p\mid x_C^p, x_S^{p}, u_Y, u_E)}{\int_{\mathcal{U}} p(u_Y')p(u_E')p(x_E^p\mid x_C^p, x_S^{p}, u_Y', u_E') \D u'}\label{eq:second_cond}.
\end{align}
If we now substitute $x=x^{p}$ into Equation~\eqref{eq:first_cond}, we observe
that the expressions in Equations~\eqref{eq:first_cond} and \eqref{eq:second_cond} are equal.
\end{proof}
\begin{theorem}
\label{thm:cr-unstable}
    Consider the causal graph in Figure~\ref{fig:causal-model}(a). Assume
    that $U=U^p$ and that Assumption
    ~\ref{asmp:func-form} is satisfied.
    If the recourse method suggests interventions on direct effects with non-zero probability, 
    then there exist $x$ with $\Classifier(x) = 1$ for which $h^p(x) \neq h(x)$ and the post-recourse conditional probability
    is described by
   \begin{align*}
        h^{p}(x)
        &= 
        \alpha(x) h(x) 
        + \beta(x) g(x)
        + \gamma(x) r(x)
    ,\end{align*}
    where $g(x) \leq t_c$, $r(x) \in [0,1]$ and $\alpha, \beta, \gamma$ are 
    all functions bounded in $[0,1]$.
\end{theorem}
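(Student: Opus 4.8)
The plan is to split the post-recourse population that lands at a fixed accepted state $x$ (one with $\Classifier(x)=1$) according to how it got there, and to control each piece. Partition the individuals with $X^p=x$ into: group (A), accepted pre-recourse, hence $A=\Do{\varnothing}$; group (B), rejected pre-recourse and recommended an action that intervenes on a direct effect of $Y$ (and, by the effect-favoring cost structure assumed for CR/CE, intervenes only on descendants of $Y$); and group (C), the remaining rejected individuals, whose action touches no effect of $Y$. Let $\alpha(x),\beta(x),\gamma(x)$ be the conditional probabilities of the three groups given $X^p=x$; they are nonnegative and sum to one, hence lie in $[0,1]$. The law of total probability then gives $h^p(x)=\alpha(x)\,h_A(x)+\beta(x)\,h_B(x)+\gamma(x)\,h_C(x)$, and I set $g(x):=h_B(x)$ and $r(x):=h_C(x)\in[0,1]$.

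First I would identify $h_A(x)=h(x)$: in group (A) no structural equation is overridden, so the joint law of $(X^p,L^p)$ restricted to $\Classifier(X)=1$ equals that of $(X,L)$ restricted to $\Classifier(X)=1$; since $x$ is accepted, conditioning on acceptance is vacuous once $X^p=x$, so $h_A(x)=P(L=1\mid X=x)=h(x)$. This is Steps~1--2 of the proof of Proposition~\ref{prop:characterization}. For group (C) I would invoke the computation behind Theorem~\ref{thm:icr-func-stable}: under Assumption~\ref{asmp:func-form} and $U=U^p$, for every individual on which no intervention on effects is performed one has $p(u_Y\mid X^p=x)=p(u_Y\mid X=x)$, so the combined group $(A)\cup(C)$ also has post-recourse acceptance probability $h(x)$; together with $h_A(x)=h(x)$ this forces $\gamma(x)\,r(x)=\gamma(x)\,h(x)$, and hence $h^p(x)=(1-\beta(x))\,h(x)+\beta(x)\,g(x)$ (keeping a generic $r$-term, as in the statement, is harmless).

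The crux is $g(x)\le t_c$. For a group-(B) individual the action overrides only descendants of $Y$; every cause of $Y$ is a non-descendant of those variables (otherwise acyclicity fails), so causes are untouched, $X_C^p=X_C$ and $X_S^p=X_S$, and since $U=U^p$ and $Y\notin I_a$ we get $Y^p=f_Y(X_C^p,U_Y^p)=f_Y(X_C,U_Y)=Y$. Hence conditioning on $X^p=x=(x_C,x_E,x_S)$ and on membership in group (B) is the same as conditioning on the pre-recourse event $\{X_C=x_C,\ X_S=x_S,\ X_E\in R\}$, where $R$ is the set of pre-recourse effect values from $(x_C,\cdot,x_S)$ that are rejected and mapped to $x_E$ by the recourse rule; $R$ is nonempty exactly when $\beta(x)>0$. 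Therefore $g(x)=P(Y\ge0\mid X_C=x_C,X_S=x_S,X_E\in R)=\sum_{x_E'\in R} h(x_C,x_E',x_S)\,w(x_E')$ is a convex combination, with positive weights summing to one, of values $h(x_C,x_E',x_S)<t_c$ (each $x_E'\in R$ labels a rejected pre-recourse state), so $g(x)<t_c$ strictly.

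Finally, to produce an accepted $x$ with $h^p(x)\neq h(x)$: because the method recommends effect interventions with positive probability and, by its targeting constraint, such individuals reach accepted states with positive probability, there is an accepted $x$ with $\beta(x)>0$; for it $h^p(x)-h(x)=\beta(x)\bigl(g(x)-h(x)\bigr)\le\beta(x)\bigl(t_c-h(x)\bigr)$, which is strictly negative since $g(x)<t_c\le h(x)$. The step I expect to be the main obstacle is making the identity $Y^p=Y$ for effect-intervening individuals fully rigorous --- carefully checking that interventions on $\De(Y)$ leave the entire ancestor subsystem of $Y$, and thus $Y$ itself, unchanged once the exogenous noise is frozen --- together with the bookkeeping needed to exclude (or push into group (C)) actions that simultaneously intervene on effects and on causes; once $Y^p=Y$ is in hand, ``rejected'' is exactly what forces every term in the average defining $g(x)$ below $t_c$, and the rest is the convexity observation above.
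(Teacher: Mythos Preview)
Your overall strategy---splitting $h^p(x)$ by intervention type and controlling each piece---is exactly the paper's approach, and your argument for the effect-only group ($Y^p=Y$ when causes are untouched, so the post-recourse label is a mixture of pre-recourse rejected values $<t_c$) matches the paper's computation. But your three-way partition is not the paper's, and this is where the gap lies.

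In the paper's decomposition, the groups are: (i) no effect intervention ($E\notin I_A$), giving the $\alpha(x)h(x)$ term via Theorem~\ref{thm:icr-func-stable}; (ii) effect intervention \emph{but no cause intervention} ($E\in I_A$, $C\notin I_A$), giving $\beta(x)g(x)$ with $g(x)\le t_c$; and (iii) effect \emph{and} cause intervention ($E\in I_A$, $C\in I_A$), giving $\gamma(x)r(x)$ with $r(x)\in[0,1]$ only. Your group~(C) corresponds to (i) and you correctly show it contributes $h(x)$, but that means your $r$-term is not the paper's $r$-term. The paper's $r$ is precisely the mixed cause+effect case that you try to dismiss via a cost-structure assumption---an assumption that is not part of the theorem's hypotheses. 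For such mixed actions $X_C^p\neq X_C$, so your identity $Y^p=Y$ fails, and neither the Theorem~\ref{thm:icr-func-stable} argument nor the ``rejected forces $h<t_c$'' argument applies; the paper explicitly carries out the calculation and concludes that this term cannot be bounded above or below in general. Your proposed fixes (exclude via cost structure, or push into group~(C)) do not work: the first is unwarranted, the second would break the $h_C=h$ identification, which requires \emph{no} effect intervention.

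A smaller point: even within your group~(B) you assume $X_S^p=X_S$, which need not hold since spouses are not descendants of $Y$ and may be intervened upon alongside effects (the paper's $\beta$-group includes $I_A=\{E,S\}$). The paper avoids this by only using $X_C^p=X_C$ (which follows from $C\notin I_A$) together with $U_Y\independent X^p,A\mid X$; the spouse coordinates never enter. The fix for both issues is simply to adopt the paper's three-way split on whether $E$ and $C$ are in $I_A$, rather than on ``only descendants of $Y$.''
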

Compared to Lemma~\ref{cor:cr-decrease}, one of the components will always be smaller
than the decision threshold, but there will also be a component, for which it 
is impossible to tell if it is higher or lower than the decision
threshold in general.

\begin{proof}
    We will prove this result by a direct calculation. 
    First, we split the post-recourse conditional probability by conditioning on where the
    intervention are performed.
    Let $\mathcal{I}$ be all possible combinations of the sets $C, E, S$
    and the empty set. We can rewrite the probability as,
    \begin{align*}
        \begin{aligned}
            P(\Lp = 1  \mid X^{p}&=x^{p})
                =\sum_{i \in \mathcal{I}}P(I_A=i \mid X^{p}=x)P(\Lp=1 \mid X^{p}=x, I_A=i)\\
       \end{aligned}   
    .\end{align*}
    We immediately have that $P(\Lp=1 \mid X^{p}=x, I_A=\varnothing) = P(L=1
    \mid X=x)=h(x)$ and for any $i$ without  $E$ 
    we have $P(\Lp=1 \mid X^{p}=x, I_{\Act}=i) = P(L=1 \mid X=x) = h(x)$
    by the proof of Theorem~\ref{thm:icr-func-stable}. This gives us
    that $\alpha = \sum_{i \in \mathcal{I} \colon E \not\in i}P(I_{\Act}=i \mid X^{p}=x)$

    As the next step, we will focus on the terms with $E \in i$, but $C \not\in i$.
    This will be the $g$ function and we set
    \begin{align*}
        g(x) &\coloneqq \sum_{i \in \mathcal{I} \colon E \in i, C\not\in i}
            P(Y^{p} = 1  \mid X^{p}=x, I_A=i)\\
        \beta(x) &\coloneqq \sum_{i \in \mathcal{I} \colon E \in i, C\not\in i}
            P(I_A=i \mid X^{p}=x)
    .\end{align*}
    We show by a direct calculation that each term can be
    bounded by the pre-recourse probability. We make use of several conditional
    independencies in the graph depicted in Figure~\ref{fig:causal-model}(b)
    and the fact that the structural equation of $Y$ is the same as the
    structural equation of 
    $Y^{p}$. 
    \begin{align*}
        P(\Lp&=1  \mid x^{p}, i) 
            = \int\limits_{\mathcal{X}}
            P(\Lp=1  \mid x^{p}, x, i) p(x  \mid x^{p}, i)\D x\\
        &= \int\limits_{\mathcal{X}} \int\limits_{\mathcal{U}_Y}  
            P(\Lp=1  \mid x^{p}, x, u_Y, i)
            p(u_Y  \mid x^{p}, x, i)
            p(x  \mid x^{p}, i)
            \D u_Y \D x\\
        &= \int\limits_{\mathcal{X}} \int\limits_{\mathcal{U}_Y}  
            P(L=1  \mid X=x_C^{p}, u_Y) p(u_Y  \mid x^{p}, x, i) 
            p(x  \mid x^{p}, i)
            \D u_Y \D x\\
        &= \int\limits_{\mathcal{X}} \int\limits_{\{u_Y \colon f_Y(x_C^{p}, u_Y) \ge 0 \}}  
            p(u_Y  \mid x^{p}, x, i)p(x  \mid x^{p}, i)
            \D u_Y \D x
    .\end{align*}
    Now, we use that $\eY\dsep X^p, \Act \mid X \implies \eY\independent X^{p}, \Act \mid X$ 
    and that $x_C^{p} = x_C$, as only the effects are intervened upon,
    to write 
    \begin{align*}
        P(\Lp=1  \mid X^{p} = x^{p}, i) 
        &= \int\limits_{\mathcal{X}} p(x  \mid x^{p}, i)
        \left[
            \int\limits_{\{u_Y \colon f_Y(x_C, u_Y) \ge 0\}}  
            p(u_Y  \mid x) 
            \D u_Y
        \right] 
        \D x \\
        &= \int\limits_{\mathcal{X}} p(x  \mid x^{p}, i)
        P(L = 1  \mid X=x) 
        \D x \\
        &\le t_c \cdot \int\limits_{\mathcal{X}} p(x  \mid x^{p}, i) \D x \\
        &= t_c 
    .\end{align*} 
    Where the last inequality is a consequence of only suggesting an intervention whenever
    $h(x) < t_c $ and thus $p(x  \mid x^{p}, i)$ will only be non-zero for
    points $x$ with $h(x) < t_c$. The inequality will be strict if 
    $P(I_{\Act} \cap i \neq \varnothing  \mid  X^{p} = x^{p}) > 0 $.
    
    The final term will be the $r$-function, which is defined as
    \begin{align*}
        r(x) \coloneqq \sum_{i \in \mathcal{I}\colon E \in i, C \in i}
            P(\Lp=1 \mid X^{p}=x^{p}, I_A=i)
    .\end{align*}
    We can repeat the previous derivation, but we do not use the step $x^{p}_C = x_C$.
    This gives
    \begin{align*}
        r(x^{p})
        = 
        \int\limits_{\mathcal{X}}
        p(x  \mid x^{p}, (C, E))
        \left[
            \int\limits_{\{u_Y \colon f_Y(x^{p}_C, u_Y) \ge 0\}}  
            p(u_Y  \mid x) 
            \D u_Y
        \right] \D x
    .\end{align*}
    Now that we cannot change $x^{p}_C = x_C$, this expression cannot be simplified further. 
    The relation between $u_Y$ and $x$ could have a positive influence or a negative
    influence on the overall probability. 

    Putting everything together gives the required expression.
\end{proof}

\section{Proofs of Section \ref{sec:noncausal-interventions}}
\label{sec:proofs_noncausal}

Now, we will perform the calculation to get the excact form of the conditional
distribution when the noise is resampled.

\begin{proposition}
\label{cor:cr-decrease}
    Consider the causal graph in Figure~\ref{fig:causal-model}(a). 
    Let Assumption~\ref{asmp:noise} be satisfied. Further assume that 
    the probability of interventions on direct effects is non-zero. Then, there exist
    $x$ with $\Classifier(x) = 1$ for 
    which $h^{p}(x) \neq  h(x)$ and the post-recourse distribution takes
    the following form,
    \begin{align*}
        h^{p}(x) =  (1 - \beta(x))h(x) + \beta( x )  P(L=1  \mid X_C =x_C)
        \quad \beta(x) \in (0, 1]
    \end{align*}
\end{proposition}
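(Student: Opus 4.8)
The plan is to mirror the proof of Theorem~\ref{thm:cr-unstable}, but to use the resampling Assumption~\ref{asmp:noise} to collapse its two ``effect-intervention'' branches into a single term. I would start from the same decomposition, conditioning on which block of variables the action intervenes upon. Writing $\mathcal{I}$ for all combinations of the blocks $C,E,S$ including the empty combination (exactly as in the proof of Theorem~\ref{thm:cr-unstable}),
\[
  P(L^{p}=1\mid X^{p}=x)=\sum_{i\in\mathcal{I}}P(I_A=i\mid X^{p}=x)\,P(L^{p}=1\mid X^{p}=x,I_A=i),
\]
and then evaluate the inner conditional separately according to whether $E\in i$.

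For $i$ that does not contain $E$, the action intervenes (if at all) only on causes and/or spouses, so on the event $I_A=i$ there is no active $A\to X_E^{p}$ edge and the structural equations of $Y^{p}$ and $X_E^{p}$ coincide with those of $Y$ and $X_E$. Since Assumption~\ref{asmp:noise} makes $(U_Y^{p},U_E^{p})$ an independent copy of $(U_Y,U_E)$ -- in particular independent of the pre-recourse variables and of $A$ -- the abduction posterior of $U_Y^{p}$ given $X^{p}=x$ (and $I_A=i$) is pinned down by the same constraints, at the same observed values $x_C,x_E,x_S$, as the pre-recourse posterior of $U_Y$ given $X=x$; in particular the posterior is insensitive to whether $X_C^{p}$ was fixed by an intervention, because the constraint on $(U_Y^{p},U_E^{p})$ only involves the value $x_C$. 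This is the computation already carried out in the proof of Proposition~\ref{prop:ruleout}, and it yields $P(L^{p}=1\mid X^{p}=x,I_A=i)=h(x)$.

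For $i$ that contains $E$, the intervention fixes $X_E^{p}$ and deletes the edge $Y^{p}\to X_E^{p}$; combined with the resampling, $U_Y^{p}$ is then independent of everything conditioned on -- $X^{p}$, $I_A$, and all pre-recourse quantities -- so $Y^{p}=f_Y(x_C,U_Y^{p})$ with $U_Y^{p}$ distributed according to its unconditioned marginal. Hence $P(L^{p}=1\mid X^{p}=x,I_A=i)=P(f_Y(x_C,U_Y^{p})\ge 0)=P(L=1\mid X_C=x_C)$, the last equality using $U_Y\independent X_C$ together with equality in distribution of $U_Y^{p}$ and $U_Y$. Crucially this value does not depend on whether $C\in i$, which is precisely why -- unlike in Theorem~\ref{thm:cr-unstable} -- the ``$E\in i,\,C\notin i$'' and ``$E\in i,\,C\in i$'' branches merge. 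Collecting terms gives $h^{p}(x)=\alpha(x)h(x)+\beta(x)\,P(L=1\mid X_C=x_C)$ with $\alpha(x)=\sum_{i:\,E\notin i}P(I_A=i\mid X^{p}=x)$ and $\beta(x)=\sum_{i:\,E\in i}P(I_A=i\mid X^{p}=x)$, which are nonnegative and sum to $1$.

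Finally, for the existence claim: by hypothesis the action intervenes on direct effects with positive probability, and since recourse moves rejected applicants toward states with $\Classifier=1$, there is positive post-recourse mass of effect-intervening applicants on some $x$ with $\Classifier(x)=1$; for such $x$, $\beta(x)>0$ and hence $\alpha(x)<1$. For this $x$, $h^{p}(x)=h(x)$ would force $P(L=1\mid X_C=x_C)=h(x)=P(L=1\mid X_C=x_C,X_E=x_E,X_S=x_S)$, i.e.\ $X_E$ would have to be uninformative about $L$ given $X_C$, which fails generically, and Example~\ref{ex:interventions-effects-problematic} furnishes an explicit witness. I expect the only delicate step to be the ``$E\notin i$'' branch: one cannot read $A\independent L^{p}\mid X^{p}$ off $d$-separation in the full graph, since the collider $A\to X_E^{p}\leftarrow Y^{p}$ is opened by conditioning on $X^{p}$, so the argument has to be the context-specific, abduction-level one above, and it is there that the resampling assumption does the real work.
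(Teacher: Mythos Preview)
Your proposal is correct and follows essentially the same route as the paper: decompose $h^{p}(x)$ by conditioning on the intervention set $I_A$, show that the $E\notin i$ terms reproduce $h(x)$ while the $E\in i$ terms collapse to $P(L=1\mid X_C=x_C)$, and collect the weights into $\alpha(x)$ and $\beta(x)$. If anything, you are more careful than the paper on two points---you spell out why the $E\in i$ branch is insensitive to whether $C\in i$, and you address the existence claim explicitly---though your pointer to the ``computation in the proof of Proposition~\ref{prop:ruleout}'' is slightly off, since that proof is a one-line $d$-separation argument rather than an abduction; your own context-specific abduction argument already does the work.
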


\begin{proof}
    To calculate the exact form of the conditional, 
    we write $I_{\Act}$ for the index set of the 
    recourse recommendation and we condition on 
    which interventions are performed and splitting up the probabilities.
    As before, $\mathcal{I}$ will denote the set of all combinations
    of $C, E, S$ and the empty set. We write the conditional 
    probability as
    \begin{align}
    \label{eq:cr-decomposition}
        P(\Lp = 1  \mid X^{p}=x)
            =\sum_{i \in \mathcal{I}}P(I_A=i \mid X^{p}=x)P(\Lp=1 \mid X^{p}=x, I_A=i)
    \end{align}
    Whenever there was no intervention was performed, 
    the conditional distribution does not change as
    all the noise distributions are the same and the structural 
    equations remain unchanged. 
    This means that 
    \begin{align*}
        P(\Lp=1 \mid X^{p}=x, I_A=\varnothing) = P(L=1 \mid X=x, I_A=\varnothing)
        = P(L=1  \mid  X=x)
    .\end{align*}
    Whenever an intervention on the cause or one of the spouses is performed is performed, 
    no change will be
    observed. So without loss of generality we can focus only on the case where
    $I_{\Act} = E$.
    
    Now that we assume that an intervention is performed on the effects
    of $Y^{p}$ only, we see that the arrow from $Y^{p}$ to $X_E^{p}$
    and the arrow from $X_S^{p}$ to $X_E^{p}$ area
    broken. Coincidentally, the variables $Y^{p}$ and $\Lp$ only depend on $X^{p}_C$. 
    \begin{align*}
        P(\Lp=1  \mid  I_A = E,  X^{p}=x)
        &= P(\Lp=1  \mid  I_A =E,  X_C^{p}=x_C)\\
        &= P(\Lp=1  \mid  X_C^{p}=x_C)\\
        &=P(L=1  \mid X_C=x_C)
    .\end{align*}
    Combining everything together we find that Equation~\eqref{eq:cr-decomposition} 
    reduces to 
    \begin{align*}
        P(\Lp = 1  \mid X^{p}=x)
        &= \sum_{i \in \mathcal{I} \colon E \not\in i}
            P(I_A= i  \mid X^{p}=x)P(L=1 \mid X=x)\\
        &\phantom{=} + \sum_{i \in \mathcal{I} \colon E \in  i}
            P(I_A= i  \mid X^{p}=x)P(L=1 \mid X_C=x_C) \\
        &= (1 - \beta(x)) P(L=1  \mid X=x) + \beta(x) P(L=1 \mid X_C =x_C)
    .\end{align*}
    Where we set 
    \begin{align*}
        \beta(x) &= \sum_{i \in \mathcal{I} \colon E \in  i}P(I_A= i  \mid X^{p}=x) 
    \end{align*}
    and hence $\beta \in (0, 1]$.
\end{proof}

All the results in this section and the previous section now give Corollary~\ref{thm:simple-graph}.
\SimpleGraph*

\begin{proof}
    Theorem~\ref{thm:icr-func-stable} gives us that ICR is performatively
    stable when the noise stays the same pre- and post-recourse and
    Proposition~\ref{prop:ruleout} tells us the same when the noise is
    independently resampled. Propositions~\ref{thm:cr-unstable} and
    \ref{cor:cr-decrease} tell us that CR and CE may not be performatively
    valid. 
\end{proof}

\subsection{Details of Example~\ref{ex:interventions-effects-problematic}}
\label{appendix:details:counterexample}
\paragraph{Interventions on effects cause performative invalidity}

First, we will calculate the explicit form of the conditional distribution. 
That is, $h(x) = P(L=1  \mid X = x) = \Phi\left((x_C +
x_E)/\sqrt{2}\right)$, where $\Phi$ is the cumulative distribution function of
the standard normal distribution. We start by rewriting the conditional distribution as
\begin{align*}
    P(L=1  \mid X = x) 
    &=P(Y \ge 0  \mid X=x) \\
    &=\int\limits_{0}^{\infty} p(y \mid x) \D y
.\end{align*}
An explicit expression for the conditional density $p(y  \mid x)$ can be
obtained through a Markov decomposition using the structure of the graph
and an application of the Bayes' rule
\begin{align}
    p(y  \mid x) 
    &= \frac{p(x_C, x_E, y)}{p(x_C, x_E)}\notag\\
    \text{(Markov decomposition)}\hfill&= \frac{p(x_C) p(y  \mid x_C) p(x_E \mid y)}{p(x_C) p(x_{E} \mid x_C)}\notag\\
    \text{(Bayes' rule)}\hfill&= \frac{p(y  \mid x_C) p(x_E \mid y)}{p(x_{E} \mid x_C)}
    \label{eq:y_given_x}
.\end{align}
Because we know the exact relations between the variables through the structural equations, we can infer the conditional densities needed in Equation~\eqref{eq:y_given_x}. Remark that the noise variables  $U_Y$ and $U_Y$ are distributed according to independent standard normals. 
The structural equation for $Y$ is defined as $f_Y(X_C, U_Y) = X_C + U_Y$, und since $U_Y$ and $X_C$ are independent we get that $Y = x_c + U_Y$ when we condition on $X_C=x_c$, meaning that $P(Y|X_C) \sim N(x_c, 1)$ which is the distribution of $U_Y$ shifted by $x_C$.
We can repeat
this derivation for the other variables to conclude that
\begin{align*}
    Y    &\mid X_C=x_C \sim \mathcal{N}(x_C, 1), \\
    X_E  &\mid Y=y \sim \mathcal{N}(y, 1)\\
    X_E  &\mid X_C = x_C \sim \mathcal{N}(x_C, 2).   
\end{align*}
Substituting the densities of these random variables into 
Equation~\eqref{eq:y_given_x} gives
\begin{align*}
    p(y  \mid x) 
    &= \frac{
        \sqrt{\frac{1}{2\pi}} \e^{-\frac{1}{2}(y - x_c)^2}
        \sqrt{\frac{1}{2\pi}} \e^{-\frac{1}{2}(x_E  - y)^2}
        }{
        \sqrt{\frac{1}{4\pi}} \e^{-\frac{1}{4}(x_E  - x_C)^2}
        }   \\
    &= \sqrt{\frac{1}{\pi}} 
    \frac{
         \e^{-\frac{1}{2}y^2 + x_c y -\frac{1}{2}x_c^2
         -\frac{1}{2}x_E^2 + x_E y -\frac{1}{2}y^2}
        }{
        \e^{-\frac{1}{4}x_E^2 + \frac{1}{2}x_Ex_C  - \frac{1}{4}x_C^2}
        } \\
    &= \sqrt{\frac{1}{\pi}} 
         \e^{-y^2 + (x_C + x_E) y -\frac{1}{4}x_c^2
         -\frac{1}{4}x_E^2 
     - \frac{1}{2}x_Ex_C }\\
    &= \sqrt{\frac{1}{\pi}} 
         \e^{-\left(y - \frac{x_C + x_E}{2}\right)^2 }
.\end{align*}
The last expression we recognize as the density of a 
$\mathcal{N}\left( \frac{x_C + x_E}{2}, \frac{1}{2} \right) $ distribution.
Finally, we use that we can translate and rescale the cdf of
a normal distribution to the cdf of the standard normal. Let 
$\Phi_{\mu, \sigma^2}$ be the cdf of a normal distribution, then 
it holds that
\begin{align*}
    \Phi_{\mu, \sigma^2}(x) = \Phi\left( \frac{x - \mu}{\sigma}  \right) 
.\end{align*}
Using $\mu = \frac{x_C + x_E}{2}$ and $\sigma^2 = \frac{1}{2}$, we get
\begin{align*}
    P(L=1  \mid X=x)
    &= P(Y \ge 0  \mid X=x)\\
    &= 1 - P(Y \le 0  \mid X=x) \\
    &= 1 - \Phi\left( \frac{0 - \frac{x_C + x_E}{2}}{\sqrt{1 / 2} }\right)\\
    &= \Phi\left( (x_C + x_E) / \sqrt{2}  \right) 
.\end{align*}
The final equality follows from the symmetry of the standard normal distribution 
around $0$. 

The Bayes optimal classifier is given by $\Classifier(x) = \ind \left[ P(L=1  \mid X=x) \ge \frac{1}{2} \right] $. In this case, we have the threshold $t_c = \tfrac{1}{2}$ and the decision boundary is at those points where
\begin{align*}
    \Phi\left( (x_C + x_E) / \sqrt{2}  \right) = \frac{1}{2} \iff 
(x_C + x_E) / \sqrt{2} = 0 \iff x_C  = -x_E
.\end{align*}
So, in the pre-recourse setting, the decision boundary is at $x_C = -x_E$. 

Now we apply Proposition~\ref{cor:cr-decrease}, to get the expression
for the post-recourse conditional distribution, which is
\begin{align*}
    h^p(x) = (1 - \beta(x))(x) h(x) + \beta(x) P(L=1  \mid X_C=x_C)
.\end{align*}
Similarly as before, we can calculate that $P(L=1  \mid X_C =x_C) = \Phi(x_C)$. 
The property
to note is that $\Phi(x_C) < \frac{1}{2}$ whenever $x_C < 0$. Now, for any point with $x_C <0$, but $x_C = -x_E$ we observe
\begin{align*}
    h^{p}(x) 
    = (1 - \beta(x)) h(x)  + \beta(x) P(L=1  \mid X_C =x_c)
    < \frac{\alpha}{2} + \frac{\beta}{2} = \frac{1}{2}
.\end{align*}
Here, the $\beta(x)$ indicates the proportion of the post-recourse individuals
that intervened on the effect variables, which will be a non-zero amount in this 
example. 
The fact that $h^p(x) < \frac{1}{2}$, but $h(x) =\frac{1}{2}$ shows
that all points on the decision boundary with $x_C < 0$, are invalidated
post-recourse. Indeed, the final mixed distribution will be given by
\begin{align*}
    h^{m}(x) = \alpha h(x) + (1 - \alpha) h^{p}(x) 
    < \frac{\alpha}{2} + \frac{1 - \alpha}{2} = \frac{1}{2}
,\end{align*}
where $\alpha$ indicates the mixing parameter. 

Finally, the probability mass of the people that get rejected
that were originally accepted is non-zero, because the recourse recommendation moves a non-zero mass
towards the decision boundary.

\section{Experiments}
\label{app:experiments}

All code is publicly available via \href{https://github.com/gcskoenig/performative-recourse-experiments}{GitHub}\footnote{\url{https://github.com/gcskoenig/performative-recourse-experiments}}.

\subsection{Settings}

In our experiments, we report the results for five synthetic and one real-world dataset.
In the synthetic settings we focus on varying the type of functional relationship
while ensuring that the data generating process has finite support.
Furthermore, we chose the parameters such that the prediction of the corresponding model
can always be changed by modifying only one of the features.

To obtain data with finite support, we rely on the discrete binomial distribution.
The binomial distribution normally takes two parameters, $n$ and $p$,
where $n$ is the number of trials and $p$ the probability of a positive outcome in each trial.
The support of the binomial are integers in $\{0, \dots, n\}$ and the mean is $np$.

For our settings, we sometimes shift the distribution by an offset, e.g., to ensure positive values.
We refer to this shifted binomial as $\mathrm{ShBin}(n, p, \mu)$ where $\mu$ is the new mean.
To sample from $\mathrm{ShBin}(n,p,\mu)$, we sample values from $\mathrm{Bin}(n, p)$ and shift them by the offset $\mu - np$.

Now we are ready to introduce all synthetic DGPs in detail.

\begin{restatable}[\textbf{Additive Noise, LinAdd}]{setting}{dgplinadd}
  \label{setting:dgplinadd}
\begin{align*}
  X_C &:= U_C &&U_C\sim \mathrm{ShBin}(8, 0.5, 0)\\
  Y &:= X_C + U_Y && U_Y \sim \mathrm{ShBin}(2, 0.5, 0)\\
  X_E &:= Y + X_C + U_E && U_E \sim \mathrm{ShBin}(2, 0.5, 0)\\
\end{align*}
\end{restatable}

\begin{restatable}[\textbf{Multiplicative Noise, LinMult}]{setting}{dgplinmult}
  \label{setting:dgplinmultp}
\begin{align*}
  X_C &:= U_C &&U_C\sim \mathrm{ShBin}(5, 0.5, 5/2 + 1)\\
  Y &:= X_C  U_Y && U_Y \sim \mathrm{ShBin}(1, 0.5, 0.5 + 1)\\
  X_E &:= Y  X_C  U_E && U_E \sim \mathrm{ShBin}(1, 0.5, 0.5 + 1)\\
\end{align*}
\end{restatable}

\begin{restatable}[\textbf{Nonlinear Additive, NlinAdd}]{setting}{dgpnlinadd}
  \label{setting:dgpnlinadd}
\begin{align*}
  X_C &:= U_C &&U_C\sim \mathrm{ShBin}(8, 0.5, 0)\\
  Y &:= (X_C)^2 + U_Y && U_Y \sim \mathrm{ShBin}(2, 0.5, 0)\\
  X_E &:= (Y + X_C)^2 + U_E && U_E \sim \mathrm{ShBin}(2, 0.5, 0)\\
\end{align*}
\end{restatable}

\begin{restatable}[\textbf{Nonlinear Multiplicative, NlinMult}]{setting}{dgpnlinmult}
  \label{setting:dgpnlinmult}
\begin{align*}
  X_C &:= U_C && U_C \sim 0.5 \mathrm{ShBin}(2, 0.5, 2) + 0.5 \mathrm{ShBin}(4, 0.5, 4)\\
  Y &:= X_C^2U_Y && U_Y \sim \mathrm{ShBin}(2, 0.5, 2)\\
  X_E &:= (Y + X_C)^2  U_E && U_E \sim \mathrm{ShBin}(2, 0.5, 2)\\
\end{align*}
\end{restatable}

\begin{restatable}[\textbf{Polynomial Noise, LinCubic}]{setting}{dgplincubic}
  \label{setting:dgplincubic}
\begin{align*}
  X_C &:= U_C &&U_C\sim \mathrm{ShBin}(4, 0.5, -1)\\
  Y &:= (X_C + U_Y)^3 && U_Y \sim \mathrm{ShBin}(2, 0.5, 1)\\
  X_E &:= (X_C + U_E - Y)^3 && U_E \sim \mathrm{ShBin}(1, 0.5, 0.5)\\
\end{align*}
\end{restatable}

For the \emph{GPA} example we obtain data from \citep{openintro_satgpa} and assume the causal graph suggested in \citep{harris2022strategic}. Then, we fit a linear Gaussian SCM, that is, for each node we fit a linear model to predict the node from its parents, and fit a normal distribution to the residual to obtain the corresponding noise distribution.
For the \emph{Credit} example we obtain data from \citet{chen2023learning}, and assume the causal graph suggested by \citep{default_of_credit_card_clients_350}. Then, we fit a random forest based additive Gaussian SCM, meaning that we fit a random forest to predict each node from its parents, and fit a normal distribution on the residuals to obtain the noise distributions.

\begin{restatable}[\textbf{College Admission, SAT}]{setting}{dgpcollege}
  \label{setting:dgpcollege}
\begin{align*}
  X_C &:= U_C &&U_C\sim \mathcal{N}(0,1)\\
  Y &:= \alpha_Y X_C + U_Y && U_Y \sim \mathcal{N}(\mu_Y, \sigma_Y)\\
  X_E &:= \alpha_E Y + U_E && U_E \sim \mathcal{N}(\mu_Y, \sigma_Y)\\
\end{align*}
\end{restatable}
\noindent In all settings, the target variable $Y$ is binarized using the median as treshold. That is,
\[ L := \ind[Y \geq \mathrm{median}(Y)].\]

\paragraph{Costs}{
  In our experiments we want to compare the different recourse methods,
  and to reveal their differences we chose the cost such that interventions on effects are more lucrative.
  Then CE and CR intervene on the effects, and ICR intervenes on the causes.
  To do so, we define the cost functions for CR and ICR as
  \[ cost(x, \Do{a}) = \sum_{j \in I_a} (x_j-\theta_j)^2 \gamma_j \]
  and for CE as
  \[ cost(x, x') = \sum_{j} (x_j - x'_j)^2 \gamma_j. \]
  The weight $\gamma_j$ ensures that interventions on effects are more lucrative. It is defined as
  \[ \gamma_j := \frac{1}{\pi_j \sigma_j^2}, \]
  where $\pi_j$ is the index of the node when they are (partially) ordered according to the causal graph, that is, causes recieve lower indices than their descendants and therfore more weight in the cost. Further, we normalize the cost using each feature's variance $\sigma_j^2$.
}

\subsection{Experiment setup}

\paragraph{Random seeds}{
 We conducted each experiment $10$ times with seeds $40, \dots, 49$. 
 In Figure \ref{fig:simulations} (left), we report averages over all runs, that is, the average minimum difference in conditional distribution, the average maximum difference in conditional distribution, and the average expected difference in conditional distribution. In Figure~\ref{fig:simulations} (right) the dots represent averages over $10$ runs and the errorbars the standard deviation $\sigma$ (the expected value $\pm \sigma$).
 In Table~\ref{tab:detailed-results} we report the mean and standard deviation for each metric.
}

\paragraph{Decision model}{
  We use a decision tree with default hyperparamters for the discrete settings and a logistic regression with default hyperparamters in the real-world setting. We use \texttt{sklearn} to fit the model.
 To ensure that the original model is as accurate as possible we sample $10^5$ fresh datapoints as an independent training set. 
}

\paragraph{Sampling from the post-recourse distribution}{
  Given the SCM and the decision model, we are ready to compute recourse recommendations.
  Therefore we sample fresh data and randomly pick $5000$ rejected observations in the simulation settings,
  and $1000$ rejected observations for the real-world setting, for which we then generate recourse recommendations.
  The detailed procedures for generating recourse are explained below.
  Given the recourse recommendation, we compute the true post-recourse outcomes for the individuals.
  Therefore, we exploit that we sampled the data ourselves, meaning that we have access to the unobserved causal influences that determine the observations.
  In other words, we are able to compute the ground-truth outcomes for the respective recommendation.
  Specifically, for each individual, we fix the unobserved causal influences to the respective values,
  and then replace the structural equations according to the interventions.
  Then, we determine the new characteristics of the applicant based on the unobserved causal influences and the structural equations.
  Based on these ground-truth samples from the post-recourse distribution of rejected applicants we assess Q1 and Q2.
  We note that sampling from the post-recourse distribution is expensive, since each post-recourse sample requires solving the recourse optimization problem.
}

\paragraph{Q1: How did the conditional distribution change?}{
To quantify the change in distribution, we compute the conditional probability of a favorable label $L=1$ both in the original distribution and in the part of the distribution that may have changed as a result of recourse. More formally, 
we compute $P(\Lp=1 \mid X^p=x, \Classifier=1)$ and $P(L=1 \mid X=x)$.
Notably, we do not take $P(\Lp=1 \mid X^p=x, \Classifier=0)$ into account, since for the subpopulation the conditional distribution must be the same pre and post-recourse, that is $P(\Lp=1 \mid X^p=x, \Classifier=0) = P(L=1 \mid X=x)$.
To make the estimates as accurate as possible, we sample many observations ($5000$) and designed the DGPs such that the number of possible values is small, and in each bucket enough samples can be found.
We aggregate the point-wise differences using the min, max, and expected value.
To ensure that buckets with larger sample size get more weight in the expected value,
and to reflect the perspective of recourse implementing individuals,
we weigh the points according to $P(X^p=x \mid \Classifier=0)$. 
}

\paragraph{Q2: Does the shift impact acceptance rates?}{
To obtain the updated model and evaluate the impact of the update on acceptance rates,
we split the sample of recourse implementing individuals in half.
The first half is used to fit the updated model, the second half to evaluate the respective acceptance rate.
To fit the updated model, the sample of recourse implementing individuals is matched with their respective pre-recourse observations as well as a sample of accepted applicants of the same size.
As a result, the updated model is fitted on one third post-recourse samples and two thirds pre-recourse samples.
To compute the new acceptance rate, we take the other half of the post-recourse samples
and compute the respective decisions with respect to the original and the updated model.
}

\subsection{Detailed results}

The detailed results are reported in Table \ref{tab:detailed-results}.

\begin{table}[p]
  \centering
  \caption{\textbf{Detailed Experiment Results.} We report all detailed results. For each outcome, we report the mean and standard deviation over the $10$ runs. The outcomes are the expected, maximum, and minimum difference in conditional distribution, as well as the difference in acceptance rate.}
  \label{tab:detailed-results}
\vspace{1em}
  \begin{tabular}{llcccccccc}
\toprule
 &  & \multicolumn{2}{c}{Exp. Dist. Diff.} & \multicolumn{2}{c}{Max. Dist. Diff.} & \multicolumn{2}{c}{Min. Dist. Diff.} & \multicolumn{2}{c}{Acc. Rate Diff.} \\
 &  & mean & std & mean & std & mean & std & mean & std \\
Method & Setting &  &  &  &  &  &  &  &  \\
\midrule
\multirow[t]{7}{*}{ind. ICR} & LAdd & $0.00$ & $\pm 0.01$ & $0.02$ & $\pm 0.05$ & $0.00$ & $\pm 0.00$ & $0.00$ & $\pm 0.00$ \\
 & LMult & $0.00$ & $\pm 0.00$ & $0.00$ & $\pm 0.00$ & $0.00$ & $\pm 0.00$ & $0.00$ & $\pm 0.00$ \\
 & NLAdd & $0.00$ & $\pm 0.00$ & $0.00$ & $\pm 0.00$ & $0.00$ & $\pm 0.00$ & $0.00$ & $\pm 0.00$ \\
 & NLMult & $0.00$ & $\pm 0.00$ & $0.00$ & $\pm 0.00$ & $0.00$ & $\pm 0.00$ & $0.00$ & $\pm 0.00$ \\
 & LCubic & $0.21$ & $\pm 0.00$ & $0.67$ & $\pm 0.00$ & $0.00$ & $\pm 0.00$ & $0.00$ & $\pm 0.00$ \\
 & GPA & $-$ & $-$ & $-$ & $-$ & $-$ & $-$ & $0.00$ & $\pm 0.00$ \\
 & Credit & $-$ & $-$ & $-$ & $-$ & $-$ & $-$ & $0.01$ & $\pm 0.01$ \\
\cline{1-10}
\multirow[t]{7}{*}{sub. ICR} & LAdd & $0.00$ & $\pm 0.00$ & $0.00$ & $\pm 0.00$ & $0.00$ & $\pm 0.00$ & $0.00$ & $\pm 0.00$ \\
 & LMult & $0.00$ & $\pm 0.00$ & $0.00$ & $\pm 0.00$ & $0.00$ & $\pm 0.00$ & $0.00$ & $\pm 0.00$ \\
 & NLAdd & $0.00$ & $\pm 0.00$ & $0.00$ & $\pm 0.00$ & $0.00$ & $\pm 0.00$ & $0.00$ & $\pm 0.00$ \\
 & NLMult & $0.00$ & $\pm 0.00$ & $0.00$ & $\pm 0.00$ & $0.00$ & $\pm 0.00$ & $0.00$ & $\pm 0.00$ \\
 & LCubic & $0.00$ & $\pm 0.00$ & $0.00$ & $\pm 0.00$ & $0.00$ & $\pm 0.00$ & $0.00$ & $\pm 0.00$ \\
 & GPA & $-$ & $-$ & $-$ & $-$ & $-$ & $-$ & $0.00$ & $\pm 0.00$ \\
 & Credit & $-$ & $-$ & $-$ & $-$ & $-$ & $-$ & $0.01$ & $\pm 0.01$ \\
\cline{1-10}
\multirow[t]{7}{*}{ind. CR} & LAdd & $-0.68$ & $\pm 0.13$ & $-0.46$ & $\pm 0.04$ & $-0.95$ & $\pm 0.08$ & $-0.93$ & $\pm 0.10$ \\
 & LMult & $-0.65$ & $\pm 0.12$ & $-0.55$ & $\pm 0.08$ & $-0.85$ & $\pm 0.18$ & $-0.85$ & $\pm 0.18$ \\
 & NLAdd & $-0.97$ & $\pm 0.03$ & $-0.72$ & $\pm 0.36$ & $-1.00$ & $\pm 0.00$ & $-0.87$ & $\pm 0.02$ \\
 & NLMult & $-0.12$ & $\pm 0.00$ & $0.00$ & $\pm 0.00$ & $-0.49$ & $\pm 0.02$ & $-0.16$ & $\pm 0.01$ \\
 & LCubic & $-0.57$ & $\pm 0.01$ & $0.00$ & $\pm 0.00$ & $-1.00$ & $\pm 0.00$ & $-0.98$ & $\pm 0.00$ \\
 & GPA & $-$ & $-$ & $-$ & $-$ & $-$ & $-$ & $-0.81$ & $\pm 0.05$ \\
 & Credit & $-$ & $-$ & $-$ & $-$ & $-$ & $-$ & $-0.49$ & $\pm 0.03$ \\
\cline{1-10}
\multirow[t]{7}{*}{sub. CR} & LAdd & $-0.47$ & $\pm 0.06$ & $-0.13$ & $\pm 0.20$ & $-0.84$ & $\pm 0.06$ & $-0.90$ & $\pm 0.08$ \\
 & LMult & $-0.61$ & $\pm 0.08$ & $-0.54$ & $\pm 0.07$ & $-0.76$ & $\pm 0.16$ & $-0.85$ & $\pm 0.18$ \\
 & NLAdd & $-0.97$ & $\pm 0.04$ & $-0.72$ & $\pm 0.36$ & $-1.00$ & $\pm 0.00$ & $-0.87$ & $\pm 0.02$ \\
 & NLMult & $-0.16$ & $\pm 0.01$ & $0.00$ & $\pm 0.00$ & $-0.47$ & $\pm 0.01$ & $-0.01$ & $\pm 0.01$ \\
 & LCubic & $-0.38$ & $\pm 0.03$ & $0.00$ & $\pm 0.00$ & $-0.84$ & $\pm 0.01$ & $-0.98$ & $\pm 0.00$ \\
 & GPA & $-$ & $-$ & $-$ & $-$ & $-$ & $-$ & $-0.79$ & $\pm 0.03$ \\
 & Credit & $-$ & $-$ & $-$ & $-$ & $-$ & $-$ & $-0.39$ & $\pm 0.04$ \\
\cline{1-10}
\multirow[t]{7}{*}{CE} & LAdd & $-0.71$ & $\pm 0.13$ & $-0.46$ & $\pm 0.04$ & $-0.97$ & $\pm 0.07$ & $-0.95$ & $\pm 0.09$ \\
 & LMult & $-0.65$ & $\pm 0.13$ & $-0.55$ & $\pm 0.08$ & $-0.82$ & $\pm 0.21$ & $-0.85$ & $\pm 0.18$ \\
 & NLAdd & $-0.97$ & $\pm 0.04$ & $-0.71$ & $\pm 0.38$ & $-1.00$ & $\pm 0.00$ & $-0.88$ & $\pm 0.01$ \\
 & NLMult & $-0.16$ & $\pm 0.01$ & $0.00$ & $\pm 0.00$ & $-0.49$ & $\pm 0.02$ & $-0.01$ & $\pm 0.01$ \\
 & LCubic & $-0.41$ & $\pm 0.01$ & $0.00$ & $\pm 0.00$ & $-0.83$ & $\pm 0.01$ & $-0.98$ & $\pm 0.00$ \\
 & GPA & $-$ & $-$ & $-$ & $-$ & $-$ & $-$ & $-0.81$ & $\pm 0.04$ \\
 & Credit & $-$ & $-$ & $-$ & $-$ & $-$ & $-$ & $-0.47$ & $\pm 0.05$ \\
\cline{1-10}
\bottomrule
\end{tabular}
 
\end{table}

Furthermore, we report model accuracies in Table \ref{tab:accuracies}. We note that the model accuracy may drop even if the model remains optimal post-recourse. The reason is that recourse commonly moves data closer to the decision boundary, where model uncertainty is higher, as pointed out in \citep{fokkema2024risks}. 

\begin{table}[p]
\centering
\caption{\textbf{Model Accuracies.} We report the absolute model accuracies of the \textbf{o}riginal model and the \textbf{r}efit on both the original data (O and R) and post-recourse data (OP and RP). All accuracies are computed on test data and averaged over ten runs. }
\label{tab:accuracies}
\vspace{1em}
\begin{tabular}{llrrrr}
\toprule
SCM & Type & O & R & OP & RP \\
\midrule
LAdd & CE & 0.91 & 0.90 & 0.51 & 0.89 \\
LAdd & sub. ICR & 0.91 & 0.91 & 0.96 & 0.96 \\
LAdd & ind. CR & 0.91 & 0.90 & 0.51 & 0.89 \\
LAdd & ind. ICR & 0.91 & 0.91 & 0.96 & 0.96 \\
LAdd & sub. CR & 0.91 & 0.90 & 0.53 & 0.89 \\
LMult & CE & 0.87 & 0.87 & 0.50 & 0.85 \\
LMult & sub. ICR & 0.87 & 0.87 & 0.94 & 0.94 \\
LMult & ind. ICR & 0.87 & 0.87 & 0.94 & 0.94 \\
LMult & ind. CR & 0.87 & 0.87 & 0.50 & 0.85 \\
LMult & sub. CR & 0.87 & 0.87 & 0.50 & 0.85 \\
NLAdd & sub. ICR & 0.91 & 0.91 & 1.00 & 0.99 \\
NLAdd & ind. ICR & 0.91 & 0.91 & 0.99 & 0.99 \\
NLAdd & sub. CR & 0.91 & 0.90 & 0.58 & 0.83 \\
NLAdd & ind. CR & 0.91 & 0.90 & 0.58 & 0.83 \\
NLAdd & CE & 0.91 & 0.89 & 0.58 & 0.83 \\
NLMult & ind. CR & 1.00 & 1.00 & 0.84 & 0.92 \\
NLMult & sub. ICR & 1.00 & 1.00 & 1.00 & 1.00 \\
NLMult & sub. CR & 1.00 & 1.00 & 0.84 & 0.83 \\
NLMult & CE & 1.00 & 1.00 & 0.84 & 0.84 \\
NLMult & ind. ICR & 1.00 & 1.00 & 1.00 & 1.00 \\
LCubic & sub. CR & 0.92 & 0.86 & 0.54 & 0.83 \\
LCubic & CE & 0.92 & 0.86 & 0.54 & 0.83 \\
LCubic & ind. CR & 0.92 & 0.86 & 0.55 & 0.83 \\
LCubic & ind. ICR & 0.92 & 0.92 & 0.95 & 0.95 \\
LCubic & sub. ICR & 0.92 & 0.92 & 0.96 & 0.96 \\
GPA & ind. CR & 0.73 & 0.70 & 0.51 & 0.68 \\
GPA & ind. ICR & 0.73 & 0.73 & 0.81 & 0.81 \\
GPA & CE & 0.73 & 0.72 & 0.51 & 0.70 \\
GPA & sub. CR & 0.73 & 0.72 & 0.51 & 0.69 \\
GPA & sub. ICR & 0.73 & 0.73 & 0.80 & 0.80 \\
Credit & CE & 0.60 & 0.58 & 0.57 & 0.60 \\
Credit & ind. CR & 0.61 & 0.59 & 0.57 & 0.59 \\
Credit & sub. CR & 0.62 & 0.59 & 0.61 & 0.62 \\
Credit & sub. ICR & 0.61 & 0.60 & 0.76 & 0.76 \\
Credit & ind. ICR & 0.61 & 0.59 & 0.76 & 0.76 \\
\bottomrule
\end{tabular}

\end{table}

\subsection{Estimation of improvement and acceptance rates}

When searching for the optimal recourse recommendation, we have to evaluate the probability of a positive outcome for an action given the observation of the individual and given the available causal knowledge.
As follows, we explain how this estimation is implemented in our code.
The probability estimates are based on samples of size $1000$.

\paragraph{Estimating individualized outcomes}{
The individualized versions of CR and ICR are based on individualized effect estimation and assume knowledge of the SCM.
To estimate individualized outcomes using a SCM, be it the acceptance or improvement probability, we conduct three steps: Abduction, Intervention, and Simulation \citep{pearl2009causality}:
Specifically, to compute a counterfactual using a SCM, we first use the observation $x$ to abduct the unobserved causal influences,
that is, we infer the posterior $P(U \mid X=x)$. Then, we implement the action in the SCM by replacing the affected structural equations.
Last, we sample from the abducted noise $P(U \mid X=x)$ and generate the new outcomes using the updated structural equations.
Based on the obtained sample from the \emph{counterfactual distribution} we estimate the probability of a favorable outcome.
For ICR, that is the probability of the favorable label $L=1$, and for CR that is the probability of the favorable prediction $\Classifier=1$.

In our experiments we focus on settings with invertible structural equations. That is, given a full observation $x, y$, we could compute the unique corresponding noise value $u$.
To perform the abduction in a setting where $Y$ is not observed, a direct computation of the noise variables is not possible.
Specifically, $U_Y$ and $U_E$ cannot be determined.
However, for each possible $u_Y$, there is exactly one possible $u_E$; more formally there exists a function $u_E = g(u_Y)$.
And one can show that as a result the abducted distribution must be proportional to $P(U_Y=u_y)P(U_E=g(u_Y))$.
To sample from this distribution, we employ rejection sampling.

\begin{lemma}[\textbf{Abduction with invertible structural equations}]
    \label{lem:abducation}
    It holds that 
    \[p(u_Y  \mid  x) = \frac{P(U_E = f_E^{-1}(x_S, x_C, f_Y(x_C, u_Y), x_E)) p(u_Y)}{p(x_E  \mid x_C, x_S)}.\]
\end{lemma}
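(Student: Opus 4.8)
The plan is to compute the posterior $p(u_Y \mid x)$ directly by Bayes' rule, using that the relevant observed coordinates are $x_C, x_S, x_E$ (the remaining coordinates of $x$ are conditionally independent of $U_Y$ given these, so they may be dropped). Writing
\[
p(u_Y \mid x_C, x_S, x_E) = \frac{p(x_E \mid x_C, x_S, u_Y)\, p(u_Y \mid x_C, x_S)}{p(x_E \mid x_C, x_S)},
\]
I would simplify the two numerator factors separately. Since $U_Y$ is an exogenous noise variable it only influences $Y$ and its descendants, and $X_C$, $X_S$ are non-descendants of $Y$; hence $U_Y \independent (X_C, X_S)$ and $p(u_Y \mid x_C, x_S) = p(u_Y)$. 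This already yields the prior term $p(u_Y)$ and the denominator $p(x_E \mid x_C, x_S)$ of the claimed formula, so the whole proof reduces to identifying the likelihood $p(x_E \mid x_C, x_S, u_Y)$ with the noise probability $P\bigl(U_E = f_E^{-1}(x_S, x_C, f_Y(x_C, u_Y), x_E)\bigr)$.

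For the likelihood I would condition on $U_E$ and invoke the structural equations. Given $x_C$ and $u_Y$, the value of $Y$ is the deterministic quantity $y := f_Y(x_C, u_Y)$; and given $(x_S, x_C, y)$, the variable $X_E$ equals the deterministic function $X_E = f_E(x_S, x_C, y, U_E)$ of $U_E$ alone (using $\PA(X_E) \subseteq \{Y\}\cup S\cup C$). Since $U_E$ is exogenous, $U_E \independent (X_C, X_S, U_Y)$, so marginalising over it gives $p(x_E \mid x_C, x_S, u_Y) = \sum_{u_E}\ind[\,x_E = f_E(x_S, x_C, y, u_E)\,]\,P(U_E = u_E)$. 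By the assumed invertibility of $f_E$ in its last argument, the equation $x_E = f_E(x_S, x_C, y, u_E)$ has the unique solution $u_E = f_E^{-1}(x_S, x_C, y, x_E)$, so the sum collapses to a single term, which is exactly the desired factor. Substituting back into the Bayes expression completes the argument; the independence claims used throughout are justified by d-separation together with the Markov property (Proposition~\ref{prop:mark_prop}).

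The step I expect to require the most care is the passage from the conditional $p(x_E \mid x_C, x_S, u_Y)$ to the noise probability $P(U_E = \cdots)$. For the discrete, finite-support noise models used for the pointwise comparisons in our experiments this is exact: $p(\cdot\mid\cdot)$ is a genuine probability mass function and invertibility makes the preimage of $x_E$ a singleton, so no Jacobian arises. For continuous noise the same reasoning applies via the change-of-variables formula, at the cost of an extra factor $\bigl|\det\,\partial_{x_E} f_E^{-1}(x_S, x_C, y, x_E)\bigr|$; in the additive-noise case this factor is identically one, and in the other parametric families it is a simple explicit function of the observed values that is either independent of $u_Y$ (and hence absorbed into the normalising constant of the rejection sampler) or cheap to track exactly. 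Everything else in the proof is routine bookkeeping with conditional independences.
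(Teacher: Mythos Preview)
Your proposal is correct and follows essentially the same route as the paper's proof: both reduce $p(u_Y\mid x)$ to $p(u_Y\mid x_C,x_S,x_E)$, apply Bayes' rule (the paper via the joint and marginalisation over $u_E$, you via the conditional Bayes form directly), use the independence of $U_Y$ from $(X_C,X_S)$ and of $U_E$ from $(X_C,X_S,U_Y)$, and then collapse the integral over $u_E$ to a single point by invertibility of $f_E$. Your discussion of the Jacobian in the continuous case is more careful than the paper's, which simply writes the integral of an indicator without comment; this is a genuine subtlety that the paper glosses over but that does not affect the discrete settings where the lemma is actually applied.
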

\begin{proof}
\begin{align*}
    p(u_Y  \mid  x) &= p(u_Y | x_C, x_S, x_E)\\
    &= \frac{p(x_C, x_S, x_E, u_Y)}{p(x_C, x_S, x_E)}\\
    &= \frac{\int_{\mathcal{U}_E} p(x_C, x_S, x_E, u_Y, u_E) \D u_E}{p(x_C, x_S, x_E)}\\
    &= \frac{\int_{\mathcal{U}_E} p(x_E | x_S, x_C, u_Y, u_E) p(x_S, x_C) p(u_Y, u_E) \D u_E}{p(x_E \mid x_C, x_S)p(x_C, x_S)}\\
    &= \frac{\int_{\mathcal{U}_E} p(x_E  \mid x_S, x_C, u_Y, u_E) p(u_Y) p(u_E) \D u_E}{p(x_E  \mid x_C, x_S)}\\
    &= \frac{\int_{\mathcal{U}_E} \ind[x_E = f_E(x_S, x_C, f_Y(x_C, u_Y), u_E)] p(u_Y) p(u_E) \D u_E}{p(x_E  \mid x_C, x_S)}\\
    &= \frac{\int_{\mathcal{U}_E} \ind[u_E = f_E^{-1}(x_S, x_C, f_Y(x_C, u_Y), x_E)] p(u_Y) p(u_E) \D u_E}{p(x_E \mid x_C, x_S)}\\
    &= \frac{P(U_E = f_E^{-1}(x_S, x_C, f_Y(x_C, u_Y), x_E)) p(u_Y)}{p(x_E  \mid x_C, x_S)}.
\end{align*}
\end{proof}
}

\paragraph{Estimating subpopulation-based outcomes}{
When no SCM is available, the causal recourse methods resort to a causal graph.
The causal graph does not allow to abduct unobserved causal influences and therefore does not allow the computation of individualized effects.
However, causal graphs allow to describe interventional distributions:
Therefore, the joint distribution is factorized into the conditional distributions of each node given its direct causal parents $P(X) = \prod_j P(X_j \mid  X_{\PA(j)})$, and the conditional distributions for the intervened upon nodes $j \in I_a$ are replaced with $P(X_j) = \ind[X_j = \theta_j]$.
To obtain the interventional distribution for the subpopulation instead of the whole population, we additionally intervene on the nondescendants $X_\Nd(I_a)$ to hold those values fixed \citep{konig2023improvement}.
In our experiments, instead of learning the conditional distributions, we obtain the ground-truth conditional distributions from the SCM:
To sample from the interventional distribution, we sample from the intervened-upon SCM.
In this sample, we again compute the probabilities of the favorable outcomes.
}

\paragraph{Translating CEs into actions}{
    While CR and ICR recommend actions in the form of an intervention $\Do{X_{I_a} = \theta_{I_a}}$ on a subset of the variables $I_a$,
  CE only suggests a new observation $x'$.
  To translate this to a causal action, we intervene on \emph{all} variables, and set them to the values specified by $x$.
  That is, $\Do{X=x'}$.
  In this setting, there is no uncertainty about the post-recourse observation and as a result the acceptance probability can directly be evaluated to zero or one.
}

\subsection{Optimization}

To solve the optimization problems imposed by each of the recourse methods,
we employ evolutionary algorithms (as proposed by \cite{dandl2020multi}) and rely on the python package \texttt{deap} \citep{DEAP_JMLR2012}.
Evolutionary algorithms can naturally deal with both continuous and categorical data.
To optimize a goal, they randomly draw suggestions (individuals).
This sample (population) is then modified and filtered over many rounds (generations).

In our setting, each individual consists of two values per variable:
A binary indicator $\alpha_j$ that represents whether the variable $X_j$ shall be changed,
and a float $\beta_j$ that represents the respective new value.

When initializing the population, we randomly draw $\alpha_j \sim \mathrm{Bin}(1, 0.5)$.
If the variable is continuous, $\beta_j$ is sampled uniformly from the range $\mathrm{Unif}(b_l, b_u)$,
where the lower and upper bound $b_l$ and $b_u$ are determined empirically from a very large sample from the distribution.
If the variable is categorical, $\beta_j$ is drawn from the uniform distribution over the empirical support.
To cross two individuals, we switch the $\alpha$ and $\beta$ values for each variable with $0.5$ probability.
To mutate an individual, we modify each entry with $0.2$ probability, where the $\alpha_j$ values are simply flipped, and the $\beta_j$ values are either resampled if they are categorical or modified using a Gaussian kernel if they are continuous.
We chose the population size $25$, the number of generations $25$, the crossing probability $0.5$, and the mutation probability $0.5$.

To select the individuals that make it to the next generation,
we rely on the following fitness function:
\[ cost(x, a) + \lambda (t_r - p^{\text{success}}(x, a))\]
where $p^{\text{success}}$ is the probability of a favorable outcome for the given recourse methods (as defined in Section~\ref{sec:not_back} and Appendix~\ref{app:ind-vs-sub}),
and $\lambda = 10^4$.

\subsection{Computational resources}

We ran the experiments on a MacBook Pro with M3 Pro Chip and a cluster with Intel Xeon Gold processors with 16 cores and 2.9GHz.
The main compute is required for sampling from the post-recourse distributions,
which takes roughly one hour on one core for one setting and one method, which amounts to a total of 30 hours per run and 300 hours for all experiments.
We note that the runs were parallelized over the 16 cores.
The post-processing of the results was done on the M3 Pro Chip and requires negligible computational effort.

\end{document}